\newtheorem{definition}{Definition}[section]
\newtheorem{proposition}{Proposition}[section]
\newtheorem{lemma}{Lemma}[section]
\newtheorem{theorem}{Theorem}[section]
\newtheorem{assumption}{Assumption}[section]
\begin{document}

\title{On the Transferability and Discriminability of Repersentation Learning in Unsupervised Domain Adaptation}

\author{Wenwen~Qiang,
        Ziyin~Gu,
        Lingyu~Si,
	Jiangmeng~Li,
	Changwen~Zheng,
	Fuchun~Sun,~\IEEEmembership{Fellow,~IEEE},
	and~Hui~Xiong,~\IEEEmembership{Fellow,~IEEE}
	\IEEEcompsocitemizethanks{\IEEEcompsocthanksitem W. Qiang, Z. Gu, L. Si, J. Li, and C. Zheng are with the National Key Laboratory of Space Integrated Information System, Institute of Software Chinese Academy of Sciences, Beijing, China. They are also with the University of Chinese Academy of Sciences, Beijing, China. E-mail: qiangwenwen, ziyin2020, lingyu, jiangmeng2019, changwen@iscas.ac.cn.
	\IEEEcompsocthanksitem F. Su is with the National Key Laboratory of Space Integrated Information System, Department of Computer Science and Technology, Tsinghua University, Beijing, China. E-mail: fcsun@tsinghua.edu.cn.
	\IEEEcompsocthanksitem H. Xiong is with the Artificial Intelligence Thrust, Information Hub, Department of Computer Science \& Engineering, School of Engineering, Hong Kong University of Science and Technology, guangzhou, China. E-mail: xionghui@ust.hk.
    \IEEEcompsocthanksitem Corresponding author: Jiangmeng Li.
}}

\markboth{Submitted to IEEE Transactions on Pattern Analysis and Machine Intelligence}%
{Qiang \MakeLowercase{\textit{et al.}}: RLGLC}

\IEEEtitleabstractindextext{%
\begin{abstract}
In this paper, we addressed the limitation of relying solely on distribution alignment and source-domain empirical risk minimization in Unsupervised Domain Adaptation (UDA). Our information-theoretic analysis showed that this standard adversarial-based framework neglects the discriminability of target-domain features, leading to suboptimal performance. To bridge this theoretical–practical gap, we defined “good representation learning” as guaranteeing both transferability and discriminability, and proved that an additional loss term targeting target-domain discriminability is necessary. Building on these insights, we proposed a novel adversarial-based UDA framework that explicitly integrates a domain alignment objective with a discriminability-enhancing constraint. Instantiated as Domain-Invariant Representation Learning with Global and Local Consistency (RLGLC), our method leverages Asymmetrically-Relaxed Wasserstein of Wasserstein Distance (AR-WWD) to address class imbalance and semantic dimension weighting, and employs a local consistency mechanism to preserve fine-grained target-domain discriminative information. Extensive experiments across multiple benchmark datasets demonstrate that RLGLC consistently surpasses state-of-the-art methods, confirming the value of our theoretical perspective and underscoring the necessity of enforcing both transferability and discriminability in adversarial-based UDA. 
\end{abstract}

\begin{IEEEkeywords}
	Unsupervised Domain Adaptation, Representation Learning, Information Theory, Transferability, Discriminability.
\end{IEEEkeywords}
}

\maketitle

\IEEEdisplaynontitleabstractindextext

\IEEEpeerreviewmaketitle

\IEEEraisesectionheading{\section{Introduction}\label{submission}}

Despite their impressive successes, standard machine learning models typically require that the training and test data share an identical distribution. In practice, however, differences in data collection procedures and limited availability of labeled samples can introduce covariate shift. Under these conditions, a model trained on source domain labeled data may fail to generalize to a target domain exhibiting a distinct distribution and lacking labels. To overcome this challenge, researchers have developed unsupervised domain adaptation (UDA) techniques. UDA focuses on learning models that can accommodate distributional disparities between the source and target domains, thereby improving generalization and performance in real-world applications.

Adversarial-based representation learning methods for UDA have made significant strides in both algorithmic development and theoretical foundations \cite{ganin2016domain, zhang2020unsupervised, chang2022unified, sun2024transvqa, gao2024learning}. Methodologically, these approaches concentrate on aligning latent feature distributions across source and target domains by employing divergence metrics such as the Wasserstein distance \cite{shen2018wasserstein, arjovsky2017wasserstein} or the Kullback–Leibler (KL) divergence \cite{ganin2016domain, zhang2019bridging, zhang2020unsupervised}. This alignment ensures that target domain features resemble those in the source domain, thereby allowing a classifier trained in the source domain to be effectively reused for target domain predictions. From a theoretical standpoint, these methods aim to enhance the approximation of the expected risk over the target domain. Typically, they establish an upper bound on this expected risk, which includes a term capturing the distribution discrepancy across domains and an empirical risk term defined over the source domain. This framework explains the success of adversarial-based representation learning: by concurrently minimizing both the discrepancy term and the empirical risk term, the upper bound of the expected risk on the target domain is tightened, leading to a lower actual risk in that domain.

According to recent research \cite{wang2024probability, bai2024prompt, sun2024transvqa, gu2022unsupervised, gao2024learning, hupseudo, wei2024class, zhu2024versatile, jiahua2022and, kang2019contrastive, qiang2021robust, qiang2021auxiliary}, we have observed that adding a discriminability-enhancing loss term to adversarial-based UDA objectives can markedly improve performance. For example, one might generate pseudo-labels for target-domain data using a specific heuristic, then introduce a cross-entropy loss for these samples in the target domain. However, existing theoretical frameworks do not fully explain this improvement. To bridge this theoretical–practical gap, we propose revisiting adversarial-based representation learning for UDA from the perspective of “good representation learning”, and provide an information-theoretic account of why this gap arises. Concretely, we leverage conditional mutual information to define “good representation learning” as simultaneously ensuring transferability and discriminability (see \textbf{Definition \ref{d1}} and \textbf{Definition \ref{d3}}). We then prove that simply minimizing domain-discrepancy measures and the empirical risk in the source domain guarantees both properties for source-domain features, but only transferability for target-domain features (see \textbf{Theorem \ref{t1}}). We further offer an intuitive rationale in the final paragraph of \textbf{Section \ref{098}}, illustrating that reducing distributional discrepancy alone merely ensures partial preservation of discriminative information. An additional constraint is thus needed to secure the discriminability of target-domain features. While the empirical risk term addresses this need in the source domain, it is absent for target-domain data. Consequently, these findings elucidate why including an extra loss term dedicated to enhancing target-domain feature discriminability within the UDA objective function is both logical and effective.

Motivated by the above discussion and guided by information theory, this paper presents a new adversarial-based representation learning framework for UDA, detailed in Equation (\ref{Eq:bbbd}). In this framework, the first and second terms constrain the transferability of feature representations from different domains, while the third and fourth terms address their discriminability. A fifth term regularizes the model parameters. Building on the proof of \textbf{Theorem \ref{t1}}, we show that the first and second terms in Equation (\ref{Eq:bbbd}) can be instantiated as a discrepancy measure between feature distributions in different domains, and the third term can be realized as an empirical risk term on the source domain data distribution, leading to Equation (\ref{Eq:udbsasdasf}). Furthermore, \textbf{Theorem \ref{qw}} provides theoretical support for ensuring that feature representations in both the source and target domains exhibit transferability and discriminability under this new UDA framework. Based on this framework, we propose a novel UDA learning method called Domain-Invariant Representation Learning with Global and Local Consistency (RLGLC). Compared with existing adversarial-based representation learning methods for UDA, the novelty of RLGLC lies in two aspects. First, we propose Asymmetrically-Relaxed Wasserstein of Wasserstein Distance (AR-WWD), corresponding to the Global Consistency Module (GCM), as a new approach for measuring distributional discrepancies. Second, we present a technique that effectively implements the fourth term in Equation (\ref{Eq:bbbd}), represented by the Local Consistency Module (LCM).

Specifically, the GCM is motivated by two key observations: 1) Existing Wasserstein distances are zero only when two distributions are perfectly aligned or identical. However, during training, sampling biases can introduce class imbalance across domains. For example, in a binary classification task, the source domain might feature a 5:5 ratio of positive to negative samples, while the target domain might have a 3:7 ratio. Rigidly aligning these distributions could shift some negative samples in the target domain into the positive sample cluster, causing classification errors that stem from forced distribution alignment rather than from any inherent limitation of the classifier; 2) From Equation (\ref{Eq:Wa}), the cost function \(c(\cdot)\) is typically the \(L_2\) norm, which measures distances between feature vectors. However, the \(L_2\) norm is insensitive to shifts in vector dimensions. For instance, given sample features \(X_1=(1,0,0,\ldots,0)\), \(X_2=(0,1,0,\ldots,0)\), and \(X_3=(0,0,1,\ldots,0)\), the \(L_2\) distance between \(X_1\) and \(X_2\) matches that between \(X_1\) and \(X_3\). Yet each feature dimension typically encodes distinct semantic information. From a semantic salience viewpoint, when certain dimensions carry greater importance, their differences should be reflected accordingly. The \(L_2\) norm, however, only captures numerical differences and disregards the semantic relevance of each dimension in the overall similarity measure. To mitigate these challenges, GCM first introduces a constraint making the distance between distributions zero if the target distribution is equal to or contained within the source distribution. This condition helps avert classification errors caused by strict distribution alignment when class ratios differ between domains. Furthermore, GCM implements \(c(\cdot)\) as a Wasserstein distance, treating the multiple dimensions of a sample’s feature representation as a distribution. This approach enables learning a joint distribution that can flexibly adjust the relative weights of different dimensions. Meanwhile, the LCM takes inspiration from Noise-Contrastive Estimation \cite{oord2018representation, chen2020simple, chen2020intriguing}. We theoretically demonstrate the effectiveness and convergence of this proposed approach in \textbf{Proposition \ref{gkhomcaomsg}}. 

Finally, by simultaneously enhancing transferability and target discriminability at the level of "learning a good representation", RLGLC offers a theoretically grounded solution supported by mutual information analysis. Theoretically, we connect our approach to the reduction of Bayes error rate in the target domain, ensuring that the learned target representations remain both domain-invariant and label-relevant. Empirical evaluations on multiple benchmark datasets confirm that RLGLC outperforms state-of-the-art methods, demonstrating the power of focusing on the capability of the feature extractor to discover both globally transferable and locally discriminative structures in the target domain. The major contributions of this paper are threefold:
\begin{itemize}
\item This paper reconsiders adversarial-based UDA through an information-theoretic lens, introducing “good representation learning” to unify domain transferability and label discriminability. By employing conditional mutual information, we show that existing UDA methods, e.g., focused solely on aligning feature distributions of different domains and minimizing source-domain errors, fail to guarantee discriminative features in the target domain. This theoretical gap clarifies why augmenting UDA objectives with a target-focused loss substantially improves practical performance, thus bridging the discrepancy between empirical observations and formal theory.
\item Building on this insight, this paper proposes a new adversarial-based representation learning framework that explicitly addresses cross-domain alignment and target-label relevance. The theoretical results (\textbf{Theorem \ref{qw}} and \textbf{Theorem \ref{gvbnjiout}}) demonstrate that this framework not only reduces the domain gap but also ensures discriminative features for both source and target domains, thereby explaining and overcoming the theoretical–practical gap in existing adversarial UDA methods.
\item Under this unified framework, this paper propose the RLGLC. It combines an AR-WWD for flexible global alignment (accounting for class imbalance and semantic dimension weighting) with a theoretically-guaranteed Local Consistency Module to preserve fine-grained target discriminability. Experimental results confirm the superior performance of RLGLC, validating its ability to jointly promote transferability and discriminability in UDA settings.
\end{itemize}

\section{Related works}
Unsupervised domain adaptation aims to transfer knowledge learned from a labeled source domain to a related unlabeled target domain \cite{kumar2020understanding, dhouib2020margin, balaji2020robust, cui2020gradually, combes2020domain, cui2020heuristic, hu2020unsupervised, kang2020pixel, tang2020unsupervised}. Remarkable advances have been achieved in UDA, especially these representation learning-based methods. The main idea behind these methods is to align the distributions of the source domain and target domain. Therefore, many works are proposed to design effective metrics to measure the differences between distributions. Maximum mean discrepancy \cite{gretton2012kernel, tzeng2014deep} is a nonparametric metric that measures the divergence of two distributions in the reproducing kernel Hilbert space. Deep correlation alignment \cite{sun2016return} aligns two distributions by minimizing the difference in the second-order statistics of the two distributions. Domain Adversarial Neural Network \cite{ganin2016domain} and S-disc \cite{kuroki2019unsupervised} align distributions by minimizing KL-divergence. Wasserstein distance guided representation learning \cite{shen2018wasserstein} introduces the Wasserstein distance for domain adaptation to make the training process stable. Sliced Wasserstein discrepancy \cite{lee2019sliced} proposes to utilize the sliced Wasserstein distance to accelerate the training process. Margin disparity discrepancy \cite{zhang2019bridging} aligns distributions based on the scoring function and margin loss. Domain-Symmetric Networks \cite{zhang2020unsupervised} proposes a multi-class scoring disagreement divergence. An asymmetrically-relaxed distribution alignment is proposed in \cite{wu2019domain}. Reliable weighted optimal transport \cite{xu2020reliable} proposes a novel shrinking subspace reliability and weighted optimal transport strategy for UDA. In \cite{li2020enhanced}, an enhanced transport distance for UDA is proposed. Probability-Polarized OT \cite{wang2024probability} proposes to guide the optimization direction of OT plan by characterizing the structure of OT plan explicitly. Prompt-based Distribution Alignment \cite{bai2024prompt} introduces distribution
alignment into prompt tuning and propose to use a two-branch
training paradigm to align the distribution of two domains. TransVQA \cite{sun2024transvqa} proposes a two-step alignment method to align the extracted cross-domain features and solve the domain shift problem. Different from these methods that mainly focus on learning transferable representations, this paper analyzes the UDA problem from an information theory perspective and aims to learn feature representations that are with both discriminability and transferability.

Other approaches also focus on designing an addition term to improve the discriminability of the feature representation of the target domain samples. BSP \cite{xyc19} penalizes the largest singular values so
that other eigenvectors can be relatively strengthened to boost
the feature discriminability of the samples of the target domain. Gradient Harmonization \cite{huang2024gradient} propose to alter the gradient angle between different tasks from an obtuse angle to an acute angle, thus resolving the conflict and increasing the feature discriminability of the target domain samples. AT-MCAN and RLPGA \cite{qiang2021robust, qiang2021auxiliary} propose to constrain the information entropy of the prediction results of individual samples as well as the information entropy of the prediction results of the entire dataset to improve the discriminative nature of the feature representation of the data in a target domain. Meanwhile, generating pseudo labels for the samples of target domain is an effective way to enhancing the feature discriminability \cite{wang2024probability, bai2024prompt, sun2024transvqa, gu2022unsupervised, gao2024learning, wei2024class, zhu2024versatile, jiahua2022and, kang2019contrastive}. In general, the discriminative nature of the target domain feature representation can be further improved by a cross-entropy loss term when the target domain samples are given a false label through a mechanism, e.g., PseudoCal \cite{hupseudo} approach UDA calibration as a target-domain-specific unsupervised problem and propose to use the inference-stage mixup to synthesize a labeled pseudo-target set for the real unlabeled target data. Cycle Self-Training \cite{liu2021cycle} proposes a principled self-training algorithm that explicitly enforces pseudo-labels to generalize across domains. Different from these methods, this paper is not concerned with how to design a good target-domain oriented loss function to improve the discriminative properties of the sample feature representations in the target domain, but rather with a theoretical analysis to answer the question of why the addition of a loss term that can improve the discriminative properties of the sample feature representations in the target domain is necessary.

On par with the domain adaptation algorithms, there are rich advances in the domain adaptation theoretical findings. In \cite{mansour2009domain, ben2010theory, zhu2024versatile}, a rigorous classification error bound based on the source domain classification error and the divergence between the source and target domains is proposed for UDA. Then, a series of theories have been proposed to extend this theory to different cases, e.g., multi-class classification setting, regression setting, conditional covariate shifts setting, label shifts setting, etc. \cite{mohri2012new, germain2013pac, cortes2015adaptation}. Then, based on reproducing kernel Hilbert space, \cite{redko2017theoretical} proposes to bound the target error by the Wasserstein distance. \cite{shen2018wasserstein} proposes to use the Kantorovich-Rubinstein dual formulation of the Wasserstein distance to obtain a generalization bound for UDA. In \cite{wu2019domain}, a good target domain performance is demonstrated theoretically under the setting of relaxed alignment for the label shift problem. In \cite{zhang2019bridging}, based on Rademacher complexity, a margin-aware generalization bound is provided to bridge the gaps between the theories and algorithms for multi-class classification problem in UDA. \cite{zhang2020unsupervised, gu2022unsupervised} extends the generalization bound provided in \cite{zhang2019bridging} and can better explain the effectiveness of UDA related to multiple classes. In \cite{zhao2019learning}, both the upper and lower bounds of the target classification error are provided. In \cite{dhouib2020margin}, based on the large margin separation, a new theoretical analysis is provided to uniform the margin, adversarial learning, and domain adaptation. In \cite{kumar2020understanding, gu2022unsupervised}, the self-train is proved to be effective for larger distribution shifts. Different from these theoretical findings, this paper is motivated by the information theory and the Bayes error rate.

\section{Preliminaries}
\label{sec:1}
In this section, we first introduce the problem definition of the unsupervised domain adaptation (UDA) and provide relevant symbolic notations. Next, we present a UDA learning framework based on representation learning and introduce the definition of the Wasserstein distance associated with it.

\subsection{Problem definition and notations} \label{sub:121}

This paper focuses on the classification task in Unsupervised Domain Adaptation (UDA) under the {\em covariate shift} assumption. Let $\mathcal{X}$ denote the input sample space and $\mathcal{Y}$ the label space. Let $\eta: \mathcal{X} \to \mathcal{Y}$ be the domain-invariant ground truth labeling function, where $X \in \mathcal{X}$ is a sample and $Y \in \mathcal{Y}$ is its corresponding label. Let $P_s(X)$ and $P_t(X)$ represent the input distributions of the source and target domains, respectively. We assume that $X_s \sim P_s(X)$ and $X_t \sim P_t(X)$, where $X_s$ and $X_t$ are random variables from the source and target domains. In UDA, the source domain contains labeled data, while the target domain has only unlabeled data, but both domains share the same label space. Under the covariate shift assumption, two conditions hold: (1) the marginal data distributions differ between the source and target domains, i.e., $P_s(X) \ne P_t(X)$; (2) the conditional distributions of labels given inputs are identical across domains, i.e., $P(Y_s \mid X_s) = P(Y_t \mid X_t)$. This assumption allows leveraging labeled source data to improve performance on unlabeled target data by accounting for the distribution shift in the input space.

Let $\mathcal {Z}$ be a latent space and $\Phi: X \to Z$ be a class of feature extractors, where $Z \in \mathcal {Z}$. For a domain $u \in \left\{ {s,t} \right\}$, $P_u^\varphi \left( Z_u \right) = P_u\left( {{\varphi ^{ - 1}}\left( Z_u \right)} \right)$ represents the $\varphi$-induced probability distribution over $\mathcal {Z}$, where $\varphi  \in \Phi $. For a random variable $Z_u$ of space $\mathcal {Z}$, let ${P }\left( { X_u \left| Z_u \right.} \right)$ be the induced conditional distribution over $ \mathcal {X}$ that satisfies $\int {P_u^\varphi \left( Z_u \right){P }\left( {X_u\left| Z_u \right.} \right)} d\left( Z_u \right) = {P}\left( X_u \right)$. Denote $\Psi :Z \to Y$ as a class of prediction functions. Then, the model learned by UDA can be represented as $\psi \left( {\varphi \left( \cdot \right)} \right)$, where $\psi  \in \Psi $. Given the labeled data of the source domain and the unlabeled data of the target domain, UDA assumes that the label space for source domain and target domain is the same. Then, the goal of UDA is to learn a model that can approximate $\eta$ better, thus can minimize the following expected risk in the target domain:
\begin{equation}
\label{Eq:risk}
{R_{{P_t}}}\left( X_t \right) = \int {{P_t}\left( X_t \right){\mathcal{L}}\left( {\eta \left( X_t \right),\psi \left( {\varphi \left( X_t \right)} \right)} \right)} d\left( X_t \right),
\end{equation}
where ${R_{{P_t}}}\left( X_t \right)$ denotes the expected target risk in the target domain and ${\mathcal{L}}\left( \cdot \right)$ is the loss function, e.g., 0-1 loss and quadratic loss. With a bit of symbol abuse, we denote the support set of $P_u \left( X \right)$ as $X^u$ and the support set of $P_u^\varphi \left( Z \right)$ as $Z^u$. We denote a sample in $X^u$ as $X_u$ and a sample in $Z^u$ as $Z_u$. Also, we denote the shared label distribution as $P(Y)$ and the corresponding random variable as $Y$.

\subsection{Representation learning framework for UDA}
The main idea of the representation learning based UDA is to align the feature distributions of different domains in the feature space, so that a classifier trained on the source domain can be applicable to the target domain. Generally, the learning framework of representation learning based UDA methods can be divided into three parts, including a metric to measure the difference between two distributions in the latent space $\mathcal {Z}$, a classification loss to measure the empirical risk in the source domain, and a regularization term. The whole objective is formulated as follows:
\begin{equation}
\label{asd}
\begin{array}{l}
\mathop {\min }\limits_{\varphi ,\psi } D( {P_s^\varphi ( Z_s ),P_t^\varphi ( Z_t )} )\\
\quad\quad\quad\quad + {{\mathcal{L}}_{cl}}( {\psi ( {\varphi ( {{X_s}} )} ),\eta ( {{X_s}} )} ) + \Delta (\varphi ,\psi ),
\end{array}
\end{equation}
where $D( {P_s^\varphi ( {Z_s} ) ,P_t^\varphi ( {Z_t} ) } )$ is a discrepancy metric for the source and target domain distributions in the latent space, ${\mathcal{L}}_{cl}\left( \cdot \right)$ is total classification loss over source domain, and $\Delta$ is the regularization term for the feature extractor $\varphi$ or the classifier $\psi$ or both. Currently, the commonly used methods for measuring the discrepancy between probability distributions include the KL-divergence and the Wasserstein distance. According to \cite{arjovsky2017wasserstein}, the Wasserstein distance exhibits better continuity properties than the KL divergence. Specifically, when the supports of two distributions are disjoint, the Wasserstein distance continues to vary with changes in the distributions, whereas the KL divergence becomes a constant value (often infinite) and fails to reflect the actual closeness between the distributions. Therefore, this paper considers adopting the Wasserstein distance as the discrepancy metric, and the $p$-th Wasserstein distance is defined as:
\begin{equation}
\label{Eq:Wa}
\begin{array}{l}
{W_p}( {P_s^\varphi ( Z_s ),P_t^\varphi ( Z_t )} )\\
\quad\quad\quad= {( {\mathop {\inf }\limits_{\mu ( {{Z_s},{Z_t}} ) \in \Pi ( {{Z_s},{Z_t}} )} \int {c{{( {{Z_s},{Z_t}} )}^p}d\mu } } )^{\frac{1}{p}}},
\end{array}
\end{equation}
where $Z_s$ and $Z_t$ are two random variables in the latent feature space $\mathcal{Z}$, with ${Z_s} \sim P_s^\varphi(Z_s)$ and ${Z_t} \sim P_t^\varphi(Z_t)$. The function $c\left( Z_s, Z_t \right)$ denotes the ground metric measuring the distance between samples from the source and target domains. The set $\Pi\left( Z_s, Z_t \right)$ consists of all joint distributions $\mu\left( Z_s, Z_t \right)$ that satisfy the marginal constraints: ${P_s^\varphi}(Z_s) = \int_{{Z_t}} {\mu\left( {{Z_s},{Z_t}} \right)} d {{Z_t}} ,{P_t^\varphi}(Z_t) = \int_{{Z_s}} {\mu\left( {{Z_s},{Z_t}} \right)} d {{Z_s}}$.

\begin{figure*}[ht]
	\centering
	\includegraphics[width=0.9\textwidth]{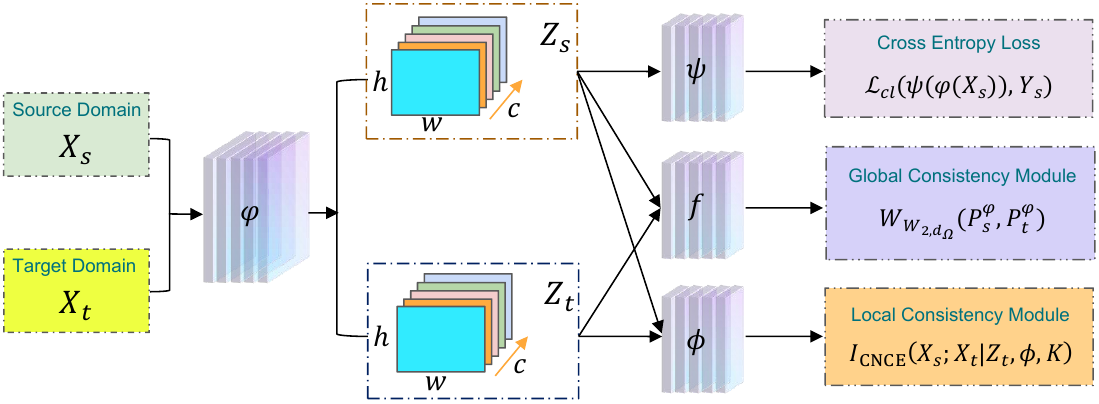}
	\caption{The outline of the proposed RLGLC. RLGLC is an adversarial-based representation learning method for UDA. First, we feed the source-domain data \(X_s\) and target-domain data \(X_t\) from the original input space into the feature extractor \(\varphi\), resulting in feature representations \(Z_s\) and \(Z_t\). Next, we pass \(Z_s\) into the classifier \(\psi\) to compute the cross-entropy loss on the source-domain samples. We then input both \(Z_s\) and \(Z_t\) into the network \(f\) to calculate the proposed AR-WWD metric, \({W_{1,{W_{2,{d_\Omega }}}}}(P_s^\varphi, P_t^\varphi)\). Finally, we feed \(Z_s\) and \(Z_t\) into the network \(\phi\) to compute the proposed conditional mutual information \(I_{\text{CNCE}}(X_s; X_t \mid Z_t, \phi, K)\). For optimization, we begin by fixing \(\varphi\) and \(\psi\). We maximize \({W_{1,{W_{2,{d_\Omega }}}}}(P_s^\varphi, P_t^\varphi)\) and \(I_{\text{CNCE}}(X_s; X_t \mid Z_t, \phi, K)\) to update \(f\) and \(\phi\), respectively. Next, we fix \(f\) and \(\phi\) and minimize Equation (\ref{Eq:bbb}) to update \(\varphi\) and \(\psi\). This two-step procedure ensures proper adversarial interplay between the feature extractor, the classifier, and the networks \(f\) and \(\phi\).}
	\label{1234}
\end{figure*}

\section{Methodology}

In this section, we first provide an analysis of the existing representation learning-based UDA framework presented by \Cref{asd} from the perspective of information theory. We then derive the proposed method, called domain-invariant representation learning with global and local consistency (RLGLC) by instantiating the proposed framework with a new discrepancy and a new self-supervised loss to improve the target-aware mutual information.

\subsection{Information-theoretical based analysis}
\label{098}

From the perspective of the information bottleneck principle \cite{tishby2000information}, a good feature representation should not only contain as little task-independent information as possible, but also contain as much task-related information as possible. For UDA \cite{xyc19}, a good feature representation should not only be discriminative for the downstream tasks, but also be transferable between different domains. To bridge the conceptual gap between these two perspectives, we propose to evaluate the representations learned by UDA from two aspects, including discriminability and transferability, and quantify them with information theory. We start from the following assumption and definitions, where $I(X; Y)$ is the mutual information between two random variables $X$ and $Y$ and $I(X; Y\left| Z \right.)$ is the conditional mutual information between $X$ and $Y$ given random variable $Z$.

\begin{assumption}
	\label{a1}
	Assume that each domain contains all the information relevant to the task, e.g., $I( {X_{s \cup t}}; Y ) = I( {{X_s}; Y} ) = I( {{X_t}; Y} )=H(Y)$, where ${X_{s \cup t}}$ can be regarded as a new variable composed of $X_s$ and $X_t$ which satisfies $H( {X_{s \cup t}} ) = H( {{X_s}} ) + H( {{X_t}} ) - I( {{X_s};{X_t}} )$.
\end{assumption}

\begin{definition}
	\label{d1}
	Discriminability: The amount of discriminative information contained in the representation $Z_u$ of $X_u$ can be defined as $I\left(X_s;X_t\left| Z_u \right. \right)$, $u \in \left\{ {s,t} \right\}$. The smaller $I\left(X_s;X_t\left| Z_u \right. \right)$ is, the more discriminative information $Z_u$ contains.
\end{definition}

\begin{definition}
	\label{d3}
	Transferability: The amount of transferable information contained in the representation $Z_u$ of $X_u$ and the representation $Z_t$ of $X_t$ can be defined as $I\left(X_t; Z_t\left| X_s \right. \right)$ and $I\left(X_s; Z_s\left| X_t \right. \right)$, respectively. The smaller $I\left(X_t; Z_t\left| X_s \right. \right)$ and $I\left(X_s; Z_s\left| X_t \right. \right)$ is, the more transferable information $Z_s$ and $Z_t$ contains, respectively.
\end{definition}

From the above assumptions and definitions, it is evident that Discriminability and Transferability are measured from an information-theoretic perspective, emphasizing properties of distributions rather than individual samples. According to the defined metrics, without considering the feature distribution, evaluating whether the feature representation of a single sample independently exhibits Discriminability or Transferability is both theoretically flawed and practically meaningless. Therefore, our definitions underscore that these properties are assessed based on the information embedded in the sample feature distribution within the feature space, reflecting the capacity of a feature extractor to effectively capture discriminative and transferable information.

Under the covariate shift assumption, ${P_s}\left( X \right) \ne {P_t}\left( X \right)$ and $P_s\left( {Y\left| {{X}} \right.} \right) = P_t\left( {Y\left| {{X}} \right.} \right)$ hold. Then, we can safely obtain that the two domains contain the same predictive information, hence we can obtain that the shared information $I\left( {{X_s},{X_t}} \right)$ between two domains contain all the task-related information \cite{sridharan2008information, xu2013survey, tsai2020self}. Let us take a cancer diagnosis as an example. Cancer is generally divided into three stages, including early, intermediate, and advanced stages. Suppose the downstream task is to diagnose the stage of the cancer patient's disease. For every cancer patients, we can get an X-ray image and an MRI image. Then, all the MRI images of cancer patients are regarded as the source domain, and their distribution is denoted as $P_s(X)$, and all the X-ray images of cancer patients are regarded as the target domain, and their distribution is denoted as $P_t(X)$. Then, $I\left( {{X_s},{X_t}} \right)$ can be interpreted as the shared information related to cancer diagnosis in two domains. In this scenario, \textbf{Assumption \ref{a1}} states that the mutual information between data and labels in each domain less than or equal to the mutual information among data of both domains. Also, from a generative point of view, the following Markov chain holds: ${Z_t} \leftarrow {X_t} \leftrightarrow Y \leftrightarrow {X_s} \to {Z_s}$ (see Appendix \textbf{Lemma \ref{l1}}). Applying Data Processing Inequality \cite{cover2012elements} to the above Markov chain, we have: $I\left( {{X_s};{X_t}} \right) \ge I\left( {{X_s};Y} \right) \ge I\left( {{Z_s};Y} \right)$ and $I\left( {{X_s};{X_t}} \right) \ge I\left( {{X_t};Y} \right) \ge I\left( {{Z_t};Y} \right)$.

From \textbf{Definition \ref{d1}}, a representation $Z_u$ with full discriminability (i.e., $I\left(X_s;X_t\left| Z_u \right. \right) = 0$) satisfies $I\left(X_u; Y\right) =  I\left(Z_u; Y\right)$ (see Appendix \textbf{Proposition \ref{p1}}), which means that a discriminative representation $Z_u$ can predict $Y$ as accurately as the original data $X_u$. Based on the chain rule of mutual information, we can obtain that $I\left( {{X_s};{Z_s}} \right) = I\left( {{X_s};{Z_s}\left| {{X_t}} \right.} \right) + I\left( {X_s; {X_t};{Z_s}} \right)$ and $I\left( {{X_t};{Z_t}} \right) = I\left( {{X_t};{Z_t}\left| {{X_s}} \right.} \right) + I\left( {X_s;{X_t};{Z_t}} \right)$, where $I\left( {{X_s};{Z_s}\left| {{X_t}} \right.} \right)$ represents the domain-dependent information in $Z_s$ that is not shared with the domain $X_t$, and $I\left( {{X_t};{Z_t}\left| {{X_s}} \right.} \right)$ represents the domain-dependent information in $Z_t$ that is not shared with domain $X_s$. Therefore, from \textbf{Definition \ref{d3}}, if a representation $Z_u$ is fully transferable (i.e., $I\left(X_t; Z_t\left| X_s \right. \right) = I\left(X_s; Z_s\left| X_t \right. \right) = 0$), we can obtain that $I\left( {{X_s};{Z_s}} \right) = I\left( {{X_s};{X_t};{Z_t}} \right)$ and $I\left( {{X_t};{Z_t}} \right) = I\left( {{X_s};{X_t};{Z_s}} \right)$. Then based on \textbf{Defination \ref{d1}}, maximizing $I\left( {{X_s};{X_t};{Z_t}} \right)$ and $I\left( {{X_s};{X_t};{Z_s}} \right)$ can make the learned representation contain as much predictive information as possible. At the extreme, $X_s$ and $X_t$ share only label information, in which case our method is equivalent to the supervised information bottleneck method without needing to access the labels (see Appendix \textbf{Proposition \ref{p3}}).

We now provide intuitive explanations about the discriminability and transferability. From the decoupling point of view, we suppose that $X_u$ can be broken down into two parts, including the label-related part $X_Y$ (the information related to cancer diagnosis in two domains) and the label-unrelated part $X^{Uu}_Y$ (background of X-ray or MRI images), where $u \in \left\{ {s,t} \right\}$. Also, we assume that $X_Y$ and $X^{Uu}_Y$ are independent of each other. Because UDA assumes that two domains share the same label space, so, we assume the label-related part $X_Y$ between two domains is the same. From the perspective of the information theory, we can safely obtain that $H\left( {{X_u}} \right) = H\left( {{X_Y}} \right) + H\left( {X_Y^{Uu}} \right)$. Also, we assume that $X^{Uu}_Y$ is only related to domain $u$. Then, we have $I\left( {{X_s};{X_t}} \right) = H\left( {{X_Y}} \right)$. Because that ${Z_u} = \varphi \left( {{X_u}} \right)$, we have $H\left( {{Z_u}} \right) \le H\left( {{X_u}} \right)$. Note that a discriminant representation should retain as much of the label-related part $X_Y$ as possible. Also, from the decoupling point of view, $H\left( {{Z_u}} \right)$ contains a part of $H\left( {{X_Y}} \right)$ and a part of $H\left( {X_Y^{Uu}} \right)$. We denote the part related to $H\left( {{X_Y}} \right)$ in $H\left( {{Z_u}} \right)$ as $H\left( {X_u^Y} \right)$. Then, we have $H\left( {X_u^Y} \right) = I\left( {{X_s};{X_t}} \right) - I\left( {{X_s};{X_t}\left| {{Z_u}} \right.} \right) = H\left( {{X_Y}} \right) - I\left( {{X_s};{X_t}\left| {{Z_u}} \right.} \right)$. Therefore, \textbf{Definition \ref{d1}} is reasonable. Note that the label-unrelated part is domain-specific, 
and thus can not be transferred from one domain to another. So, we assume that a transferable representation should contain as little of the label-unrelated part $X^{Uu}_Y$ as possible. We denote the part related to $H\left( {X_Y^{Uu}} \right)$ in $H\left( {{Z_u}} \right)$ as $H\left( {{}^uX_Y^{Uu}} \right)$. Then, we have $H\left( {{}^sX_Y^{Us}} \right) = I\left( {{X_s};{Z_s}\left| {{X_t}} \right.} \right)$ and $H\left( {{}^tX_Y^{Ut}} \right) = I\left( {{X_t};{Z_t}\left| {{X_s}} \right.} \right)$. Therefore, \textbf{Definition \ref{d3}} is reasonable.

In Equation (\ref{asd}), the first term aims to align the distributions of the source and target domains in the latent space $\mathcal{Z}$, the second term aims to associate the learned feature representation with the corresponding label in the source domain. Then, we have:

\begin{theorem}
	\label{t1}
	Suppose the representations $Z_s$ and $Z_t$ for the source domain and the target domain are obtained by minimizing the objective function (\ref{asd}). Then, the discriminability and transferability of $Z_s$ are increased, while only the transferability of $Z_t$ is improved.
\end{theorem}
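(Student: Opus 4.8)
The plan is to read the two monotone claims of \textbf{Definition \ref{d1}} and \textbf{Definition \ref{d3}} through a handful of mutual-information identities and then attribute each resulting quantity to one of the three terms of (\ref{asd}). The starting point is the Markov chain ${Z_s} \leftarrow {X_s} \leftrightarrow Y \leftrightarrow {X_t} \to {Z_t}$ from Appendix \textbf{Lemma \ref{l1}} together with the fact that each $Z_u = \varphi(X_u)$ is a deterministic image, so $H(Z_u \mid X_u)=0$ and $I(Z_u; X_v \mid X_u)=0$ for $v\neq u$. First I would establish the bookkeeping identity $I(Z_s; X_t) = I(Z_s; Y)$: the chain gives $Z_s \perp X_t \mid Y$, hence $I(Z_s; X_t)\le I(Z_s; Y)$ by the data-processing inequality, while $Y=\eta(X_t)$ being a function of $X_t$ gives $I(Z_s; X_t)\ge I(Z_s; Y)$ by the same inequality; symmetrically $I(Z_t; X_s)=I(Z_t; Y)$. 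Feeding these into the chain rule of mutual information yields, for $u\in\{s,t\}$ and $v\neq u$,
\[
I(X_s; X_t \mid Z_u) = I(X_s; X_t) - I(Z_u; Y), \qquad I(X_u; Z_u \mid X_v) = H(Z_u) - I(Z_u; Y) = H(Z_u \mid Y),
\]
so that \textbf{Definition \ref{d1}} reduces to ``discriminability of $Z_u$ is governed by $I(Z_u;Y)$'' and \textbf{Definition \ref{d3}} reduces to ``transferability of $Z_u$ is governed by $H(Z_u\mid Y)$''.

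Next I would match each governed quantity to a term of (\ref{asd}). For the source classification loss, the standard inequality relating the expected loss to $H(Y\mid Z_s)$ (cross-entropy $\ge$ conditional entropy, or Fano for the $0$--$1$ loss) shows that driving ${\mathcal L}_{cl}(\psi(\varphi(X_s)),\eta(X_s))$ down drives $H(Y\mid Z_s)$ down, hence raises $I(Z_s;Y)=H(Y)-H(Y\mid Z_s)$ toward its ceiling $I(X_s;Y)=H(Y)$ guaranteed by \textbf{Assumption \ref{a1}} (the saturated case is exactly Appendix \textbf{Proposition \ref{p1}}). By the first identity with $u=s$ this lowers $I(X_s;X_t\mid Z_s)$, i.e.\ the discriminability of $Z_s$ increases; by the second identity it also lowers $H(Z_s\mid Y)=H(Z_s)-I(Z_s;Y)$, contributing to the transferability of $Z_s$. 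For the discrepancy term, minimizing $D(P_s^\varphi(Z_s),P_t^\varphi(Z_t))$ aligns the two feature marginals, so $H(Z_s)\to H(Z_t)$; read through the decoupling picture of \textbf{Section \ref{098}}, in which $I(X_u;Z_u\mid X_v)=H({}^uX_Y^{Uu})$ is precisely the domain-$u$-specific content of $Z_u$ that an invariant feature cannot carry, this drives both $I(X_s;Z_s\mid X_t)$ and $I(X_t;Z_t\mid X_s)$ down, so the transferability of both $Z_s$ and $Z_t$ improves; the regularizer $\Delta$ only keeps $H(Z_u)$ from inflating and plays no role in the two discriminability terms.

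It remains to argue the negative half, namely that the discriminability of $Z_t$ is \emph{not} secured. By the first identity with $u=t$ this would require $I(Z_t;Y)$ to rise; but no term of (\ref{asd}) references the target label, and marginal alignment $P_s^\varphi=P_t^\varphi$ imposes no lower bound on $I(Z_t;Y)$ --- a feature map can equalize the marginals while permuting the class-conditionals across domains, so $I(Z_t;Y)$ may stay small or even shrink. Hence minimizing (\ref{asd}) improves the transferability but not the discriminability of $Z_t$, which together with the previous paragraph gives the asymmetric statement.

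I expect the main obstacle to be making the alignment step rigorous: $P_s^\varphi=P_t^\varphi$ by itself does not force $I(X_u;Z_u\mid X_v)$ to zero, so this step must be carried out inside the structural assumptions of \textbf{Section \ref{098}} --- the independent split $X_u=(X_Y,X_Y^{Uu})$ with $X_Y^{Uu}$ carried only by domain $u$ --- and the words ``increased''/``improved'' must be read as the loss terms being tight upper-bound proxies for the (conditional) mutual informations they govern, so that a monotone decrease of the objective entails a monotone decrease of the targeted quantity. A secondary subtlety is book-keeping the ceiling $I(Z_u;Y)\le H(Y)$ via \textbf{Assumption \ref{a1}} and Appendix \textbf{Proposition \ref{p1}} so the two domains' discriminability claims remain mutually consistent.
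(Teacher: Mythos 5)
Your route is genuinely different from the paper's, and the difference is exactly where the gap lies. The paper's proof (Appendix \textbf{Theorem \ref{asa}}) bounds $I(X_s;Z_s\mid X_t)$ and $I(X_t;Z_t\mid X_s)$ directly by $\mathrm{KL}(P(Z_s)\|P(Z_t))$ and $\mathrm{KL}(P(Z_t)\|P(Z_s))$ via a chain of re-arrangements, so that the discrepancy term of (\ref{asd}) controls both transferability quantities outright; it then lower-bounds $I(Z_s;Y)$ by $-\mathcal L_{cl}$ through a Donsker--Varadhan variational argument and appeals to \textbf{Proposition \ref{p1}} (saturating $I(Z_s;Y)$ forces $I(X_s;X_t\mid Z_s)=0$) for the discriminability of $Z_s$. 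You instead use the Markov chain of \textbf{Lemma \ref{l1}} to derive exact identities $I(X_s;X_t\mid Z_u)=I(X_s;X_t)-I(Z_u;Y)$ and $I(X_u;Z_u\mid X_v)=H(Z_u\mid Y)$; these are algebraically sound given $Y=\eta(X_u)$ and the chain, and your tying of $\mathcal L_{cl}$ to $I(Z_s;Y)$ is in the same spirit as the paper's (cross-entropy/Fano versus Donsker--Varadhan), so the discriminability-of-$Z_s$ half is fine.

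The genuine gap is the alignment step, which you flag but do not close. In your framework transferability is $H(Z_u\mid Y)$, and the discrepancy term of (\ref{asd}) gives no handle on $H(Z_u\mid Y)$: two feature maps can have identical marginals $P(Z_s)=P(Z_t)$ while $Z_u$ retains arbitrary label-conditional entropy, so minimizing $D(P_s^\varphi,P_t^\varphi)$ need not shrink your transferability measure at all. Retreating to the decoupling picture of \textbf{Section \ref{098}} imports structure the theorem statement does not carry (an independent split $X_u=(X_Y,X_Y^{Uu})$ that $\varphi$ respects), and even there the decrease is asserted rather than derived. Concretely, because no term of (\ref{asd}) in your accounting touches $I(Z_t;Y)$ or $H(Z_t)$, the identity $I(X_t;Z_t\mid X_s)=H(Z_t)-I(Z_t;Y)$ leaves you unable to conclude that the transferability of $Z_t$ improves --- which is precisely the half the discrepancy term is supposed to secure, so your route would in fact prove a different (and weaker) statement than the theorem. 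The paper sidesteps this by bounding $I(X_t;Z_t\mid X_s)$ by the discrepancy directly rather than routing through $H(Z_t\mid Y)$; reproducing such a bound, or finding one compatible with your Markov-chain identities, is the missing piece.
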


The proof is presented in Appendix \textbf{Theorem \ref{asa}}. From \textbf{Theorem \ref{t1}}, we can find that only the transferability of the representation for samples in the target domain is considered, the discriminability of the representation for samples in the target domain is not fully considered. This could cause performance degradation of UDA models. To enhance understanding, we provide an intuitive explanation of this theorem, highlighting its fundamental concepts and underlying logic. First, $Z_s$ and $Z_t$, derived from $X_s$ and $X_t$ respectively, contain only a subset of the information present in $X_s$ and $X_t$. Then, according to \textbf{Definition \ref{d3}}, $Z_s$ and $Z_t$ are the sole variables in the conditional mutual information terms $I(X_s; Z_s \mid X_t)$ and $I(X_t; Z_t \mid X_s)$. These mutual information terms achieve their minimum values under specific conditions for $Z_s$ and $Z_t$, namely when: 1) $Z_s$ and $Z_t$ contain all task-related information common to both domains; or 2) $Z_s$ and $Z_t$ contain only a part of the task-related information. Based on the proof process in Appendix \textbf{Theorem \ref{asa}}, we deduce that the task-related information in $Z_s$ is also regulated by the classification loss ${{\mathcal{L}}_{cl}}( \psi ( \varphi ( {X_s} ) ), \eta ( {X_s} ) )$. Therefore, minimizing both $I(X_s; Z_s \mid X_t)$ and ${{\mathcal{L}}_{cl}}( \psi ( \varphi ( {X_s} ) ), \eta ( {X_s} ) )$ ensures that $Z_s$ encompasses all task-related information, providing both discriminability and transferability. In contrast, $Z_t$ lacks constraints on task-related information since there is no classification loss applied to it. Consequently, minimizing $I(X_t; Z_t \mid X_s)$ may result in $Z_t$ containing only partial task-related information. Therefore, the learned $Z_t$ may not be optimal, possessing transferability but lacking discriminability. To this end, we propose a novel representation learning framework to address this problem.

\begin{figure}
	\centering
	\includegraphics[width=0.45\textwidth]{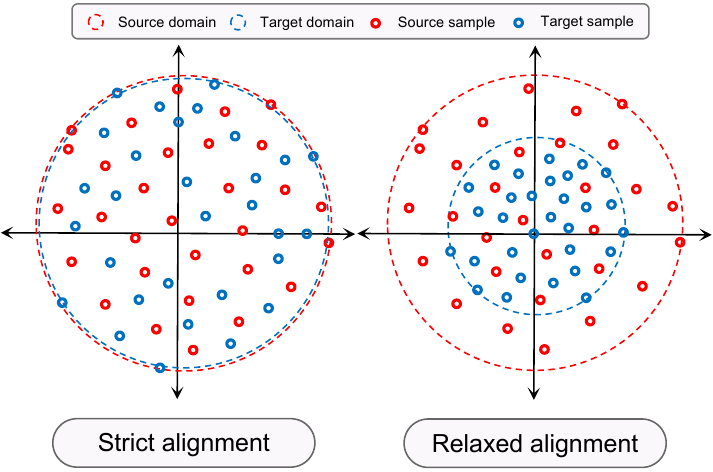}
	\caption{The red solid circles represent the source domain samples, the blue solid circles represent the target domain samples, the red dashed area represents the source domain data distribution, and the blue dashed area represents the target domain distribution area. The left subfigure shows that the two domain distributions are strictly aligned, and the right subfigure shows that the target domain distribution is included in the source domain distribution.}
	\label{123456}
\end{figure}

\subsection{The proposed learning framework}
\label{aass}

From the information-theoretical based analysis shown above, we can obtain that an ideal UDA representation learning framework should ensure that the features of both the source and target domains exhibit discriminability and transferability. Thus, the proposed framework can be formulated as follows:
\begin{equation}
\label{Eq:bbbd}
    \begin{array}{*{20}{l}}
{\mathop {\min }\limits_{\varphi ,\psi } I\left( {{X_s};{Z_s}\left| {{X_t}} \right.} \right) + I\left( {{X_t};{Z_t}\left| {{X_s}} \right.} \right) + I\left( {{X_s};{X_t}\left| {{Z_s}} \right.} \right)}\\
{\quad \quad \quad \quad \quad \quad  + I\left( {{X_s};{X_t}\left| {{Z_t}} \right.} \right) + \Delta (\varphi ,\psi )}
\end{array}
\end{equation}
where ${\varphi ,\psi }$ are network parameters and $\Delta (\varphi ,\psi )$ is the regular term about the network parameter. Based on \textbf{Definition \ref{d1}} and \textbf{Definition \ref{d3}}, we can obtain that minimizing the first two terms of Equation (\ref{Eq:bbbd}) allows for maximum transferability of the learned source and target domain features, and minimizing the third and fourth terms of Equation (\ref{Eq:bbbd}) allows for maximum discriminability of the learned source and target domain features. Therefore, we have the following conclusion:

\begin{theorem}
	\label{qw}
Assuming \textbf{Assumption \ref{a1}} holds, if a UDA representation learning framework can simultaneously minimize $I\left( {{X_s};{Z_s}\left| {{X_t}} \right.} \right)$, $I\left( {{X_t};{Z_t}\left| {{X_s}} \right.} \right)$, $I\left( {{X_s};{X_t}\left| {{Z_s}} \right.} \right)$, and  $I\left( {{X_s};{X_t}\left| {{Z_t}} \right.} \right)$, the resulting source and target domain features will exhibit both discriminability and transferability.
\end{theorem}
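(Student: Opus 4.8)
The plan is to show that \textbf{Theorem \ref{qw}} follows almost directly from the definitions and the information-theoretic setup already in place, essentially as an unpacking of \textbf{Definition \ref{d1}} and \textbf{Definition \ref{d3}} together with \textbf{Assumption \ref{a1}}. First I would recall that transferability of $Z_s$ and $Z_t$ is, by \textbf{Definition \ref{d3}}, literally quantified by $I(X_s; Z_s \mid X_t)$ and $I(X_t; Z_t \mid X_s)$ being small; hence minimizing the first two terms of the objective in Equation (\ref{Eq:bbbd}) drives these quantities toward zero, which is by definition maximal transferability. Likewise, discriminability of $Z_u$ is quantified by $I(X_s; X_t \mid Z_u)$ via \textbf{Definition \ref{d1}}, so minimizing the third and fourth terms drives $I(X_s; X_t \mid Z_s)$ and $I(X_s; X_t \mid Z_t)$ toward zero, giving maximal discriminability for both domains. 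The regularizer $\Delta(\varphi,\psi)$ plays no role in this argument and can be ignored.

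The substantive step — and the place where \textbf{Assumption \ref{a1}} is actually used — is to verify that ``discriminability in the sense of small $I(X_s;X_t\mid Z_u)$'' genuinely corresponds to the representation retaining all task-relevant information. Here I would invoke the Markov chain ${Z_t} \leftarrow {X_t} \leftrightarrow Y \leftrightarrow {X_s} \to {Z_s}$ (Appendix \textbf{Lemma \ref{l1}}) and the consequence, already recorded in the text, that $I(X_s;X_t \mid Z_u) = 0$ implies $I(X_u;Y) = I(Z_u;Y)$ (Appendix \textbf{Proposition \ref{p1}}). Combining this with \textbf{Assumption \ref{a1}}, namely $I(X_s;Y) = I(X_t;Y) = H(Y)$, yields $I(Z_u;Y) = H(Y)$ for $u \in \{s,t\}$; that is, both $Z_s$ and $Z_t$ carry the full label information $H(Y)$, so the Bayes-optimal predictor on $Z_u$ is as good as on $X_u$. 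I would then note, using the chain-rule decompositions $I(X_s;Z_s) = I(X_s;Z_s\mid X_t) + I(X_s;X_t;Z_s)$ and $I(X_t;Z_t) = I(X_t;Z_t\mid X_s) + I(X_s;X_t;Z_t)$ already displayed in Section \ref{098}, that simultaneously forcing the conditional (transferability) terms to zero makes $I(X_s;Z_s) = I(X_s;X_t;Z_s)$ and $I(X_t;Z_t) = I(X_s;X_t;Z_t)$, i.e. all information $Z_u$ shares with its own domain is in fact shared with the other domain as well — the interaction information is precisely the common, transferable content, which by the above equals $H(Y)$ when discriminability also holds. This closes the loop: the four minimized terms together pin $Z_s$ and $Z_t$ to the regime where they contain exactly the cross-domain label-relevant information and nothing domain-specific.

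I do not anticipate a hard technical obstacle; the proof is conceptual rather than computational. The one point requiring care is logical rather than analytic: the theorem as stated asserts that minimizing the four terms \emph{yields} features with both properties, so I should be careful to state the conclusion at the level of ``the minimizer attains $I(X_s;Z_s\mid X_t) = I(X_t;Z_t\mid X_s) = I(X_s;X_t\mid Z_s) = I(X_s;X_t\mid Z_t) = 0$, and any representation achieving this simultaneously satisfies the characterizations of \textbf{Definition \ref{d1}} and \textbf{Definition \ref{d3}},'' and to confirm via \textbf{Assumption \ref{a1}} and the Markov structure that such a simultaneous minimizer is actually feasible (so the infimum is attained and not vacuous) — e.g. the ``extreme case'' representation in which $Z_u$ encodes exactly $X_Y$, the shared label-related part identified in the intuitive decoupling discussion, realizes all four zeros at once. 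Establishing feasibility this way, and then reading off discriminability and transferability from the definitions, completes the argument.
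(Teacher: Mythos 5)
Your proposal is correct and, at its core, follows exactly the route the paper takes: the paper's own proof is a one-line observation that the conclusion is read off directly from \textbf{Definition~\ref{d1}} and \textbf{Definition~\ref{d3}}, which is precisely your opening paragraph. The extra material you add — invoking \textbf{Proposition~\ref{p1}}, \textbf{Assumption~\ref{a1}}, the Markov chain of \textbf{Lemma~\ref{l1}}, and the feasibility of a joint minimizer — is a sensible sanity check on why the definitions track what they should (and explains why \textbf{Assumption~\ref{a1}} appears in the hypothesis), but the paper treats all of that as motivation in Section~\ref{098} rather than as part of the proof proper, so none of it is strictly required for the stated claim.
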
					

The proof of \textbf{Theorem \ref{qw}} can be obtained directly based on \textbf{Definition \ref{d1}} and \textbf{Definition \ref{d3}}. From the proof presented in Appendix \textbf{Theorem \ref{asa}}, we obtain that: 1) minimizing $I\left( {{X_s};{Z_s}\left| {{X_t}} \right.} \right)$ and $I\left( {{X_t};{Z_t}\left| {{X_s}} \right.} \right)$ can be approximated by minimizing the divergence between the source domain distribution and target domain distribution in the feature space; 2) we can minimizing $I\left( {{X_s};{X_t}\left| {{Z_s}} \right.} \right)$ by minimizing ${{\mathcal{L}}_{cl}}( {\psi ( {\varphi ( {{X_s}} )} ),\eta ( {{X_s}} )} )$. Thus, the proposed representation learning framework Equation (\ref{Eq:bbbd}) can be rewritten as: 
\begin{equation}
\label{Eq:udbsasdasf}
\begin{array}{*{20}{l}}
{\mathop {\min }\limits_{\varphi ,\psi } D(P_s^\varphi ({Z_s}),P_t^\varphi ({Z_t})) + {{\cal L}_{cl}}(\psi (\varphi ({X_s})),\eta ({X_s}))}\\
{\quad \quad \quad \quad \quad \quad  + I\left( {{X_s};{X_t}\left| {{Z_t}} \right.} \right) + \Delta (\varphi ,\psi )}
\end{array}
\end{equation}

From Equation (\ref{Eq:udbsasdasf}), to better ensure that the extracted source and target domain features possess both discriminability and transferability, it is crucial to design a measure that accurately quantifies the distribution differences between domains and to effectively achieve $I\left( {{X_s};{X_t}\left| {{Z_t}} \right.} \right)$. In the following, based on the proposed framework, we further propose a novel UDA method, namely domain-invariant representation learning with global and local consistency (RLGLC), the global consistency is intended to better implement $D(P_s^\varphi ({Z_s}),P_t^\varphi ({Z_t}))$ and the local consistency is intended to better implement $I\left( {{X_s};{X_t}\left| {{Z_t}} \right.} \right)$.

\subsection{The proposed UDA method: RLGLC}

An overview of the proposed RLGLC is illustrated in Figure \ref{1234}. RLGLC mainly consists of two new modules, including the global consistency module and the local consistency module. The global consistency module is employed to align the distributions of the two domains in the latent space, which is related to the item $D( {P_s^\varphi ,P_t^\varphi } )$. The local consistency module aims to present the specific implementation of 
conditional mutual information $I\left( {{X_s};{X_t}\left| {{Z_t}} \right.} \right)$.

\subsubsection{Global consistency module}

The global consistency module is designed to more accurately measure the discrepancy between $P_s^\varphi\left( Z\right)$ and $P_t^\varphi\left( Z\right)$. Generally, the Wasserstein distance equals zero only when the distributions $P_s^\varphi\left( Z\right)$ and $P_t^\varphi\left( Z\right)$ are identical. However, during training, the inherent randomness of mini-batch sampling can lead to class imbalance between the source and target domains. For example, suppose the training samples in both domains contain two classes (positive and negative). In a given mini-batch, the source domain might have a positive-to-negative sample ratio of $5:5$, while the target domain has a ratio of $3:7$. Strictly enforcing equality between these two distributions could force some negative samples in the target domain to be misclassified as positive, negatively impacting model performance. Moreover, according to Equation (\ref{Eq:Wa}), $c(\cdot)$ is typically implemented as the $L_2$ norm, which calculates the distance between the feature vectors of two input samples. A characteristic of the $L_2$ norm is its insensitivity to shifts in feature vector dimensions. For instance, consider three sample features $X_1=(1,0,0,\dots,0)$, $X_2=(0,1,0,\dots,0)$, and $X_3=(0,0,1,\dots,0)$. The $L_2$ distances between $X_1$ and $X_2$, and between $X_1$ and $X_3$ are identical. In feature vectors, different dimensions often correspond to distinct semantics. From the perspective of semantic significance, when the semantics associated with different dimensions have varying levels of importance, their differences should be weighted accordingly in similarity measures. However, the $L_2$ norm only considers the numerical differences across dimensions without accounting for the importance of semantics of each dimension in the similarity evaluation.

To address the aforementioned challenges, we first propose relaxing the condition that "the Wasserstein distance equals zero only when $P_s^\varphi(Z)$ and $P_t^\varphi(Z)$ are identical" to "the Wasserstein distance equals zero when $P_t^\varphi(Z)$ is equal to or contained within $P_s^\varphi(Z)$." Here, "containment" means that for any value of $Z$, $P_s^\varphi(Z) \ge P_t^\varphi(Z)$. The advantage of this relaxation is that, even if class imbalance occurs in mini-batches due to random sampling, constraining the target domain distribution to be contained within the source domain distribution allows us to leverage a classifier trained on source data to correctly classify all target domain samples. Figure \ref{123456} provides an intuitive illustration of this "containment" concept. Secondly, to incorporate semantic importance into the distance measurement between feature vectors, we propose implementing the cost function $c(\cdot)$ in the Wasserstein distance as another Wasserstein distance. Specifically, we treat a vector as a distribution, where different elements of the vector are considered as samples drawn from this distribution. This approach enables us to assign a probability density value to each element using a probability density function, reflecting the importance of each vector element. However, since the distribution formed by the feature vector elements is discrete during training, the probability densities of the elements are set equally. This raises the question: how can we reflect semantic importance through the probability density function in this case? Our answer is based on existing works \cite{bengio2013representation, zhou2016learning, lin2017focal, mikolov2013efficient}, which suggest that important semantics in a feature vector are often associated with multiple dimensions, that is, multiple dimensions collectively represent one semantic concept. Therefore, the sum of the probability densities of these dimensions can reflect the importance of the semantic. The varying numbers of vector dimensions corresponding to different semantics thus reflect their different levels of importance.

To this end, we present the proposed asymmetrically-relaxed Wasserstein of Wasserstein distance (AR-WWD). In AR-WWD, we implement "containment" as a constraint: ${\sup _{Z \in \mathcal{Z}}}\frac{{P_t^\varphi \left( Z_t \right)}}{{P_s^\varphi \left( Z_s \right)}} \le 1 - \beta$, and  $1>\beta >0$. Meanwhile, for AR-WWD, the first "Wasserstein" refers to the Wasserstein distance between the probability distributions of two domains in the image space. The second "Wasserstein" indicates that we use the Wasserstein distance as the ground metric $c(\cdot)$, therefore, the ground metric between $Z_s$ and $Z_t$ is defined as:
\begin{equation}
\begin{array}{l}
c\left( {{Z_s},{Z_t}} \right) \\
\quad\quad\quad= {W_{q,{d_\Omega }}}\left( {{Z_s},{Z_t}} \right)\\
\quad\quad\quad= {\left( {\mathop {\inf }\limits_{\mu \left( {{z_s},{z_t}} \right) \in \Pi \left( {{z_s},{z_t}} \right)} \int {{d_\Omega }{{\left( {{z_s},{z_t}} \right)}^q}d\mu } } \right)^{\frac{1}{q}}}, 
\end{array}
\end{equation}
where ${z_s} \sim {Z_s}$ and ${z_t} \sim {Z_t}$  denote two random variables in the feature vector element space, in other word, ${z_s}$ and ${z_t}$ denote the random dimensions of feature vectors ${Z_s}$ and ${Z_t}$, respectively. $\Pi\left(z_s,z_t\right)$ denotes the set of all joint distributions $\mu \left( {z_s,z_t} \right)$ that satisfies $z_s = \int_{z_t} {\mu \left( {z_s,z_t} \right)d{z_t}} $, $z_t = \int_{z_s} {\mu \left( {z_s,z_t} \right)d{z_s}}$, and ${{d_\Omega }\left( {z^s,z^t} \right)}$ is defined as the spatial distance, e.g., the Euclidean distance, between two position coordinate vectors. Moreover, it is important to note that although ${z_s}$ and ${z_t}$ each represent a specific dimension of the source domain feature vector and the target domain feature vector, they are implemented as vectors during the training process. For example, if ${z_s}$ denotes the $i$-th element of vector ${Z_s}$, then ${z_s}$ is represented as a vector identical in dimensionality to ${Z_s}$. In this vector ${z_s}$, the $i$-th element is equal to the $i$-th element of ${Z_s}$, while all other elements are set to zero. This representation ensures that ${z_s}$ maintains the original vector structure, allowing for consistent vector operations within the training algorithm. According to Equation (\ref{Eq:Wa}), by setting $p=1$ and $q=2$, AR-WWD is defined as:
\begin{equation}
\begin{array}{l}
{W_{1,{W_{2,{d_\Omega }}}}}\left( {P_s^\varphi\left( Z_s \right) ,P_t^\varphi \left( Z_t \right)} \right) \\\quad\quad\quad= \mathop {\inf }\limits_{\mu \left( {{Z_s},{Z_t}} \right) \in \Pi \left( {{Z_s},{Z_t}} \right)} \int {{W_{2,{d_\Omega }}}\left( {{Z^s},{Z^t}} \right)} d\mu,
\end{array}
\end{equation}
where $Z_s$ and $Z_t$ are two random variables in the latent feature space $\mathcal{Z}$ with ${Z_s} \sim P_s^\varphi(Z_s)$ and ${Z_t} \sim P_t^\varphi(Z_t)$, $\Pi \left( {Z_s,Z_t} \right)$ denotes the set of all joint distributions $\mu \left( {Z_s,Z_t} \right)$ that satisfies $\left( {1 - \beta } \right){P_t^\varphi}  \ge  \int_{{Z_s}} {u\left( {{Z_s},{Z_t}} \right)} d {{Z_s}}, {P_s^\varphi} = \int_{{Z_t}} {u\left( {{Z_s},{Z_t}} \right)} d {{Z_t}}$.

Based on \cite{Solomon_Rustamov_Guibas_Butscher_2014, Rubner_2000, Li_2018, dukler2019wasserstein, villani2009optimal}, the duality of AR-WWD can be derived as follows:
\begin{equation}
\label{qw:wwd}
\begin{array}{l}
{W_{1,{W_{2,{d_\Omega }}}}}\left( {P_s^\varphi\left( Z \right) ,P_t^\varphi \left( Z \right)} \right) \\\quad\quad= \mathop  {\sup }\limits_{f \in C\left( Z  \right)} {\mathbbm{E}_{{P_s^\varphi}\left( Z \right)}}f\left( Z \right) - \left( 1 - \beta\right)  {\mathbbm{E}_{{P_t^\varphi}\left( Z \right)}}f\left( Z \right)\\
s.t.{\rm{  }}\int_\Omega  {\left\| {{\nabla _z}{\delta _Z}f\left( Z \left( z \right)\right)} \right\|_{{d_\Omega }}^2Z\left( z \right)dz \le 1},
\end{array}
\end{equation}
where $C: Z \to \mathbb{R}$ represents the function that are continuous and bounded everywhere, $\nabla$ is the gradient operator in the feature vector element space $\Omega$, $1 > \beta  > 0$ is a per-given hyper-parameter, and $\delta$ is the $L_2$ gradient in the latent feature space $\mathcal{Z}$.

Based on \cite{Gulrajani_Ahmed_Arjovsky_Dumoulin_Courville, Petzka_Fischer_Lukovnikov_2017}, for the discrete case, we can rewrite Equation (\ref{qw:wwd}) as
\begin{equation}
\label{qw:dwwd}
\resizebox{0.9\linewidth}{!}{$
\begin{array}{l}
\mathop {\sup }\limits_{f \in C\left( Z \right)} \left[ {\frac{1}{N}\sum\limits_{i = 1}^N {f\left( {{Z_{s,i}}} \right)}  - \frac{{1 - \beta }}{N}\sum\limits_{i = 1}^N {f\left( {{Z_{t,i}}} \right)} } + \frac{\lambda }{2NM} \sum\limits_{k\in \left \{ s,t \right \} }   \right.\\
\quad \left. { \sum\limits_{i = 1}^N\sum\limits_{j = 1}^M {{{\left( {\left( {{\nabla _{Z_{k,i}(j)}}{\delta _{Z_{k,i}}}f\left( Z_{k,i}(j)\right)} \right)^2Z_{k,i}(j) - 1} \right)}^2}} } \right],
\end{array}$}
\end{equation}
where $Z_{s,i}$ denotes the $i$-th sample in the mini-batch of source domain, $Z_{t,i}$ denotes the $i$-th sample in the mini-batch of target domain, $N$ is the number of samples in the mini-batch, $M$ is the dimension of vector $Z_{k,i}$, $Z_{k,i}(j)$ represents element of the $j$-th dimension of $Z_{k,i}$, and $\lambda$ is the hyperparameter.

\subsubsection{Local consistency module}
In this subsubsection, we present the specific implementation of 
conditional mutual information $I\left( {{X_s};{X_t}\left| {{Z_t}} \right.} \right)$. Generally, the conditional mutual information is defined as:
\begin{equation} \label{qww:CMI}
\resizebox{0.9\linewidth}{!}{$
I(X_s; X_t \mid Z_t) = \mathbb{E}_{P(Z_t)} \left[ \mathbb{E}_{P(X_s, X_t \mid Z_t)} \left[ \log \frac{P(X_t \mid X_s, Z_t)}{P(X_t \mid Z_t)} \right]\right],$}
\end{equation}
where $P(X_s, X_t \mid Z_t)$ is the joint probability distribution of $X_s$ and $X_t$ under condition $Z_t$, $P(X_t \mid Z_t)$ is the probability distribution of $X_t$ under condition $Z_t$, and $P(Z_t)$ is the probability distribution of $Z_t$. $I(X_s; X_t \mid Z_t)$ measures how much information $X_s$ and $X_t$ share given $Z_t$.

To derive a tractable estimator of $I(X_s; X_t \mid Z_t)$, motivated by \cite{oord2018representation, chen2020simple, chen2020intriguing}, we define the conditional-noise-contrastive-estimation-based (CNCE) estimator of the conditional mutual information as:
\begin{equation} \label{qwasdasdf}
\resizebox{0.9\linewidth}{!}{$
\begin{array}{l}
I_{\text{CNCE}}(X_s; X_t \mid Z_t, \phi, K) := \\
\mathbb{E}_{P(Z_t,X_s,X_t)} \mathbb{E}_{P(X_t^{(2:K)}\mid Z_t)} \left[ \log \frac{e^{\phi(X_s,X_t,Z_t)}}{\frac{1}{K}\sum_{k=1}^{K} e^{\phi(X_s,X_t^{(k)},Z_t)}} \right],
\end{array}$}
\end{equation}
where $K$ is the total number of “negative” samples, \(\phi: {X}_s \times {X}_t \times {Z}_t \to \mathbb{R}\) is a differentiable function, \(X_t^{(1)}=X_t\) is the “positive” sample from \(P(X_t \mid X_s,Z_t)\), and \(X_t^{(2)},\dots,X_t^{(K)}\) are “negative” samples drawn i.i.d. from \(P(X_t \mid Z_t)\). We have:

\begin{proposition}\label{gkhomcaomsg}
Define the Conditional Noise Contrastive Estimation (CNCE) functional as Equation (\ref{qwasdasdf}), then the following hold:

1) For any choice of \(\phi\) and any \(K \ge 1\),
\begin{equation}
I_{\text{CNCE}}(X_s; X_t \mid Z_t, \phi, K) \;\le\; I(X_s; X_t \mid Z_t).
\end{equation}

2) There exists a function \(\phi^*\) that attains the supremum. Specifically, if
\begin{equation}
\phi^*(X_s,X_t,Z_t) = \log \frac{P(X_t \mid X_s,Z_t)}{P(X_t \mid Z_t)} + c(X_s,Z_t),
\end{equation}
where \(c(X_s,Z_t)\) is a function independent of \(X_t\), then we have:
\begin{equation}
\begin{array}{l}
\sup_{\phi} I_{\text{CNCE}}(X_s; X_t \mid Z_t, \phi, K) \\
= I_{\text{CNCE}}(X_s; X_t \mid Z_t, \phi^*, K) \\
= I(X_s; X_t \mid Z_t).
\end{array}
\end{equation}

3) As the number of negative samples \(K\) grows, the approximation tightens. In particular, we have:
\begin{equation}\label{hlyguty}
\lim_{K \to \infty} \sup_{\phi} I_{\text{CNCE}}(X_s; X_t \mid Z_t, \phi, K) = I(X_s; X_t \mid Z_t).
\end{equation}
\end{proposition}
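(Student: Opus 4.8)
\emph{Proof strategy.} Abbreviate $I:=I(X_s;X_t\mid Z_t)$ and $I_{\text{CNCE}}(\phi,K):=I_{\text{CNCE}}(X_s;X_t\mid Z_t,\phi,K)$, and write $r(x_s,x_t,z_t):=\dfrac{P(X_t=x_t\mid X_s=x_s,Z_t=z_t)}{P(X_t=x_t\mid Z_t=z_t)}$ for the conditional density ratio, so that $I=\mathbb{E}_{P(Z_t,X_s,X_t)}[\log r(X_s,X_t,Z_t)]$; set $r_k:=r(X_s,X_t^{(k)},Z_t)$ and $g_k:=e^{\phi(X_s,X_t^{(k)},Z_t)}$ with $X_t^{(1)}=X_t$. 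A one-line rearrangement yields the identity
\[
I-I_{\text{CNCE}}(\phi,K)=-\,\mathbb{E}\!\left[\log\frac{g_1}{r_1\cdot\frac{1}{K}\sum_{k=1}^{K}g_k}\right],
\]
and the whole argument revolves around it together with the change-of-measure fact that $r_1^{-1}=P(X_t^{(1)}\mid Z_t)/P(X_t^{(1)}\mid X_s,Z_t)$ reweights the law of $X_t^{(1)}$ from the conditional $P(\cdot\mid X_s,Z_t)$ to the marginal $P(\cdot\mid Z_t)$. For part 1 I would bound the right-hand side above by $0$ with a single use of Jensen's inequality: by concavity of $\log$ it suffices that $\mathbb{E}\big[g_1/(r_1\cdot\frac{1}{K}\sum_k g_k)\big]\le 1$; fixing $(X_s,Z_t)$ and integrating over $X_t^{(1)}\sim P(\cdot\mid X_s,Z_t)$, the factor $r_1^{-1}$ turns this into an integral against $P(\cdot\mid Z_t)$, and after enlarging the domain (the integrand is non-negative) the variables $X_t^{(1)},\dots,X_t^{(K)}$ become i.i.d.\ from $P(\cdot\mid Z_t)$, so that, for real-valued $\phi$ (hence $g_k>0$), exchangeability together with $\sum_i g_i/(\tfrac{1}{K}\sum_k g_k)=K$ forces each term to have mean $1$; this proves part 1.

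For part 2 I would identify the maximizer via the equivalent ``which of the $K$ slots carries the positive sample'' view: with $J$ uniform on $\{1,\dots,K\}$ and $Q$ placing $X_t^{(J)}\sim P(\cdot\mid X_s,Z_t)$ and $X_t^{(k)}\sim P(\cdot\mid Z_t)$ for $k\neq J$, symmetry gives $I_{\text{CNCE}}(\phi,K)=\log K+\mathbb{E}_Q[\log\hat q_\phi(J\mid X_s,X_t^{(1:K)},Z_t)]$ with softmax posterior $\hat q_\phi(j\mid\cdot)\propto g_j$. Since the Bayes posterior is $Q(j\mid\cdot)\propto r_j$ and the gap is the non-negative expected KL divergence between $Q(\cdot\mid\cdot)$ and $\hat q_\phi(\cdot\mid\cdot)$, it vanishes precisely when $g_j\propto r_j$ up to a factor independent of $x_t$, i.e.\ when $\phi=\phi^{*}$ as stated (the additive term $c(X_s,Z_t)$ cancels in both the ratio and the softmax). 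Plugging $\phi^{*}$ (equivalently $g_k=r_k$) into the identity collapses it to
\[
\sup_{\phi}I_{\text{CNCE}}(\phi,K)=I_{\text{CNCE}}(\phi^{*},K)=I-\mathbb{E}\!\left[\log\Big(\tfrac{1}{K}\textstyle\sum_{k=1}^{K}r_k\Big)\right],
\]
and by part 1 the correction $\mathbb{E}[\log(\tfrac{1}{K}\sum_k r_k)]\ge 0$; the equality with $I$ is the $K\to\infty$ statement proved next, so the second ``$=$'' is to be read in that limit.

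For part 3 the task is to show $\mathbb{E}[\log(\tfrac{1}{K}\sum_{k=1}^{K}r_k)]\to 0$. Since $X_t^{(1)}\sim P(\cdot\mid X_s,Z_t)$ we have $r_1>0$ a.s., hence $\tfrac{1}{K}\sum_k r_k\ge r_1/K>0$ and the expectation is finite; combining the lower bound $\ge 0$ from part 1 with $\log x\le x-1$, which gives $\mathbb{E}[\log(\tfrac{1}{K}\sum_k r_k)]\le\mathbb{E}[\tfrac{1}{K}\sum_k r_k]-1=\tfrac{1}{K}(\mathbb{E}[r_1]-1)$, squeezes the limit to $0$ provided $\mathbb{E}[r_1]=\mathbb{E}_{P(X_s,X_t,Z_t)}[r(X_s,X_t,Z_t)]<\infty$. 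This settles part 3 and the remaining equality in part 2. The delicate point --- the main obstacle --- is precisely this integrability requirement, i.e.\ finiteness of the averaged conditional $\chi^{2}$-divergence between $P(X_t\mid X_s,Z_t)$ and $P(X_t\mid Z_t)$: the optimal critic $\phi^{*}=\log r$ is only defined up to null sets and may be unbounded, so to make parts 2--3 fully rigorous one should either impose a two-sided bound on $r$ (natural here, as the latent representations live in a bounded set) or run the argument with the truncations $\phi^{*}_M=\max\{\log r,-M\}$ and show via dominated convergence that $M\to\infty$ recovers the same value; controlling that truncation is where the real effort lies.
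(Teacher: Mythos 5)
Your proof is correct and considerably more rigorous than the paper's own argument in Appendix Theorem~\ref{vbjty}, which substitutes appeal to analogy for calculation: the paper's Step~4 asserts the lower bound ``by construction \ldots\ similar to the InfoNCE bound'', and its Steps~5--6 invoke the law of large numbers without addressing integrability. You instead prove part~1 by a change of measure (the factor $r_1^{-1}$ reweights the law of $X_t^{(1)}$ from $P(\cdot\mid X_s,Z_t)$ to $P(\cdot\mid Z_t)$), support enlargement, exchangeability, and Jensen---an actual argument where the paper has none; part~2 by recasting the estimator as a $K$-way classifier log-likelihood and identifying the slack as an expected KL divergence to the Bayes posterior, which characterizes the optimal critic cleanly; and part~3 by the squeeze $0\le\mathbb{E}\bigl[\log\bigl(\tfrac{1}{K}\sum_k r_k\bigr)\bigr]\le\tfrac{1}{K}\bigl(\mathbb{E}[r_1]-1\bigr)$ via $\log x\le x-1$, which sidesteps the dominated-convergence step the paper's LLN appeal would actually need. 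You also correctly flag two points the paper glosses over. First, part~2 as stated---that $I_{\text{CNCE}}(\phi^*,K)=I$ at finite $K$---is generally false: your identity $I_{\text{CNCE}}(\phi^*,K)=I-\mathbb{E}\bigl[\log\bigl(\tfrac{1}{K}\sum_k r_k\bigr)\bigr]$ exhibits a nonnegative bias that vanishes only as $K\to\infty$ (the familiar $\log K$ saturation of InfoNCE), and the paper's own Step~5 writes ``$\to$'' rather than ``$=$'', tacitly conceding this; your reading of the second equality as asymptotic is the right fix. Second, the squeeze requires $\mathbb{E}[r_1]<\infty$ (a finite averaged conditional $\chi^2$-divergence), a moment condition the paper's LLN argument also silently needs; flagging it, and offering the truncation fallback, is exactly the right degree of care.
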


The proof is shown in Appendix \textbf{Theorem \ref{vbjty}}. From Equation (\ref{qwasdasdf}), we can obtain that we need to give the formulation of $P(X_t^{(2:K)}\mid Z_t)$ and $P(Z_t,X_s,X_t)$. Because that $\{Z_t,X_t\}$ is a pair, thus, we implement $P(Z_t,X_s,X_t)$ as $\{ {Z_t},{X_t},{X_{s,j}}\}$, where $X_{s,j}$ is the closest sample to $X_t$ in the source domain mini-batch. For $P(X_t^{(2:K)}\mid Z_t)$, we set $K$ to the size of the mini-batch minus 1, and we implement $P(X_t^{(2:K)}\mid Z_t)$ as so other samples in the target domain mini-batch except $X_t$.

\subsubsection{Overall objective}
Based on the global and local consistency modules, we present the proposed RLGLC. RLGLC first maps the input data into the latent space to obtain the feature representation by a projection function $\varphi $. Then, a classifier $\psi $ is learned from the labeled source domain samples in the latent space. The overall objective is formulated as follows:
\begin{equation}
\label{Eq:bbb}
\begin{array}{l}
\mathop {\min }\limits_{\varphi ,\psi } {W_{1,{W_{2,{d_\Omega }}}}}\left( {P_s^\varphi ,P_t^\varphi } \right)+I_{\text{CNCE}}(X_s; X_t \mid Z_t, \phi, K) \\+ {\mathcal{L}_{cl}}\left( {\psi \left( \varphi\left(X_s \right) \right),Y_s} \right)+ \alpha \Delta(\phi ,\psi),
\end{array}
\end{equation}
where $\alpha$ is a hyper-parameter, $Y_s$ is the labels of source domain samples, $\mathcal{L}_{cl}$ is the cross-entropy loss, and $\Delta$ is the regularization term to punish large parameters in $\varphi $ and $\psi $. The feature extractor $\varphi $ and the classifier $\psi $ can be jointly trained by back-propagation.

Because the computation of ${W_{1,{W_{2,{d_\Omega }}}}}\left( {P_s^\varphi ,P_t^\varphi } \right)$ and $I_{\text{CNCE}}(X_s; X_t \mid Z_t, \phi, K)$ requires a maximization procedure, the optimization of Equation (\ref{Eq:bbb}) is implemented through adversarial learning. Concretely, the calculation of $\phi$ is as follows: first, project $X_s$ and $X_t$ into the $Z_t$ feature space to obtain $Z_s^+$ and $Z_t^+$. Next, define $\frac{1}{2}(Z_s^{ + {\rm{T}}}{Z_t} + Z_s^{ + {\rm{T}}}{Z_t})$ as the output of $\phi$. Building on this formulation, the adversarial learning framework is executed in two distinct steps:


\textbf{Step 1}: Keeping $\varphi$ and $\psi$ fixed, we first optimize $f$ following Equation (\ref{qw:dwwd}), and then optimize $\phi$ based on Equation (\ref{hlyguty}).

\textbf{Step 2}: With $f$ and $\phi$ fixed, we compute $W_{1,{W_{2,{d_\Omega }}}}(P_s^\varphi,P_t^\varphi)$ and $I_{\text{CNCE}}(X_s; X_t \mid Z_t,\phi,K)$. Guided by these values, we then optimize $\varphi$ and $\psi$ following Equation (\ref{Eq:bbb}).

\section{Bayes error rate}

In this section, we provide a theoretical analysis of the generalization classification error for the learned feature representation by RLGLC. We utilize the Bayes error rate \cite{feder1994relations} to measure the quality of the learned feature representation. Note that the Bayes error rate cannot be estimated empirically and can only be used for theoretical analysis. Specifically, let $P_e$ be the Bayes error rate of the learned representation $Z_t$ and ${{{\hat Y}_t}}$ be
the estimation of the ground truth label $Y_t$ by the learned classifier. Then, we have:
\begin{equation}
{P_e}: = {\mathbb{E}_{{Z_t} \sim P_t^\varphi }}\left[ {1 - \mathop {\max }\limits_{{y_t} \in {Y_t}} P\left( {{{\hat Y}_t} = {y_t}\left| {{Z_t}} \right.} \right)} \right].
\end{equation}

Formally, let $\left| {{Y_t}} \right|$ be the cardinality of $Y_t$, and let ${\rm{Th}}\left( x \right) = \min \left\{ {\max \left\{ {x,0} \right\},1 - {1 \mathord{\left/
			{\vphantom {1 {\left| {{Y_t}} \right|}}} \right.
			\kern-\nulldelimiterspace} {\left| {{Y_t}} \right|}}} \right\}$ be a thresholding function, we can obtain: ${P_e} = {\rm{Th}}\left( {{{P}_e}} \right)$. Then, we can establish the connection between the Bayes error rate and the information theory-based discriminability measure:
            
\begin{theorem}
	\label{qww}
For an arbitrary learned feature representation pair \((Z_s,Z_t)\) derived from \((X_s,X_t)\), and for any decision rule that outputs a hypothesis \(\hat{Y}\) of \(Y\), we have the following upper bounds on the average error probability \(\bar{P}_e\):

1. Bound with \( I(X_s; Z_s \mid X_t) \):
\begin{equation}
   \bar{P}_e \leq 1 - \exp\bigl[-H(Y) + I(X_s; Z_s \mid X_t)\bigr].
   \end{equation}

2. Bound with \( I(X_t; Z_t \mid X_s) \):
\begin{equation}
   \bar{P}_e \leq 1 - \exp\bigl[-H(Y) + I(X_t; Z_t \mid X_s)\bigr].
   \end{equation}

3. Bound with \( I(X_s; X_t \mid Z_s) \):
   \begin{equation}
   \bar{P}_e \leq 1 - \exp\bigl[-H(Y) + I(X_s; X_t \mid Z_s)\bigr].
   \end{equation}

4. Bound with \( I(X_s; X_t \mid Z_t) + \Delta(\phi,\psi) \):
   \begin{equation}
   \bar{P}_e \leq 1 - \exp\bigl[-H(Y) + I(X_s; X_t \mid Z_t) + \Delta(\phi,\psi)\bigr].
   \end{equation}

\end{theorem}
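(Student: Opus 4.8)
\emph{Proof plan.} The plan is to obtain all four bounds from a single ``exponential Fano'' estimate and then substitute the information identities established in Section~\ref{098}.

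\textbf{Step 1 (exponential form of Fano's inequality).} First I would show that for any random variable $W$ on which the decision rule for $Y$ is based, the Bayes error obeys $\bar P_e \le 1 - \exp(-H(Y\mid W))$. This is elementary: for every outcome $w$ one has $\max_{y}P(Y=y\mid W{=}w) \ge \exp(-H_\infty(Y\mid W{=}w)) \ge \exp(-H(Y\mid W{=}w))$, because min-entropy never exceeds Shannon entropy; averaging over $W$ and applying Jensen's inequality to the convex map $t\mapsto e^{-t}$ yields $1-\bar P_e = \mathbb{E}_{W}\bigl[\max_{y}P(Y=y\mid W)\bigr] \ge \exp(-H(Y\mid W))$.

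\textbf{Step 2 (controlling the residual entropy by each objective term).} Next, for each of the four terms I would bound the relevant residual entropy $H(Y\mid Z_u)$, $u\in\{s,t\}$, using: the generative Markov chain $Z_t \leftarrow X_t \leftrightarrow Y \leftrightarrow X_s \to Z_s$ (Appendix Lemma~\ref{l1}); the fact that $\eta$ is a deterministic, domain-invariant labelling function, so that $Y$ is a measurable function of $X_s$ and of $X_t$; the chain rule for mutual information; and Assumption~\ref{a1}. Concretely, $I(Y;Z_u\mid X_u)=0$ together with $I(X_u;Y)=H(Y)$ gives $I(Y;Z_u)=H(Y)-I(X_u;Y\mid Z_u)$, and then the data-processing inequality applied through $Y=\eta(X_{u'})$ bounds $I(X_u;Y\mid Z_u)$ by $I(X_s;X_t\mid Z_u)$ for the discriminability terms and, via the interaction-information decomposition of Section~\ref{098}, by $I(X_u;Z_u\mid X_{u'})$ for the transferability terms; for the $Z_s$-based bounds one additionally transfers the estimate to the target distribution using the near-equality of $P_s^\varphi$ and $P_t^\varphi$. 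Feeding the resulting estimate of $H(Y\mid Z_u)$ into Step~1, and keeping $H(Y)$ explicit as supplied by Assumption~\ref{a1}, delivers the first three inequalities and the fourth up to the penalty term.

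\textbf{Step 3 (the regularisation term) and main obstacle.} For the fourth bound I would recall that Equation~(\ref{Eq:bbb}) minimises the CNCE surrogate $I_{\mathrm{CNCE}}$, which lower-bounds $I(X_s;X_t\mid Z_t)$ (Proposition~\ref{gkhomcaomsg}); the finite-$K$ estimation gap together with the non-negative parameter penalty is absorbed into $\Delta(\phi,\psi)\ge 0$, which only enlarges the exponent and hence preserves the inequality. The delicate part is Step~2: one must check that the chain-rule/data-processing manipulations isolate \emph{exactly one} conditional mutual information against $H(Y\mid Z_u)$ — so that the interaction-information and covariate-shift contributions cancel in the right way and the exponent comes out as $-H(Y)$ plus that single term rather than a weaker combination — and, in particular, that the transferability terms, which do not directly control $I(Y;Z_u)$, can be handled through the decoupling decomposition $H(Z_u)=H(X_u^{Y})+H({}^{u}X_{Y}^{Uu})$ of Section~\ref{098}. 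Getting the constant in the exponent exactly right, rather than merely proving a qualitatively similar bound, is where the real work lies.
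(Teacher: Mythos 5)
Your Step~1 is a concrete and correct derivation of the exponential Fano-type estimate $\bar P_e \le 1 - \exp\bigl(-H(Y\mid W)\bigr)$ via the min-entropy inequality and Jensen, and it is genuinely more explicit than what the paper itself supplies: the paper's proof of Theorem~\ref{qww} merely name-drops Fano and ``standard inequalities,'' and its pivotal Step~3 asserts a lower bound $\bar P_e \ge \exp[-H(Y)+I(\cdot)]$ and then ``inverts'' it into the desired upper bound, which is not a legitimate logical step. So where the paper is vague, you are being rigorous.

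The substantive gap is in your Step~2, which you correctly flag as ``where the real work lies,'' but the route you sketch would not produce the stated exponent. Your chain of identities (together with the paper's Proposition~\ref{p1}, whose proof establishes $I(X_u;Y\mid Z_u)\le I(X_s;X_t\mid Z_u)$) gives $H(Y\mid Z_u)=I(X_u;Y\mid Z_u)\le I(X_s;X_t\mid Z_u)$, and feeding this into Step~1 yields $\bar P_e \le 1-\exp[-I(X_s;X_t\mid Z_u)]$, with exponent $-I(\cdot)$ rather than the theorem's $-H(Y)+I(\cdot)$; these two bounds coincide only when $I(\cdot)=H(Y)/2$ and neither implies the other in general, so ``keeping $H(Y)$ explicit'' does not bridge the gap. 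To land on the theorem's form through Step~1 you would actually need $I(Y;W)\ge I(\cdot)$, which is false in general: take $Z_u$ independent of everything, so $I(Y;Z_u)=0$ while $I(X_s;X_t\mid Z_u)\ge H(Y)>0$ under Assumption~\ref{a1}. The transferability cases (bounds 1 and 2) are worse still: the proposed comparison $I(X_u;Y\mid Z_u)\le I(X_u;Z_u\mid X_{u'})$ fails for the same constant-$Z_u$ example, where the left-hand side equals $H(Y)$ and the right-hand side is $0$. So your Step~1 is a genuine improvement on the paper's vagueness, but the chain-rule mechanism you outline in Step~2 cannot reach the theorem's stated inequality, and carrying it through would prove a different statement.
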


\begin{theorem}\label{gvbnjiout}
Let \(\bar{P}_e\) be the average error probability of predicting \(Y\) given \((X_s, X_t, Z_s, Z_t)\) and a corresponding decision rule. Suppose we have the four upper bounds shown in \textbf{Theorem \ref{qww}}, then, we can unify these into a single upper bound:
\begin{equation}
\begin{array}{l}
{{\bar P}_e} \le 1 - \exp [ - H(Y) + \min \{ I({X_s};{Z_s}\mid {X_t}),{\mkern 1mu} \\
\quad\quad\quad I({X_t};{Z_t}\mid {X_s}),{\mkern 1mu} I({X_s};{X_t}\mid {Z_s}),{\mkern 1mu} \\
\quad\quad\quad\quad\quad\quad I({X_s};{X_t}\mid {Z_t})+ \Delta (\phi ,\psi )\} ].
\end{array}
\end{equation}
\end{theorem}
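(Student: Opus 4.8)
The plan is to obtain Theorem~\ref{gvbnjiout} directly as a consolidation of the four inequalities of Theorem~\ref{qww}, using only that the quantity $\bar P_e$ appearing on the left of all four is one and the same number. First I would abbreviate $A_1 = I(X_s;Z_s\mid X_t)$, $A_2 = I(X_t;Z_t\mid X_s)$, $A_3 = I(X_s;X_t\mid Z_s)$ and $A_4 = I(X_s;X_t\mid Z_t)+\Delta(\phi,\psi)$, so that Theorem~\ref{qww} asserts
\[
\bar P_e \;\le\; 1-\exp[-H(Y)+A_i],\qquad i\in\{1,2,3,4\},
\]
and I would emphasize that these hold \emph{simultaneously}: each is proved for the same representation pair $(Z_s,Z_t)$ and the same decision rule producing $\hat Y$, so the four choices are mutually consistent.

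Next I would combine them. Since the four inequalities constrain the same number $\bar P_e$, it is bounded by their minimum, $\bar P_e\le\min_i\bigl(1-\exp[-H(Y)+A_i]\bigr)$. To rewrite this in the claimed form I would use the elementary monotonicity of the scalar map $g(t):=1-\exp[-H(Y)+t]$, which is strictly decreasing because $H(Y)$ is a constant and $\exp$ is increasing. Writing $m=\min\{A_1,A_2,A_3,A_4\}$ and choosing an index $j$ with $A_j=m$, the $j$-th line of Theorem~\ref{qww} reads precisely $\bar P_e\le g(m)=1-\exp[-H(Y)+m]$, which is the asserted inequality. One may also note that $\min_i g(A_i)=g(\max_i A_i)$, so the sharper bound $\bar P_e\le 1-\exp[-H(Y)+\max_i A_i]$ holds as well; since $\max_i A_i\ge m$ and $g$ is decreasing, it in turn implies the form stated in the theorem, so nothing is lost by presenting the bound in terms of $m$.

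There is no substantive analytic obstacle here: the technical content, namely the entropy-based error estimates, is entirely contained in Theorem~\ref{qww}, and the present statement is a one-line corollary. The only points that need care are bookkeeping ones: confirming that the four bounds of Theorem~\ref{qww} genuinely hold jointly for a fixed triple $(Z_s,Z_t,\hat Y)$ rather than under mutually incompatible choices, and keeping the direction of monotonicity of $g$ straight so that the minimum over the information terms lands on the valid side of the inequality. If anything deserves the label ``hard part'', it is purely expository: choosing the form of the consolidated bound that is most convenient for the ensuing discussion of RLGLC and for side-by-side comparison with the per-term bounds of Theorem~\ref{qww}.
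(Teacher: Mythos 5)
Your argument matches the paper's proof of Theorem~\ref{qww159357}: both reduce the claim to the observation that the map $t\mapsto 1-\exp[-H(Y)+t]$ is monotone and that the four inequalities of Theorem~\ref{qww} hold simultaneously, so the bound indexed by the minimizing information term is itself the stated unified bound (the paper reaches this by solving $\tilde R\le R_i$ for all $i$, you reach it by directly invoking the $j$-th line with $A_j=\min_i A_i$; the two derivations are equivalent). Your remark that the genuinely tightest consolidation would use $\max_i A_i$ rather than $\min_i A_i$, and that the stated form is a deliberate weakening of it, is correct and is a point the paper's proof glosses over but does not contradict.
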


The proofs are presented in Appendix \textbf{Theorem \ref{qwwq}} and \textbf{Theorem \ref{qww159357}}. From \textbf{Theorem \ref{qww}} and \textbf{Theorem \ref{gvbnjiout}}, we can decrease the corresponding Bayes error rate by reducing $I({X_s};{Z_s}\mid {X_t})$, $I({X_t};{Z_t}\mid {X_s})$, $I({X_s};{X_t}\mid {Z_s})$ and $I({X_s};{X_t}\mid {Z_t})$. This demonstrate that learning by Equation (\ref{Eq:bbb}) is effective. Compared RLGLC with other UDA methods, we can obtain that RLGLC can achieve a lower upper bound, this is because that RLGLC offers a more precise measurement of $I(X_s; Z_s \mid X_t)$ and $I(X_t; Z_t \mid X_s)$, facilitating a more effective maximization of these information measures. This improvement leads to a more substantial reduction in the Bayes error rate. Moreover, by incorporating an additional term $I(X_s; X_t \mid Z_t)$, RLGLC provides more nuanced control over the Bayes error rate, enabling finer adjustments to achieve optimal performance.

\begin{table}[t]
    \centering
	\caption{Accuracy (\%) and standard deviation on the Office-31 dataset for unsupervised domain adaptation by using ResNet-50 as the backbone (except TransVQA, which uses ViT-B as a backbone). The best results are highlighted in \textbf{bold}.}
    \resizebox{\linewidth}{!}{
			\begin{tabular}{l|cccccc|c}
				\toprule
				Method & A$\to$W & D$\to$W & W$\to$D & A$\to$D & D$\to$A & W$\to$A & Avg \\
				\midrule
				ResNet-50 & 68.4$ \pm $0.2 & 96.7$ \pm $0.1 & 99.3$ \pm $0.1 & 68.9$ \pm $0.2 & 62.5$ \pm $0.3 & 60.7$ \pm $0.3 & 76.1 \\
				
				DAN & 80.5$ \pm $0.4 & 97.1$ \pm $0.2 & 99.6$ \pm $0.1 & 78.6$ \pm $0.2 & 63.6$ \pm $0.3 & 62.8$ \pm $0.2 & 80.4 \\
				
				DANN & 82.0$ \pm $0.4 & 96.9$ \pm $0.2 & 99.1$ \pm $0.1 & 79.7$ \pm $0.4 & 68.2$ \pm $0.4 & 67.4$ \pm $0.5 & 82.2 \\
				
				JAN & 85.4$ \pm $0.3 & 97.4$ \pm $0.2 & 99.8$ \pm $0.2 & 84.7$ \pm $0.3 & 68.6$ \pm $0.3 & 70.0$ \pm $0.4 & 84.3 \\
				
				GTA & 89.5$ \pm $0.5 & 97.9$ \pm $0.3 & 99.8$ \pm $0.4 & 87.7$ \pm $0.5 & 72.8$ \pm $0.3 & 71.4$ \pm $0.4 & 86.5 \\
				
				CDAN & 93.1$ \pm $0.2 & 98.2$ \pm $0.2 & \textbf{100.0$ \pm $0.0} & 89.8$ \pm $0.3 & 70.1$ \pm $0.4 & 68.0$ \pm $0.4 & 86.6 \\
				
				CDAN+E & 94.1$ \pm $0.1 & 98.6$ \pm $0.1 & \textbf{100.0$ \pm $0.0} & 92.9$ \pm $0.2 & 71.0$ \pm $0.3 & 69.3$ \pm $0.3 & 87.7 \\
				
				BSP+DANN & 93.0$ \pm $0.2 & 98.0$ \pm $0.2 & \textbf{100.0$ \pm $0.0} & 90.0$ \pm $0.4 & 71.9$ \pm $0.3 & 73.0$ \pm $0.3 & 87.7 \\
				
				BSP+CDAN & 93.3$ \pm $0.2 & 98.2$ \pm $0.2 & \textbf{100.0$ \pm $0.0} & 93.0$ \pm $0.2 & 73.6$ \pm $0.3 & 72.6$ \pm $0.3 & 88.5 \\
				
				ADDA & 86.2$ \pm $0.5 & 96.2$ \pm $0.3 & 98.4$ \pm $0.3 & 77.8$ \pm $0.3 & 69.5$ \pm $0.4 & 68.9$ \pm $0.5 & 82.9 \\
				
				MCD & 88.6$ \pm $0.2 & 98.5$ \pm $0.1 & \textbf{100.0$ \pm $0.0} & 92.2$ \pm $0.2 & 69.5$ \pm $0.1 & 69.7$ \pm $0.3 & 86.5 \\
				
				MDD & 94.5$ \pm $0.3 & 98.4$ \pm $0.1 & \textbf{100.0$ \pm $0.0} & 93.5$ \pm $0.2 & 74.6$ \pm $0.3 & 72.2$ \pm $0.1 & 88.9 \\
				
				SymmNets & 94.2$ \pm $0.1 & 98.8$ \pm $0.0 & \textbf{100.0$ \pm $0.0} & 93.5$ \pm $0.3 & 74.4$ \pm $0.1 & 73.4$ \pm $0.2 & 89.1 \\
				
				GVB-GD & 94.8$ \pm $0.5 & 98.7$ \pm $0.3 & \textbf{100.0$ \pm $0.0} & 95.0$ \pm $0.4 & 73.4$ \pm $0.3 & 73.7$ \pm $0.4 & 89.3
				\\
				
				CAN & 94.5$ \pm $0.3 & 99.1$ \pm $0.2 & 99.8$ \pm $0.2 & 95.0$ \pm $0.3 & 78.0$ \pm $0.3 & 77.0$ \pm $0.3 & 90.6 \\
				
				ETD & 92.1$ \pm $0.4 & \textbf{100.0$\pm$0.2} & \textbf{100.0$ \pm $0.3} & 88.0$ \pm $0.2 & 71.0$ \pm $0.3 & 67.8$ \pm $0.3 & 86.5 \\
				
				SRDC & 95.7$ \pm $0.2 & 99.2$ \pm $0.1 & \textbf{100.0$ \pm $0.0} & 95.8$ \pm $0.2 & 76.7$ \pm $0.3 & 77.1$ \pm $0.1 & 90.8 \\
				
				ACTIR & 94.9$ \pm $0.2 & 98.2$ \pm $0.2 & 99.9$ \pm $0.1 & 92.7$ \pm $0.4 & 75.6$ \pm $0.2 & 74.1$ \pm $0.4 & 89.2 \\
				
				TCM & 93.1$ \pm $0.4 & 98.2$ \pm $0.3 & 99.8$ \pm $0.3 & 90.1$ \pm $0.2 & 72.3$ \pm $0.2 & 74.7$ \pm $0.5 & 88.0 \\
				
				ERM & 89.7$ \pm $0.4 & \textbf{100.0$ \pm $0.0} & 99.4$ \pm $0.3 & 94.7$ \pm $0.4 & 74.6$ \pm $0.4 & 74.1$ \pm $0.4 & 88.8 \\
				
				ICDA & 92.8$ \pm $0.3 & 99.1$ \pm $0.4 & \textbf{100.0$ \pm $0.0} & 95.1$ \pm $0.2 & 72.7$ \pm $0.4 & 75.1$ \pm $0.3 & 89.1 \\
				
				iMSDA & 94.5$ \pm $0.3 & 98.9$ \pm $0.5 & \textbf{100.0$ \pm $0.0} & 95.2$ \pm $0.3 & 74.1$ \pm $0.3 & 75.2$ \pm $0.1 & 89.7 \\
				
				UniOT & 93.7$ \pm $0.4 & 99.4$ \pm $0.3 & \textbf{100.0$ \pm $0.0} & 93.1$ \pm $0.4 & 75.0$ \pm $0.3 & 76.1$ \pm $0.1 & 89.6 \\
                    WDGRL & 92.1$ \pm $0.2 & 97.9$ \pm $0.3 & 99.9$ \pm $0.4 & 93.1$ \pm $0.3 & 73.8$ \pm $0.2 & 74.9$ \pm $0.2 & 88.6 \\
				PPOT & 90.7$ \pm $0.2 & 99.1$ \pm $0.5 & \textbf{100.0$ \pm $0.0} & 93.8$ \pm $0.2 & 78.1$ \pm $0.4 & 76.4$ \pm $0.6 & 89.6 \\
                CPH & 90.6$ \pm $0.4 & 95.8$ \pm $0.2 & 99.9$ \pm $0.2 & 89.1$ \pm $0.2 & 76.8$ \pm $0.4 & 77.4$ \pm $0.3 & 88.3 \\
                SSRT+GH++ & 96.7$ \pm $0.3 & 91.4$ \pm $0.4 & \textbf{100.0$ \pm $0.0} & 94.2$ \pm $0.5 & 77.8$ \pm $0.4 & 75.6$ \pm $0.4 & 89.2\\
                TransVQA & 96.4$ \pm $0.1 & 99.6$ \pm $0.2 & \textbf{100.0$ \pm $0.0} & 97.4$ \pm $0.2 & 79.7$ \pm $0.4 & \textbf{80.1$ \pm $0.4} & 92.2\\
				\midrule
				\textbf{RLGC} & 95.6$ \pm $0.3 & 98.9$ \pm $0.3 & \textbf{100.0$ \pm $0.0} & 95.7$ \pm $0.3 & 74.9$ \pm $0.2 & 76.8$ \pm $0.2 & 90.3 \\
				\textbf{RLGC*} & 96.6$ \pm $0.2 & 99.6$ \pm $0.2 & \textbf{100.0$ \pm $0.0} & 96.1$ \pm $0.3 & 77.6$ \pm $0.5 & 77.4$ \pm $0.2 & 91.2 \\
				\textbf{RLGLC} & \textbf{97.8$ \pm $0.3} & \textbf{100.0$ \pm $0.0} & \textbf{100.0$ \pm $0.0} & \textbf{98.1$ \pm $0.4} & \textbf{80.1$ \pm $0.3} & 78.9$ \pm $0.3 & \textbf{92.5} \\
				\bottomrule
			\end{tabular}}

	
	\label{tab:1}
\end{table}

\section{Experiments}
\label{123}

In this section, we validate the effectiveness of the proposed RLGLC through experiments. The main contents of this chapter report the experimental setup, experimental details, experimental results, statistical analysis, and ablation experiments. We repeat the experiments 5 times and then report the average accuracy of the five results. For a fair comparison, the reported results of most comparison methods come from their original papers. When the original papers do not have relevant experimental results, we obtained the results by manually reproducing.

\subsection{Experimental setup} 
\label{s}
We evaluate the proposed approach on multiple datasets: 1) Office-Home \cite{vh17}, including 4 domains; 2) Office-31 dataset \cite{ks10}, including 3 domains; 3) VisDa-2017 dataset \cite{pxc17c}, including 12 categories; 4) Digits datasets \cite{ganin2016domain}, including SVHN (S) \cite{hjj02}, USPS (U) \cite{ny11}, and MNIST (M) \cite{lcy98}; 5) DomainNet dataset \cite{peng2019moment}, including 12 categories. We compare our proposed RLGLC with state-of-the-art unsupervised domain adaptation methods including: ResNet-50\cite{he2016deep}, ResNet-101\cite{he2016deep}, DAN\cite{msl15}, DANN\cite{ganin2016domain}, JAN\cite{lms16}, GTA\cite{ss17}, ADDA\cite{te17}, UNIT\cite{myl17}, CyCADA\cite{jh18}, CDAN\cite{lms18}, CDAN+E\cite{lms18}, BSP+DANN\cite{xyc19}, BSP+ADDA\cite{xyc19}, BSP+CDAN\cite{xyc19}, ADDA\cite{te17}, MCD\cite{saito2018maximum}, MDD\cite{ycz19}, SWD\cite{lee2019sliced}, CAN\cite{kang2019contrastive}, SymmNets\cite{zhang2020unsupervised}, GVB-GD\cite{cui2020gradually}, ETD\cite{li2020enhanced}, SRDC\cite{tang2020unsupervised}, HDAN\cite{cui2020heuristic}, ACTIR\cite{jiang2022invariant}, TCM\cite{yue2021transporting}, ERM\cite{shen2022connect}, ICDA\cite{gulrajani2022identifiability}, iMSDA\cite{kong2022partial}, UniOT\cite{chang2022unified}, DCAN \cite{wu2018dcan}, CBST \cite{zou2018unsupervised}, ADV \cite{vu2019advent}, SWLS \cite{dong2019semantic}, MSL \cite{chen2019domain}, PyCDA \cite{lian2019constructing}, CrCDA \cite{huang2020contextual}, CSCL \cite{dong2020cscl}, LSE \cite{subhani2020learning}, FADA \cite{wang2020classes}, SS-UDA \cite{pan2020unsupervised}, PIT \cite{lv2020cross}, ELS-DA \cite{dong2020can}, WDGRL \cite{shen2018wasserstein}, PPOT \cite{wang2024probability}, SSRT+GH++\cite{huang2024gradient}, TransVQA \cite{sun2024transvqa}, PDA \cite{bai2024prompt}, CPH\cite{cui2024effective}, SAMB-D \cite{li2024semantic}, and TCPL\cite{gao2024learning}. The classification accuracies including the average accuracy on each dataset and specific transfer task accuracies are used as the performance measure. 

\begin{table*}[!ht]
	\caption{Accuracy (\%) and standard deviation on the Office-Home dataset for unsupervised domain adaptation by using ResNet-50 as the backbone. The best results are highlighted in \textbf{bold}.}
			\centering
\resizebox{\linewidth}{!}{
			\begin{tabular}{l|cccccccccccc|c}
				\toprule
				Method & A$\to$C & A$\to$P & A$\to$R & C$\to$A & C$\to$P & C$\to$R & P$\to$A & P$\to$C & P$\to$R & R$\to$A & R$\to$C & R$\to$P & Avg \\
				\midrule
ResNet-50 & 34.9$\pm$0.2 & 50.0$\pm$0.1 & 58.0$\pm$0.1 & 37.4$\pm$0.1 & 41.9$\pm$0.2 & 46.2$\pm$0.3 & 38.5$\pm$0.1 & 31.2$\pm$0.2 & 60.4$\pm$0.2 & 53.9$\pm$0.1 & 41.2$\pm$0.1 & 59.9$\pm$0.3 & 46.1 \\
DAN & 43.6$\pm$0.1 & 57.0$\pm$0.2 & 67.9$\pm$0.2 & 45.8$\pm$0.2 & 56.5$\pm$0.1 & 60.4$\pm$0.2 & 44.0$\pm$0.2 & 43.6$\pm$0.1 & 67.7$\pm$0.2 & 63.1$\pm$0.1 & 51.5$\pm$0.1 & 74.3$\pm$0.2 & 56.3 \\
DANN & 45.6$\pm$0.3 & 59.3$\pm$0.1 & 70.1$\pm$0.2 & 47.0$\pm$0.1 & 58.5$\pm$0.2 & 60.9$\pm$0.2 & 46.1$\pm$0.1 & 43.7$\pm$0.1 & 68.5$\pm$0.1 & 63.2$\pm$0.1 & 51.8$\pm$0.1 & 76.8$\pm$0.2 & 57.6 \\
JAN & 45.9$\pm$0.2 & 61.2$\pm$0.2 & 68.9$\pm$0.1 & 50.4$\pm$0.2 & 59.7$\pm$0.1 & 61.0$\pm$0.2 & 45.8$\pm$0.1 & 43.4$\pm$0.2 & 70.3$\pm$0.2 & 63.9$\pm$0.1 & 52.4$\pm$0.2 & 76.8$\pm$0.2 & 58.3 \\
CDAN & 49.0$\pm$0.1 & 69.3$\pm$0.2 & 74.5$\pm$0.2 & 54.4$\pm$0.2 & 66.0$\pm$0.2 & 68.4$\pm$0.1 & 55.6$\pm$0.1 & 48.3$\pm$0.2 & 75.9$\pm$0.3 & 68.4$\pm$0.2 & 55.4$\pm$0.2 & 80.5$\pm$0.2 & 63.8 \\
CDAN+E & 50.7$\pm$0.1 & 70.6$\pm$0.1 & 76.0$\pm$0.1 & 57.6$\pm$0.2 & 70.0$\pm$0.1 & 70.0$\pm$0.2 & 57.4$\pm$0.1 & 50.9$\pm$0.1 & 77.3$\pm$0.1 & 70.9$\pm$0.1 & 56.7$\pm$0.2 & 81.6$\pm$0.3 & 65.8 \\
BSP+DANN & 51.4$\pm$0.2 & 68.3$\pm$0.2 & 75.9$\pm$0.1 & 56.0$\pm$0.1 & 67.8$\pm$0.1 & 68.8$\pm$0.2 & 57.0$\pm$0.1 & 49.6$\pm$0.1 & 75.8$\pm$0.2 & 70.4$\pm$0.2 & 57.1$\pm$0.2 & 80.6$\pm$0.3 & 64.9 \\
BSP+CDAN & 52.0$\pm$0.2 & 68.6$\pm$0.2 & 76.1$\pm$0.1 & 58.0$\pm$0.2 & 70.3$\pm$0.1 & 70.2$\pm$0.2 & 58.6$\pm$0.1 & 50.2$\pm$0.1 & 77.6$\pm$0.2 & 72.2$\pm$0.1 & 59.3$\pm$0.1 & 81.9$\pm$0.2 & 66.3 \\
MDD & 54.9$\pm$0.2 & 73.7$\pm$0.1 & 77.8$\pm$0.1 & 60.0$\pm$0.1 & 71.4$\pm$0.1 & 71.8$\pm$0.2 & 61.2$\pm$0.1 & 53.6$\pm$0.2 & 78.1$\pm$0.2 & 72.5$\pm$0.1 & 60.2$\pm$0.1 & 82.3$\pm$0.2 & 68.1 \\
SymmNets & 48.1$\pm$0.3 & 74.3$\pm$0.2 & 78.7$\pm$0.1 & 64.6$\pm$0.2 & 71.8$\pm$0.2 & 74.1$\pm$0.1 & 64.4$\pm$0.1 & 50.0$\pm$0.1 & 80.2$\pm$0.1 & 74.3$\pm$0.1 & 53.1$\pm$0.3 & 83.2$\pm$0.1 & 68.1 \\
ETD & 51.3$\pm$0.2 & 71.9$\pm$0.1 & 85.7$\pm$0.1 & 57.6$\pm$0.1 & 69.2$\pm$0.2 & 73.7$\pm$0.1 & 57.8$\pm$0.2 & 51.2$\pm$0.2 & 79.3$\pm$0.1 & 70.2$\pm$0.1 & 57.5$\pm$0.2 & 82.1$\pm$0.1 & 67.3 \\
GVB-GD & 57.0$\pm$0.2 & 74.7$\pm$0.1 & 79.8$\pm$0.1 & 64.6$\pm$0.2 & 74.1$\pm$0.1 & 74.6$\pm$0.2 & 65.2$\pm$0.1 & 55.1$\pm$0.2 & 81.0$\pm$0.1 & 74.6$\pm$0.1 & 59.7$\pm$0.2 & 84.3$\pm$0.2 & 70.4 \\
HDAN & 56.8$\pm$0.2 & 75.2$\pm$0.1 & 79.8$\pm$0.1 & 65.1$\pm$0.2 & 73.9$\pm$0.1 & 75.2$\pm$0.2 & 66.3$\pm$0.1 & \textbf{56.7$\pm$0.2} & 81.8$\pm$0.1 & 75.4$\pm$0.1 & 59.7$\pm$0.2 & 84.7$\pm$0.2 & 70.9 \\
SRDC & 52.3$\pm$0.2 & 76.3$\pm$0.1 & 81.0$\pm$0.1 & 69.5$\pm$0.2 & 76.2$\pm$0.1 & 78.0$\pm$0.2 & 68.7$\pm$0.1 & 53.8$\pm$0.1 & 81.7$\pm$0.1 & 76.3$\pm$0.2 & 57.1$\pm$0.2 & 85.0$\pm$0.3 & 71.3 \\
ACTIR & 50.6$\pm$0.1 & 74.5$\pm$0.2 & 82.1$\pm$0.1 & 66.4$\pm$0.1 & 75.4$\pm$0.2 & 77.9$\pm$0.3 & 68.4$\pm$0.2 & 53.1$\pm$0.1 & 81.9$\pm$0.3 & 75.4$\pm$0.1 & 58.1$\pm$0.2 & 84.7$\pm$0.2 & 70.7 \\
TCM & 58.6$\pm$0.2 & 74.4$\pm$0.2 & 79.6$\pm$0.2 & 64.5$\pm$0.1 & 74.0$\pm$0.2 & 75.1$\pm$0.2 & 64.6$\pm$0.2 & 56.2$\pm$0.1 & 80.9$\pm$0.2 & 74.6$\pm$0.2 & 60.7$\pm$0.1 & 84.7$\pm$0.2 & 70.7 \\
ERM & 54.2$\pm$0.1 & 72.9$\pm$0.1 & 80.1$\pm$0.2 & 67.2$\pm$0.2 & 75.1$\pm$0.2 & 76.2$\pm$0.1 & 67.1$\pm$0.1 & 52.1$\pm$0.1 & 78.2$\pm$0.2 & 72.7$\pm$0.2 & 58.2$\pm$0.1 & 82.1$\pm$0.3 & 69.7 \\
ICDA & 53.8$\pm$0.3 & 74.3$\pm$0.3 & 82.7$\pm$0.2 & 67.9$\pm$0.2 & 74.7$\pm$0.3 & 76.1$\pm$0.2 & 66.9$\pm$0.3 & 55.8$\pm$0.2 & 80.7$\pm$0.2 & 74.9$\pm$0.2 & 58.6$\pm$0.1 & 83.7$\pm$0.2 & 70.8 \\
iMSDA & 55.1$\pm$0.3 & 73.7$\pm$0.3 & 78.1$\pm$0.2 & 66.6$\pm$0.2 & 75.1$\pm$0.1 & 76.6$\pm$0.1 & 66.4$\pm$0.2 & 53.8$\pm$0.3 & 80.9$\pm$0.2 & 75.1$\pm$0.1 & 58.9$\pm$0.3 & 83.2$\pm$0.2 & 70.3 \\
UniOT & 54.1$\pm$0.2 & 73.9$\pm$0.1 & 80.9$\pm$0.1 & 68.1$\pm$0.2 & 75.1$\pm$0.1 & 76.7$\pm$0.2 & 68.1$\pm$0.1 & 55.9$\pm$0.1 & 81.1$\pm$0.1 & 75.1$\pm$0.2 & 59.6$\pm$0.2 & 84.3$\pm$0.3 & 71.1 \\
WDGRL & 53.8$\pm$0.3 & 73.2$\pm$0.2 & 79.8$\pm$0.1 & 65.1$\pm$0.2 & 73.7$\pm$0.2 & 75.9$\pm$0.2 & 63.2$\pm$0.1 & 50.7$\pm$0.2 & 78.5$\pm$0.1 & 70.9$\pm$0.2 & 57.2$\pm$0.2 & 82.1$\pm$0.3 & 68.7 \\
PPOT & 53.5$\pm$0.2 & 76.2$\pm$0.2 & 81.4$\pm$0.1 & 70.6$\pm$0.3 & 77.5$\pm$0.2 & 79.5$\pm$0.4 & 70.8$\pm$0.2 & 56.0$\pm$0.2 & 81.0$\pm$0.1 & 74.6$\pm$0.2 & 60.1$\pm$0.3 & 83.2$\pm$0.1 & 72.0 \\
PDA & 53.0$\pm$0.2 & 76.7$\pm$0.2 & 84.1$\pm$0.3 & 70.2$\pm$0.2 & 73.2$\pm$0.1 & 74.2$\pm$0.3 & \textbf{71.0$\pm$0.2} & 55.2$\pm$0.1 & 83.0$\pm$0.2 & 72.4$\pm$0.2 & 53.4$\pm$0.1 & 82.1$\pm$0.1 & 70.7\\
SAMB-D & 57.8$\pm$0.3 & 74.6$\pm$0.3 & 86.1$\pm$0.2 & 70.5$\pm$0.1 & 76.2$\pm$0.1 & \textbf{81.0$\pm$0.1} & 69.3$\pm$0.2 & 56.2$\pm$0.2 & 82.9$\pm$0.2 & \textbf{75.9$\pm$0.2} & 60.2$\pm$0.1 & 80.4$\pm$0.1 & 72.6 \\
TCPL & 56.9$\pm$0.2 & 75.0$\pm$0.1 & 85.2$\pm$0.3 & 71.0$\pm$0.2 & 76.5$\pm$0.2 & 79.0$\pm$0.3 & 70.9$\pm$0.2 & 56.5$\pm$0.3 & 55.8$\pm$0.2 & 73.0$\pm$0.2 & 60.3$\pm$0.1 & 81.6$\pm$0.3 & 70.1 \\
\midrule
\textbf{RLGC} & 56.9$\pm$0.1 & 75.9$\pm$0.1 & 83.6$\pm$0.1 & 68.9$\pm$0.1 & 75.7$\pm$0.1 & 77.7$\pm$0.2 & 65.1$\pm$0.1 & 53.9$\pm$0.2 & 80.8$\pm$0.1 & 71.5$\pm$0.1 & 59.8$\pm$0.1 & 84.9$\pm$0.1 & 71.2 \\
\textbf{RLGC*} & 58.9$\pm$0.2 & 77.2$\pm$0.1 & 85.4$\pm$0.1 & 71.2$\pm$0.2 & 76.7$\pm$0.1 & 79.3$\pm$0.1 & 66.3$\pm$0.1 & 54.6$\pm$0.2 & 83.2$\pm$0.1 & 72.6$\pm$0.1 & 61.6$\pm$0.1 & 85.9$\pm$0.2 & 72.7 \\
\textbf{RLGLC} & \textbf{59.9$\pm$0.1} & \textbf{78.9$\pm$0.2} & \textbf{87.6$\pm$0.1} & \textbf{72.7$\pm$0.2} & \textbf{78.4$\pm$0.1} & 80.1$\pm$0.1 & 68.4$\pm$0.1 & 56.6$\pm$0.2 & \textbf{84.5$\pm$0.1} & 74.7$\pm$0.1 & \textbf{62.7$\pm$0.1} & \textbf{87.5$\pm$0.1} & \textbf{74.3} \\
				\bottomrule
			\end{tabular}
}
	\label{tab:2}
\end{table*}

\subsection{Implementation details}
\label{id}
All experiments are implemented with PyTorch and optimized by the Adam optimizer. All experimental results of the compared methods are either used as reported in the original paper or are obtained by reproducing according to the settings of the original paper. The hyperparameter $\alpha$, $\beta $ and $\lambda $ in our proposed RLGLC are fixed. The hyperparameter $\beta$ is selected from the range of $\left\{ 0.1, 0.2, 0.3,...,0.9 \right\}$ and the hyperparameters $\alpha$ and  $\lambda$ are selected from the range of $\{ {10^{ - 3}},{10^{ - 2}},{10^{ - 1}},1, 10, {10^2},{10^3}\} $. For a fair comparison, we use the same network architecture as the compared methods on each dataset. Specifically, the backbone on the Office-31 dataset, the Office-Home dataset, and the Digits dataset is set to ResNet-50, the backbone on the VisDa-2017 dataset, GTA5 and Cityscapes dataset, and SYNTHIA and Cityscapes dataset is set as ResNet-101. For all datasets, the backbone is first pretrained on ImageNet and then fine-tuned. The \(f\) and \(\phi\) in RLGLC are all implemented by a convolutional neural network with 3 hidden layers and leaky ReLU activations. For the experimental results in all tables, the 'Avg' represents the average classification accuracy of all specific transfer tasks.

\begin{table*}[!ht]
	\caption{Accuracy (\%) and standard deviation on the VisDA-2017 dataset for unsupervised domain adaptation by using ResNet-101 as the backbone. The best results are highlighted in \textbf{bold}.}
            \centering
\resizebox{\linewidth}{!}{
			\begin{tabular}{l|cccccccccccc|c}
				\toprule
				Method & plane & bcybl & bus & car & horse & knife & mcyle & person & plant & sktbrd & train & truck & Avg \\
				\midrule
ResNet-101 & 55.1$\pm$0.2 & 53.3$\pm$0.1 & 61.9$\pm$0.1 & 59.1$\pm$0.2 & 80.6$\pm$0.2 & 17.9$\pm$0.2 & 79.7$\pm$0.1 & 31.2$\pm$0.2 & 81.0$\pm$0.1 & 26.5$\pm$0.1 & 73.5$\pm$0.2 & 8.5$\pm$0.2 & 52.4 \\
DAN & 87.1$\pm$0.1 & 63.0$\pm$0.2 & 76.5$\pm$0.1 & 42.0$\pm$0.1 & 90.3$\pm$0.1 & 42.9$\pm$0.2 & 85.9$\pm$0.1 & 53.1$\pm$0.1 & 49.7$\pm$0.2 & 36.3$\pm$0.1 & 85.8$\pm$0.2 & 20.7$\pm$0.2 & 61.1 \\
DANN & 81.9$\pm$0.1 & 77.7$\pm$0.2 & 82.8$\pm$0.1 & 44.3$\pm$0.1 & 81.2$\pm$0.3 & 29.5$\pm$0.1 & 65.1$\pm$0.1 & 28.6$\pm$0.1 & 51.9$\pm$0.2 & 54.6$\pm$0.1 & 82.8$\pm$0.2 & 7.8$\pm$0.1 & 57.4 \\
MCD & 87.0$\pm$0.1 & 60.9$\pm$0.1 & 83.7$\pm$0.3 & 64.0$\pm$0.1 & 88.9$\pm$0.3 & 79.6$\pm$0.1 & 84.7$\pm$0.1 & 76.9$\pm$0.3 & 88.6$\pm$0.1 & 40.3$\pm$0.2 & 83.0$\pm$0.1 & 25.8$\pm$0.1 & 71.9 \\
CDAN & 85.2$\pm$0.1 & 66.9$\pm$0.2 & 83.0$\pm$0.1 & 50.8$\pm$0.1 & 84.2$\pm$0.2 & 74.9$\pm$0.1 & 88.1$\pm$0.1 & 74.5$\pm$0.3 & 83.4$\pm$0.2 & 76.0$\pm$0.1 & 81.9$\pm$0.2 & 38.0$\pm$0.3 & 73.9 \\
BSP+DANN & 92.2$\pm$0.2 & 72.5$\pm$0.2 & 83.8$\pm$0.1 & 47.5$\pm$0.1 & 87.0$\pm$0.2 & 54.0$\pm$0.1 & 86.8$\pm$0.2 & 72.4$\pm$0.1 & 80.6$\pm$0.3 & 66.9$\pm$0.2 & 84.5$\pm$0.1 & 37.1$\pm$0.3 & 72.1 \\
BSP+CDAN & 92.4$\pm$0.2 & 61.0$\pm$0.2 & 81.0$\pm$0.1 & 57.5$\pm$0.2 & 89.0$\pm$0.1 & 80.6$\pm$0.1 & 90.1$\pm$0.1 & 77.0$\pm$0.1 & 84.2$\pm$0.2 & 77.9$\pm$0.1 & 82.1$\pm$0.1 & 38.4$\pm$0.1 & 75.9 \\
SWD & 90.8$\pm$0.1 & 82.5$\pm$0.1 & 81.7$\pm$0.3 & 70.5$\pm$0.2 & 91.7$\pm$0.3 & 69.5$\pm$0.1 & 86.3$\pm$0.1 & 77.5$\pm$0.2 & 87.4$\pm$0.1 & 63.6$\pm$0.3 & 85.6$\pm$0.1 & 29.2$\pm$0.1 & 76.4 \\
CAN & 97.0$\pm$0.2 & 87.2$\pm$0.1 & 82.5$\pm$0.1 & 74.3$\pm$0.2 & 97.8$\pm$0.1 & 96.2$\pm$0.3 & 90.8$\pm$0.1 & 80.7$\pm$0.2 & 96.6$\pm$0.1 & 96.3$\pm$0.2 & 87.5$\pm$0.1 & 59.9$\pm$0.1 & 87.2 \\
ACTIR & 91.9$\pm$0.3 & 83.7$\pm$0.2 & 82.9$\pm$0.2 & 70.3$\pm$0.3 & 94.7$\pm$0.1 & 71.2$\pm$0.2 & 88.9$\pm$0.2 & 78.9$\pm$0.1 & 86.9$\pm$0.2 & 76.3$\pm$0.3 & 88.0$\pm$0.1 & 56.9$\pm$0.3 & 80.9 \\
TCM & 91.4$\pm$0.2 & 80.3$\pm$0.1 & 81.1$\pm$0.1 & 72.6$\pm$0.2 & 90.9$\pm$0.1 & 66.7$\pm$0.3 & 85.6$\pm$0.1 & 76.1$\pm$0.2 & 86.3$\pm$0.1 & 63.9$\pm$0.2 & 83.1$\pm$0.1 & 31.7$\pm$0.1 & 75.8 \\
ERM & 92.4$\pm$0.3 & 84.9$\pm$0.2 & 84.7$\pm$0.3 & 70.3$\pm$0.2 & 93.6$\pm$0.2 & 79.1$\pm$0.1 & 91.3$\pm$0.3 & 74.6$\pm$0.2 & 85.9$\pm$0.2 & 74.3$\pm$0.3 & 88.3$\pm$0.3 & 41.9$\pm$0.1 & 80.1 \\
ICDA & 93.1$\pm$0.3 & 84.7$\pm$0.2 & 82.3$\pm$0.3 & 73.6$\pm$0.3 & 94.3$\pm$0.2 & 72.9$\pm$0.1 & 89.9$\pm$0.2 & 79.3$\pm$0.3 & 91.0$\pm$0.3 & 72.9$\pm$0.3 & 86.9$\pm$0.1 & 39.1$\pm$0.1 & 79.9 \\
iMSDA & 95.1$\pm$0.1 & 84.9$\pm$0.1 & 82.1$\pm$0.3 & 72.7$\pm$0.2 & 92.9$\pm$0.3 & 78.6$\pm$0.1 & 89.1$\pm$0.1 & 79.3$\pm$0.2 & 92.8$\pm$0.1 & 73.6$\pm$0.3 & 85.9$\pm$0.1 & 40.1$\pm$0.1 & 80.6 \\
UniOT & 92.8$\pm$0.2 & 85.7$\pm$0.2 & 83.2$\pm$0.1 & 72.4$\pm$0.3 & 90.9$\pm$0.2 & 73.2$\pm$0.3 & 87.1$\pm$0.2 & 76.1$\pm$0.1 & 86.1$\pm$0.2 & 69.7$\pm$0.2 & 81.5$\pm$0.2 & 34.6$\pm$0.3 & 77.8 \\
WDGRL & 93.5$\pm$0.2 & 84.1$\pm$0.2 & 83.0$\pm$0.2 & 70.1$\pm$0.2 & 89.2$\pm$0.2 & 92.0$\pm$0.3 & 89.1$\pm$0.1 & 77.2$\pm$0.3 & 91.4$\pm$0.3 & 89.1$\pm$0.2 & 85.0$\pm$0.2 & 52.6$\pm$0.1 & 83.0 \\
PPOT & 96.2$\pm$0.3 & 86.9$\pm$0.2 & 85.6$\pm$0.2 & 72.8$\pm$0.1 & 93.5$\pm$0.2 & 94.8$\pm$0.1 & 89.9$\pm$0.2 & 78.0$\pm$0.2 & 93.4$\pm$0.1 & 90.2$\pm$0.2 & 82.5$\pm$0.3 & 53.6$\pm$0.1 & 84.8 \\
PDA & 96.5$\pm$0.2 & 86.9$\pm$0.1 & 85.0$\pm$0.3 & 74.6$\pm$0.2 & 95.1$\pm$0.2 & 90.6$\pm$0.2 & 89.9$\pm$0.1 & 76.5$\pm$0.2 & 93.8$\pm$0.2 & 91.4$\pm$0.2 & 84.0$\pm$0.3 & 55.0$\pm$0.1 & 84.9 \\
SAMB-D & 95.2$\pm$0.3 & 85.6$\pm$0.2 & 84.9$\pm$0.2 & 75.1$\pm$0.3 & 94.8$\pm$0.2 & 93.9$\pm$0.2 & 91.0$\pm$0.2 & 77.8$\pm$0.2 & 94.9$\pm$0.2 & 95.8$\pm$0.4 & 90.0$\pm$0.1 & 57.2$\pm$0.2 & 86.4 \\
TCRL & 97.3$\pm$0.1 & 87.3$\pm$0.2 & 85.1$\pm$0.1 & 74.9$\pm$0.4 & 96.0$\pm$0.2 & 95.8$\pm$0.2 & 92.1$\pm$0.2 & 79.6$\pm$0.3 & 94.9$\pm$0.2 & 95.1$\pm$0.3 & 90.1$\pm$0.1 & 57.3$\pm$0.2 & 87.2 \\
\midrule
DANN + \textbf{LM} & 83.1$\pm$0.2 & 78.7$\pm$0.2 & 84.1$\pm$0.2 & 44.5$\pm$0.1 & 82.2$\pm$0.2 & 30.9$\pm$0.2 & 66.2$\pm$0.2 & 28.9$\pm$0.1 & 52.6$\pm$0.1 & 55.6$\pm$0.2 & 83.2$\pm$0.2 & 9.2$\pm$0.2 & 58.4 \\
SWD + \textbf{LM} & 91.9$\pm$0.2 & 83.6$\pm$0.2 & 82.9$\pm$0.2 & 72.7$\pm$0.1 & 93.1$\pm$0.2 & 69.7$\pm$0.2 & 87.1$\pm$0.3 & 79.6$\pm$0.1 & 88.5$\pm$0.2 & 64.7$\pm$0.2 & 86.1$\pm$0.2 & 31.1$\pm$0.2 & 77.6 \\
CAN + \textbf{LM} & 98.2$\pm$0.2 & 88.4$\pm$0.2 & 83.7$\pm$0.2 & 76.1$\pm$0.1 & 98.1$\pm$0.2 & 97.5$\pm$0.2 & 92.3$\pm$0.2 & \textbf{81.9$\pm$0.3} & 97.1$\pm$0.2 & 97.8$\pm$0.1 & 87.5$\pm$0.2 & \textbf{60.5$\pm$0.2} & 88.3 \\
\midrule
\textbf{RLGC} & 95.8$\pm$0.1 & 86.5$\pm$0.2 & 84.8$\pm$0.1 & 70.6$\pm$0.2 & 92.6$\pm$0.1 & 94.4$\pm$0.2 & 90.8$\pm$0.1 & 78.4$\pm$0.1 & 93.9$\pm$0.3 & 90.7$\pm$0.1 & 85.4$\pm$0.2 & 54.9$\pm$0.1 & 84.9 \\
\textbf{RLGC*} & 97.6$\pm$0.1 & 87.4$\pm$0.1 & 86.0$\pm$0.2 & 74.7$\pm$0.1 & 95.6$\pm$0.3 & 96.6$\pm$0.1 & 93.3$\pm$0.1 & 79.2$\pm$0.2 & 95.7$\pm$0.1 & 94.8$\pm$0.1 & 89.7$\pm$0.2 & 56.6$\pm$0.1 & 87.3 \\
\textbf{RLGLC} & \textbf{99.4$\pm$0.2} & \textbf{88.3$\pm$0.1} & \textbf{88.6$\pm$0.2} & \textbf{77.3$\pm$0.1} & \textbf{98.8$\pm$0.1} & \textbf{97.8$\pm$0.2} & \textbf{93.7$\pm$0.1} & 81.6$\pm$0.1 & \textbf{97.7$\pm$0.3} & \textbf{98.6$\pm$0.1} & \textbf{91.4$\pm$0.2} & 59.6$\pm$0.1 & \textbf{89.4} \\
				\bottomrule
			\end{tabular}
}
	
	\label{tab:3}
\end{table*}

\begin{table*}[t]
	\caption{Accuracy (\%) and standard deviation on the DomainNet dataset for unsupervised domain adaptation by using ResNet-50 as the backbone. The best results are highlighted in \textbf{bold}.}
	\centering
\resizebox{\linewidth}{!}{
		\begin{tabular}{l|cccccccccccc|c}
			\toprule
			Method & R$\to$C & R$\to$P & R$\to$S & C$\to$R & C$\to$P & C$\to$S & P$\to$R &P$\to$C & P$\to$S & S$\to$R & S$\to$C & S$\to$P & Avg \\
			\midrule
			ResNet-50 & 65.8$\pm$0.3 & 68.8$\pm$0.1 & 59.2$\pm$0.2 & 77.7$\pm$0.3 & 60.6$\pm$0.2 & 57.9$\pm$0.1 & 84.5$\pm$0.1 & 62.3$\pm$0.1 & 65.1$\pm$0.2 & 77.1$\pm$0.2 & 63.0$\pm$0.3 & 59.2$\pm$0.3 & 66.8 \\
MCD & 62.0$\pm$0.3 & 69.3$\pm$0.4 & 56.3$\pm$0.3 & 79.8$\pm$0.2 & 56.6$\pm$0.3 & 53.7$\pm$0.3 & 83.4$\pm$0.3 & 58.3$\pm$0.4 & 61.0$\pm$0.4 & 81.7$\pm$0.2 & 56.3$\pm$0.3 & 66.8$\pm$0.4 & 65.4 \\
SWD & 63.2$\pm$0.2 & 70.4$\pm$0.4 & 56.6$\pm$0.3 & 80.1$\pm$0.3 & 56.1$\pm$0.4 & 53.5$\pm$0.4 & 84.9$\pm$0.4 & 61.2$\pm$0.3 & 63.1$\pm$0.4 & 82.4$\pm$0.3 & 56.1$\pm$0.2 & 67.4$\pm$0.3 & 66.3 \\
DAN & 64.4$\pm$0.4 & 70.7$\pm$0.2 & 58.4$\pm$0.2 & 79.4$\pm$0.4 & 56.8$\pm$0.5 & 60.1$\pm$0.3 & 84.6$\pm$0.4 & 61.6$\pm$0.4 & 62.2$\pm$0.2 & 79.7$\pm$0.5 & 65.0$\pm$0.2 & 62.0$\pm$0.4 & 67.1 \\
JAN & 65.6$\pm$0.2 & 73.6$\pm$0.3 & 67.6$\pm$0.4 & 85.0$\pm$0.3 & 65.0$\pm$0.3 & 67.2$\pm$0.3 & 87.1$\pm$0.4 & 67.9$\pm$0.5 & 66.1$\pm$0.2 & 84.5$\pm$0.3 & 72.8$\pm$0.5 & 67.5$\pm$0.4 & 72.5 \\
BSP+DANN & 67.3$\pm$0.5 & 73.5$\pm$0.3 & 69.3$\pm$0.3 & 86.5$\pm$0.4 & 67.5$\pm$0.3 & 70.9$\pm$0.5 & 86.8$\pm$0.3 & 70.3$\pm$0.3 & 68.8$\pm$0.4 & 84.3$\pm$0.3 & 72.4$\pm$0.5 & 71.5$\pm$0.2 & 74.1 \\
DANN & 63.4$\pm$0.3 & 73.6$\pm$0.4 & 72.6$\pm$0.5 & 86.5$\pm$0.4 & 65.7$\pm$0.3 & 70.6$\pm$0.4 & 86.9$\pm$0.4 & 73.2$\pm$0.3 & 70.2$\pm$0.4 & 85.7$\pm$0.3 & 75.2$\pm$0.5 & 70.0$\pm$0.4 & 74.5 \\
MDD & 63.5$\pm$0.2 & 71.4$\pm$0.5 & 73.7$\pm$0.5 & 85.7$\pm$0.3 & 67.8$\pm$0.3 & 71.5$\pm$0.3 & 86.2$\pm$0.4 & 73.7$\pm$0.5 & 72.1$\pm$0.4 & 88.3$\pm$0.4 & 76.4$\pm$0.5 & 72.1$\pm$0.4 & 75.2 \\
ACTIR & 78.9$\pm$0.3 & 76.1$\pm$0.4 & 71.4$\pm$0.5 & 88.8$\pm$0.4 & 70.8$\pm$0.2 & 73.1$\pm$0.4 & 89.1$\pm$0.3 & 72.9$\pm$0.5 & 73.4$\pm$0.4 & 88.2$\pm$0.5 & 80.6$\pm$0.3 & 75.3$\pm$0.5 & 78.2 \\
TCM & 79.8$\pm$0.4 & 74.1$\pm$0.5 & 72.9$\pm$0.2 & 89.2$\pm$0.3 & 71.5$\pm$0.4 & 72.7$\pm$0.3 & 88.2$\pm$0.2 & 74.8$\pm$0.3 & 74.1$\pm$0.5 & 88.1$\pm$0.5 & 76.8$\pm$0.5 & 73.7$\pm$0.3 & 78.0 \\
ICDA & 74.1$\pm$0.5 & 72.7$\pm$0.3 & 69.8$\pm$0.4 & 82.7$\pm$0.4 & 66.6$\pm$0.5 & 71.4$\pm$0.4 & 84.1$\pm$0.5 & 61.3$\pm$0.4 & 68.1$\pm$0.4 & 85.7$\pm$0.3 & 66.4$\pm$0.4 & 70.1$\pm$0.3 & 72.8 \\
iMSDA & 71.9$\pm$0.4 & 72.1$\pm$0.2 & 72.5$\pm$0.3 & 84.1$\pm$0.2 & 62.8$\pm$0.4 & 70.8$\pm$0.3 & 86.7$\pm$0.3 & 64.9$\pm$0.3 & 66.9$\pm$0.4 & 87.9$\pm$0.4 & 69.1$\pm$0.4 & 69.7$\pm$0.5 & 73.3 \\
UniOT & 68.9$\pm$0.5 & 70.1$\pm$0.4 & 64.8$\pm$0.5 & 82.2$\pm$0.4 & 56.0$\pm$0.3 & 61.7$\pm$0.5 & 84.7$\pm$0.5 & 62.8$\pm$0.5 & 65.7$\pm$0.3 & 86.0$\pm$0.4 & 62.7$\pm$0.5 & 70.1$\pm$0.4 & 69.6 \\
PPOT & 83.2$\pm$0.3 & 75.6$\pm$0.2 & 74.8$\pm$0.2 & 91.6$\pm$0.3 & 73.0$\pm$0.4 & 77.5$\pm$0.5 & 89.6$\pm$0.4 & 83.5$\pm$0.1 & 74.3$\pm$0.2 & 89.6$\pm$0.2 & 84.5$\pm$0.2 & 73.8$\pm$0.1 & 80.9 \\
PDA & 84.0$\pm$0.2 & 75.6$\pm$0.2 & 74.9$\pm$0.2 & 90.2$\pm$0.1 & 74.5$\pm$0.2 & 78.6$\pm$0.2 & 89.2$\pm$0.2 & 81.6$\pm$0.1 & 74.9$\pm$0.2 & 90.2$\pm$0.1 & 83.4$\pm$0.2 & 74.1$\pm$0.3 & 80.9 \\
SAMB-D & 84.2$\pm$0.1 & 74.9$\pm$0.3 & 75.2$\pm$0.3 & 93.8$\pm$0.1 & 75.0$\pm$0.2 & 80.2$\pm$0.4 & 91.0$\pm$0.2 & 84.1$\pm$0.3 & 75.8$\pm$0.3 & 92.1$\pm$0.1 & 81.0$\pm$0.4 & 73.0$\pm$0.2 & 81.7 \\
TCRL & 85.0$\pm$0.1 & 75.6$\pm$0.2 & 74.0$\pm$0.4 & 92.1$\pm$0.3 & 73.9$\pm$0.3 & 81.0$\pm$0.2 & 92.0$\pm$0.1 & 83.9$\pm$0.3 & 74.9$\pm$0.2 & 92.3$\pm$0.4 & 84.0$\pm$0.1 & 74.6$\pm$0.2 & 81.9 \\
\midrule
DANN + \textbf{LM} & 64.5$\pm$0.2 & 75.2$\pm$0.2 & 73.9$\pm$0.3 & 88.1$\pm$0.3 & 66.1$\pm$0.2 & 72.1$\pm$0.5 & 87.9$\pm$0.3 & 74.5$\pm$0.2 & 71.4$\pm$0.3 & 86.7$\pm$0.2 & 77.1$\pm$0.4 & 71.2$\pm$0.3 & 75.7 \\
SWD + \textbf{LM} & 64.5$\pm$0.2 & 71.6$\pm$0.3 & 57.7$\pm$0.4 & 82.3$\pm$0.2 & 57.3$\pm$0.3 & 53.6$\pm$0.3 & 85.5$\pm$0.3 & 62.6$\pm$0.3 & 64.5$\pm$0.3 & 83.6$\pm$0.2 & 57.8$\pm$0.3 & 68.6$\pm$0.2 & 67.5 \\
MDD + \textbf{LM} & 64.6$\pm$0.2 & 72.6$\pm$0.3 & 74.9$\pm$0.2 & 86.8$\pm$0.3 & 68.9$\pm$0.2 & 72.9$\pm$0.2 & 87.7$\pm$0.6 & 74.8$\pm$0.3 & 74.1$\pm$0.2 & 89.8$\pm$0.3 & 77.9$\pm$0.2 & 73.7$\pm$0.4 & 76.6 \\
\midrule
\textbf{RLGC} & 83.9$\pm$0.3 & 73.9$\pm$0.2 & 72.9$\pm$0.3 & 91.9$\pm$0.4 & 75.1$\pm$0.4 & 78.7$\pm$0.3 & 90.4$\pm$0.2 & 83.1$\pm$0.2 & 74.9$\pm$0.3 & 90.8$\pm$0.1 & 82.8$\pm$0.2 & 73.3$\pm$0.3 & 81.0 \\
\textbf{RLGC*} & 85.1$\pm$0.2 & 76.4$\pm$0.3 & 74.8$\pm$0.4 & 93.1$\pm$0.3 & 76.2$\pm$0.3 & 80.5$\pm$0.4 & 92.1$\pm$0.4 & 84.8$\pm$0.3 & 76.7$\pm$0.1 & 92.5$\pm$0.3 & 84.5$\pm$0.3 & 75.1$\pm$0.2 & 82.7 \\
\textbf{RLGLC} & \textbf{86.7$\pm$0.2} & \textbf{78.8$\pm$0.2} & \textbf{77.2$\pm$0.3} & \textbf{95.0$\pm$0.2} & \textbf{77.7$\pm$0.3} & \textbf{82.4$\pm$0.3} & \textbf{94.2$\pm$0.2} & \textbf{85.8$\pm$0.2} & \textbf{77.0$\pm$0.2} & \textbf{93.9$\pm$0.2} & \textbf{85.3$\pm$0.1} & \textbf{77.0$\pm$0.2} & \textbf{84.3} \\
			\bottomrule
		\end{tabular}
}
	\label{tab:domatne}
\end{table*}

\subsection{Results and discussions}

The classification results and standard deviations on the Office-31 dataset are reported in Table \ref{tab:1}. The average classification accuracy of our proposed RLGLC is the highest among all compared methods. Also, RLGLC achieves better results than all compared methods on 5 specific transfer tasks, which are the most numerous. It is worth noting that RLGLC obtains the best results on two hard specific transfer tasks: A $\to$ D and D $\to$ A. Meanwhile, the proposed method achieves similar or even better results than the more powerful ViT-B-based baseline, i.e., TransVQA, based only on the ResNet backbone, further demonstrating its effectiveness. The classification results and standard deviations on the Office-Home dataset are shown in Table \ref{tab:2}. RLGLC obtains the highest average result among the compared methods. The specific transfer tasks of this dataset are quite challenging and RLGLC wins 8 of the 12 tasks, which is also the most numerous of all methods. Table \ref{tab:3} 
shows the classification results and standard deviations on the VisDa-2017 dataset. From the results, we can observe that RLGLC gains a improvement, i.e., the average result of RLGLC is 1.1\% higher than the second-ranked method CAN. For specific transfer tasks, RLGLC wins the most numerous (10 of 12) tasks on this dataset. Table \ref{tab:domatne} shows the the classification results and standard deviations on the DomainNet dataset. We can obtain that the average classification accuracy of the proposed RLGLC is the highest. As for the specific transfer tasks, we can observe that RLGLC wins all of the tasks, thus showing its superiority. We further compare RLGLC with previous methods on the Digits dataset, the results and standard deviations are shown in Table \ref{tab:4}. Compared with the Office-31 dataset, the size of this dataset is much larger. Again, the classification results of RLGLC exceed all compared approaches in terms of average accuracy. Also, RLGLC achieves the best performance on all of specific transfer tasks. Therefore, we can conclude that our proposed RLGLC is effective and contains compressed label-relevant information.

\subsection{Statistical test}

We further analyze the experimental results through the Friedman test \cite{friedman1937use}. The Friedman test is based on the
average accuracy ranks of different methods on the same dataset. Then, we compute the following:
\begin{equation}\label{dsdsd}
    \mathcal{X}_F^2 = \frac{{12{n_t}}}{{{m_t}\left( {{m_t} + 1} \right)}}\left[ {\sum\limits_j {R_j^2}  - \frac{{{m_t}{{\left( {{m_t} + 1} \right)}^2}}}{4}} \right],
\end{equation}
where $m_t$ represents the number of compared methods, $n_t$ represents the number of tasks, ${R_j} = \frac{1}{{{n_t}}}\sum\limits_i {r_i^j}$, and ${r_i^j}$ denotes the accuracy rank of the $j$-th method on the $i$-th task. Finally, the Friedman statistic is calculated as following:
\begin{equation}\label{dssddsd}
{\mathbbm{F}_F} = \frac{{\left( {{n_t} - 1} \right)\mathcal{X}_F^2}}{{{n_t}\left( {{m_t} - 1} \right) - \mathcal{X}_F^2}},
\end{equation}
where ${\mathbbm{F}_F}$ follows an $F$-distribution with $m_t - 1$ and $(m_t - 1)(n_t - 1)$ degrees of freedom.

Based on Equation (\ref{dsdsd}) and Equation (\ref{dssddsd}), we can obtain the accuracy rankings of different methods under different tasks (for more details, please refer to Table \ref{tab:a1}, Table \ref{tab:a2}, Table \ref{tab:a3}, Table \ref{tab:domatnade}, and Table \ref{tab:a4} in Appendix B). Then, for Table \ref{tab:1}, we obtain $\mathcal{X}_F^2  \approx  83.58$ and ${\mathbbm{F}_F} \approx  3.97$ with $(30, 180)$ degrees of freedom for the $F$-distribution. From the table of critical values for $F$-distribution, we obtain that the critical values of $F(30, 180)$ are approximately 1.46, 1.63, and 1.79 for the significance levels of ${\alpha _F} = 0.1$, ${\alpha _F} = 0.05$, and ${\alpha _F} = 0.025$, respectively. For Table \ref{tab:2}, we obtain $\mathcal{X}_F^2  \approx  254.62$ and ${\mathbbm{F}_F} \approx 31.7$ with $(27, 324)$ degrees of freedom for the $F$-distribution. From the table of critical values for the $F$-distribution, we obtain that the critical values of $F(27, 324)$ are approximately 1.46, 1.63, and 1.79 for the significance levels of ${\alpha _F} = 0.1$, ${\alpha _F} = 0.05$, and ${\alpha _F} = 0.025$, respectively. For Table \ref{tab:3}, we obtain $\mathcal{X}_F^2  =  244.69$ and ${\mathbbm{F}_F} \approx   36.56$ with $(25, 300)$ degrees of freedom for the $F$-distribution. From the table of critical values for the $F$-distribution, we obtain that the critical values of $F(25, 300)$ are approximately 1.52, 1.71, and 1.91 for the significance levels of ${\alpha _F} = 0.1$, ${\alpha _F} = 0.05$, and ${\alpha _F} = 0.025$, respectively. For Table \ref{tab:domatne}, we obtain $\mathcal{X}_F^2  \approx  249.94$ and ${\mathbbm{F}_F} \approx 83.17$ with $(22, 264)$ degrees of freedom for the $F$-distribution. From the table of critical values for the $F$-distribution, we obtain that the critical values of $F(22, 264)$ are approximately 1.52, 1.71, and 1.91 for the significance levels of ${\alpha _F} = 0.1$, ${\alpha _F} = 0.05$, and ${\alpha _F} = 0.025$, respectively. For Table \ref{tab:4}, we obtain $\mathcal{X}_F^2  \approx  69.77$ and ${\mathbbm{F}_F} \approx 11.48$ with $(22, 66)$ degrees of freedom for the $F$-distribution. From the table of critical values for the $F$-distribution, we obtain that the critical values of $F(22, 66)$ are approximately 1.56, 1.77, and 1.98 for the significance levels of ${\alpha _F} = 0.1$, ${\alpha _F} = 0.05$, and ${\alpha _F} = 0.025$, respectively. These results illustrate that there is a significant difference between these compared methods since all the real values of ${\alpha _F}$ are much larger than the critical values. Because the average ranking of the proposed RLGLC is the lowest, RLGLC is shown to be the most effective.

\subsection{Generality}

To verify the generality of the proposed RLGLC, we have conducted experiments on two additional and distinct downstream tasks including: object detection and semantic segmentation. Specifically, the datasets used in our experiments for the object detection task are PASCAL VOC \cite{everingham2010pascal}, Clipart \cite{inoue2018cross}, and Watercolor \cite{inoue2018cross}. For the semantic segmentation task, our experiments are based on the GTA5 dataset \cite{richter2016playing} and the Cityscapes dataset \cite{cordts2016cityscapes}, and conduct more complex transfer learning comparison experiments on SYNTHIA \cite{ros2016synthia} and Cityscapes dataset \cite{cordts2016cityscapes} following \cite{dong2021and}. In these two tasks, we compare our proposed RLGLC with state-of-the-art unsupervised domain adaptation methods including: CaCo \cite{huang2022category}, MGA \cite{zhou2022multi}, KL \cite{nguyen2021kl}, and GRDA \cite{xu2022graph}. The transfer learning results for the semantic segmentation task are presented in Table \ref{tab:44} and Table \ref{tab:add_tpami_6}, while Table \ref{tab:45} summarizes the outcomes for object detection tasks. For the semantic segmentation task, the experimental settings follow those evaluated in CaCo (Table \ref{tab:44} and Table \ref{tab:add_tpami_6}). Similarly, for the object detection tasks (Table \ref{tab:45}), we adopt the experimental settings used in MGA. For the final average results, whether for detection tasks or segmentation tasks, our proposed RLGLC consistently attains the best performance. For instance, in Table \ref{tab:add_tpami_6}, RLGLC surpasses all compared baselines by at least 1.5\%. Regarding individual transfer tasks, Table \ref{tab:add_tpami_6} shows that RLGLC achieves the best performance on 18 out of 19 tasks, and Table \ref{tab:45} indicates that RLGLC maintains an advantage across every transfer task. These findings highlight the effectiveness of RLGLC in diverse downstream tasks and underscore the generalizability of the proposed theory to various tasks and methods.


\begin{figure}[th]
    \centering
    \includegraphics[width=0.9\linewidth]{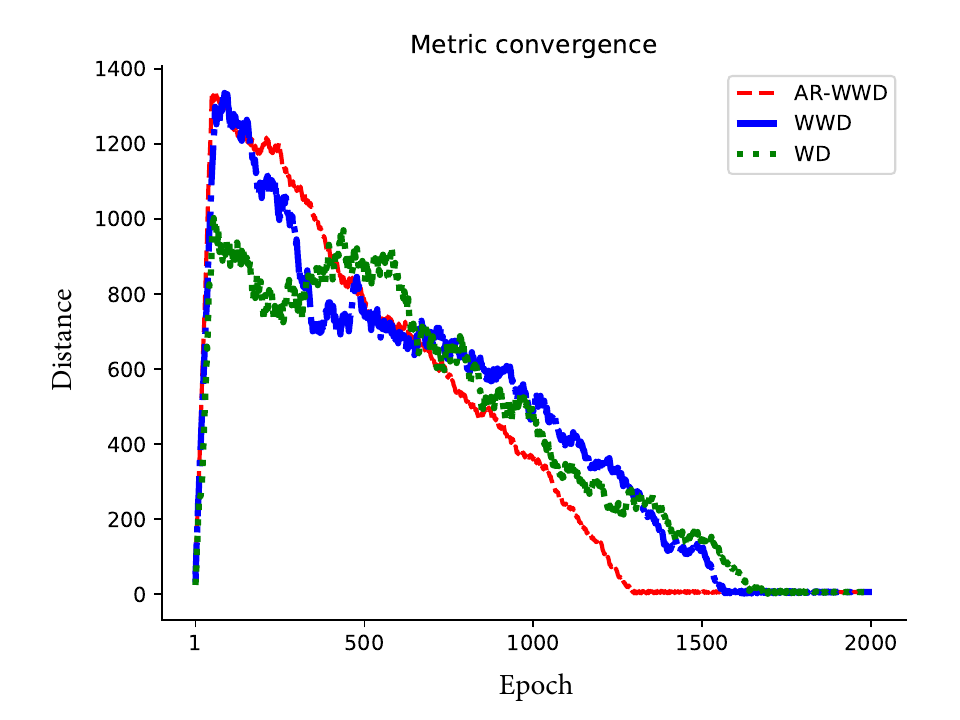}
    \caption{Distances on ${\rm{P}} \to C$ task of Office-home dataset.}
    \label{fig:g}
\end{figure}

\begin{table}[t]
	\caption{Accuracy (\%) and standard deviation on the Digits dataset for unsupervised domain adaptation by using ResNet-50 as the backbone. The best results are highlighted in \textbf{bold}.}
		\centering
			\begin{tabular}{l|ccc|c}
                \toprule
				Method & M$\to$U & U$\to$M & S$\to$M & Avg \\
				\midrule
				DANN & 90.4 $\pm$ 0.1 & 94.7 $\pm$ 0.2 & 84.2 $\pm$ 0.1 & 89.8 \\
				ADDA & 89.4 $\pm$ 0.2  & 90.1 $\pm$ 0.3  & 86.3 $\pm$ 0.1  & 88.6 \\
				UNIT & 96.0 $\pm$ 0.1  & 93.6 $\pm$ 0.2  & 90.5 $\pm$ 0.2  & 93.4 \\
				CyCADA & 95.6 $\pm$ 0.3  & 96.5 $\pm$ 0.1  & 90.4 $\pm$ 0.1  & 94.2 \\
				CDAN & 93.9 $\pm$ 0.2  & 96.9 $\pm$ 0.2 & 88.5 $\pm$ 0.1 & 93.1 \\
				CDAN+E & 95.6 $\pm$ 0.1 & 98.0 $\pm$ 0.2 & 89.2 $\pm$ 0.2 & 94.3 \\
				BSP+CDAN & 95.0 $\pm$ 0.2 & 98.1 $\pm$ 0.1 & 92.1 $\pm$ 0.1 & 95.1 \\
				ETD & 96.4 $\pm$ 0.1 & 96.3 $\pm$ 0.1 & 97.9 $\pm$ 0.2 & 96.9 \\
				
				ACTIR & 95.7 $\pm$ 0.2 & 97.1 $\pm$ 0.2 & 94.6 $\pm$ 0.2 & 95.8 \\
				
				TCM & 96.1 $\pm$ 0.2 & 96.7 $\pm$ 0.2 & 95.2 $\pm$ 0.3 & 96.0 \\
				
				ERM & 96.8 $\pm$ 0.1 & 96.7 $\pm$ 0.1 & 94.9 $\pm$ 0.3 & 96.1 \\
				
				ICDA & 94.9 $\pm$ 0.2 & 95.6 $\pm$ 0.1 & 95.1 $\pm$ 0.1 & 95.2 \\
				
				iMSDA & 95.7 $\pm$ 0.2 & 97.6 $\pm$ 0.2 & 93.4 $\pm$ 0.2 & 95.6 \\
				
				UniOT & 94.4 $\pm$ 0.1 & 97.7 $\pm$ 0.1 & 94.2 $\pm$ 0.1 & 95.4 \\
PPOT & 96.4 $\pm$ 0.2 & 98.3 $\pm$ 0.3 & 96.9 $\pm$ 0.1 & 97.1 \\
SSRT+GH++ & 96.4 $\pm$ 0.1 & 98.0 $\pm$ 0.3 & 96.2 $\pm$ 0.2 & 96.9 \\
PDA & 96.0 $\pm$ 0.1 & 97.6 $\pm$ 0.2 & 95.0 $\pm$ 0.1 & 96.2 \\
CPH & 97.8 $\pm$ 0.1 & 98.9 $\pm$ 0.2 & 97.4 $\pm$ 0.2 & 97.9 \\
SAMB-D & 97.2 $\pm$ 0.1 & 98.4 $\pm$ 0.2 & 97.0 $\pm$ 0.3 & 97.5 \\
TCRL & 97.5 $\pm$ 0.2 & 98.7 $\pm$ 0.4 & 97.9 $\pm$ 0.3 & 98.0 \\
				
				\midrule
				\textbf{RLGC} & 97.1 $\pm$ 0.1 & 98.7 $\pm$ 0.2 & 95.9 $\pm$ 0.2 & 97.2 \\
				\textbf{RLGC*} & 97.9 $\pm$ 0.2 & 98.9 $\pm$ 0.1 & 96.8 $\pm$ 0.1 & 97.9 \\
				\textbf{RLGLC} & \textbf{98.1 $\pm$ 0.2} & \textbf{99.6 $\pm$ 0.1} & \textbf{98.6 $\pm$ 0.2} & \textbf{98.8} \\
				\bottomrule
			\end{tabular}
	
	\label{tab:4}
\end{table}

\begin{figure*}[ht]
    \centering
    \begin{minipage}{0.33\textwidth}
        \centering
        \includegraphics[width=\linewidth]{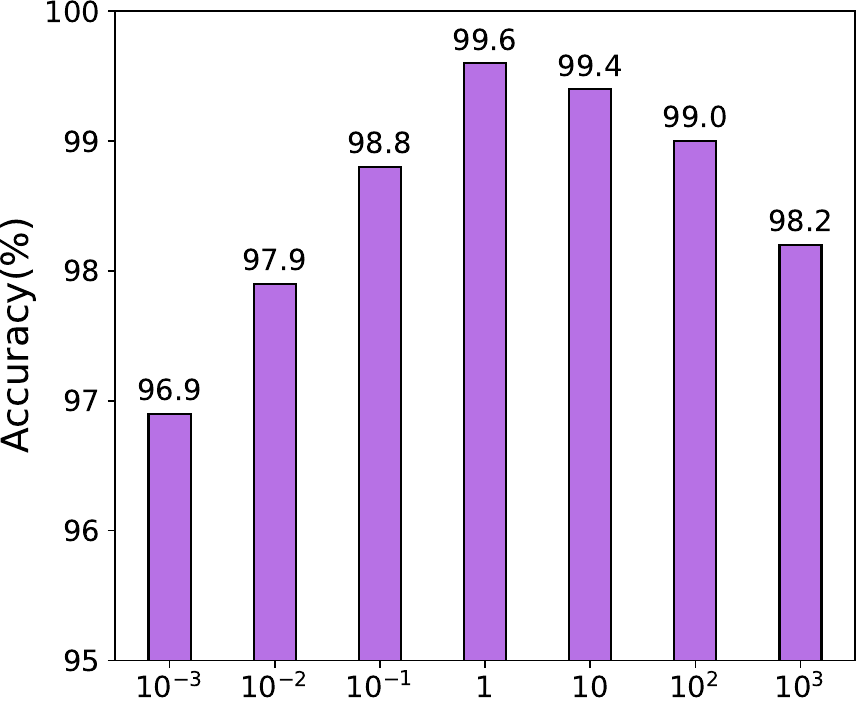}
        \subcaption{The influence of $\alpha$}
        \label{fig: sub_figure1}
    \end{minipage}\hfill
    \begin{minipage}{0.33\textwidth}
        \centering
        \includegraphics[width=\linewidth]{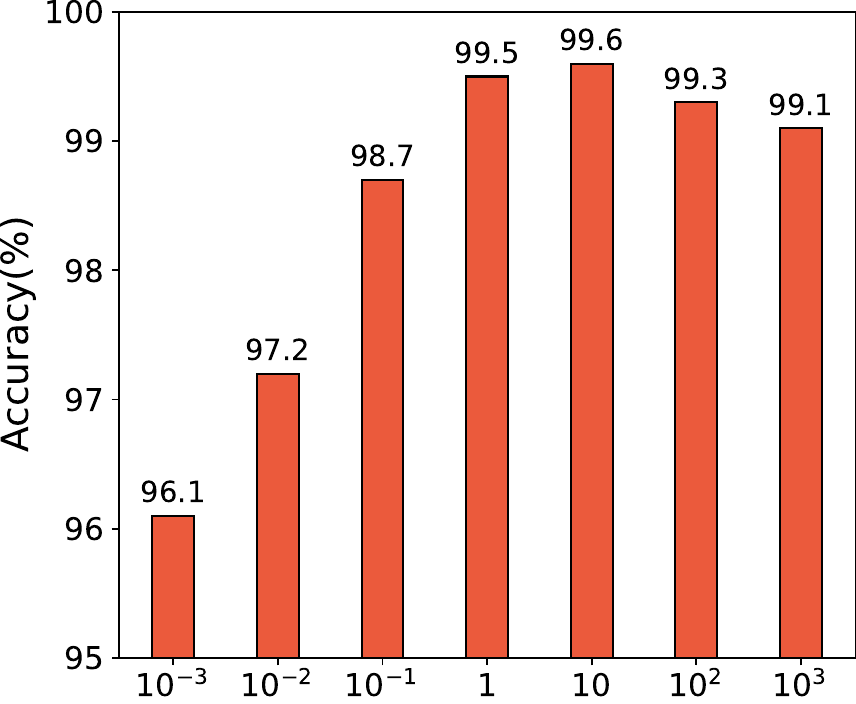}
        \subcaption{The influence of $\lambda$}
        \label{fig: sub_figure2}
    \end{minipage}\hfill
    \begin{minipage}{0.33\textwidth}
        \centering
        \includegraphics[width=\linewidth]{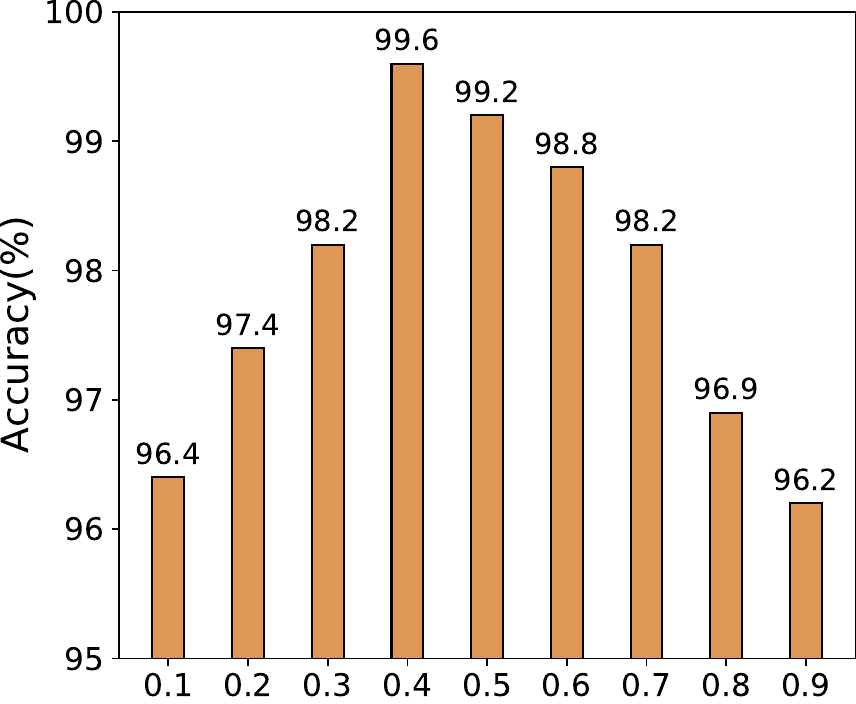}
        \subcaption{The influence of $\beta$}
        \label{fig: sub_figure3}
    \end{minipage}
    \caption{Ablation study on the influence of hyperparameters. We report the results for three hyperparameters, e.g., $\alpha$, $\lambda$, and $\beta$.}
    \label{fig: fig1}
\end{figure*}

\begin{table*}[!ht]
	
	\caption{Results on semantic segmentation task GTA5 $\to$ Cityscapes for unsupervised domain adaptation by using DeepLab-V2 \cite{chen2017deeplab} with ResNet101 as the segmentation backbone. The best results are highlighted in \textbf{bold}.}
    \centering
	\resizebox{\textwidth}{!}{
		
			\begin{tabular}{l|ccccccccccccccccccc|c}
				\toprule
				Method & Road & SW & Build & Wall & Fence & Pole & TL & TS & Veg & Terrain & Sky & PR & Rider & Car & Truck & Bus & Train & Motor & Bike & mIoU \\
\midrule
LtA      & 86.5 & 36.0 & 79.9 & 23.4 & 23.3 & 23.9 & 35.2 & 14.8 & 83.4 & 33.3 & 75.6 & 35.4 & 3.9  & 30.1 & 28.1 & 42.4 & 25.0 & 30.5 & 28.1 & 42.4 \\
        ADV      & 89.4 & 33.1 & 81.0 & 26.6 & 26.8 & 27.2 & 33.5 & 24.7 & 83.9 & 36.7 & 78.6 & 44.5 & 1.7  & 31.6 & 32.5 & 45.5 & 26.5 & 32.6 & 33.0 & 45.5 \\
        SWLS     & 92.7 & 48.0 & 78.8 & 25.7 & 27.2 & 36.0 & 42.2 & 45.3 & 80.6 & 31.6 & 66.0 & 45.6 & 16.8 & 34.7 & 47.2 & 47.2 & 28.2 & 34.8 & 36.5 & 47.2 \\
        SSF      & 90.3 & 38.9 & 81.7 & 24.8 & 22.9 & 30.5 & 37.0 & 21.2 & 84.8 & 38.5 & 76.9 & 38.1 & 5.9  & 28.6 & 36.9 & 45.4 & 25.7 & 31.0 & 36.9 & 45.4 \\
        PyCDA    & 90.5 & 34.6 & 84.6 & 32.4 & 28.7 & 34.6 & 34.5 & 21.5 & 85.6 & 27.9 & 85.6 & 46.1 & 18.0 & 22.9 & 39.9 & 48.6 & 31.0 & 32.2 & 39.9 & 48.6 \\
        CrCDA    & 92.4 & 55.3 & 83.5 & 31.2 & 29.1 & 32.5 & 33.2 & 35.6 & 83.5 & 34.6 & 84.4 & 46.1 & 2.1  & 31.1 & 32.7 & 48.6 & 29.4 & 34.0 & 41.2 & 48.6 \\
        CSCL     & 89.6 & 50.4 & 81.0 & 35.6 & 26.9 & 31.1 & 37.3 & 35.1 & 83.5 & 40.1 & 85.4 & 47.3 & 0.5  & 34.5 & 33.7 & 48.6 & 31.6 & 33.8 & 39.7 & 48.6 \\
        LSE      & 90.2 & 40.2 & 81.0 & 31.9 & 26.4 & 32.6 & 38.7 & 37.5 & 81.0 & 34.2 & 84.6 & 45.9 & 6.7  & 29.1 & 30.6 & 47.5 & 28.3 & 32.1 & 38.2 & 47.5 \\
        FADA     & 92.5 & 47.5 & 85.1 & 37.6 & \textbf{32.8} & 33.4 & 33.8 & 18.4 & 85.3 & 37.3 & \textbf{87.5} & 49.2 & 1.6  & 34.9 & 39.5 & 49.2 & 31.6 & 35.1 & 42.3 & 49.2 \\
        TPLD     & 83.2 & 46.3 & 74.9 & 29.8 & 21.3 & 33.1 & 36.0 & 24.2 & 86.7 & 43.2 & 87.1 & 36.9 & 0.0  & 29.7 & 40.0 & 44.7 & 30.2 & 33.2 & 40.6 & 44.7 \\
        SS-UDA   & 90.6 & 37.1 & 82.1 & 30.1 & 19.1 & 29.5 & 32.4 & 20.6 & 85.7 & 40.5 & 79.7 & 48.3 & 0.0  & 30.2 & 35.8 & 46.3 & 28.7 & 34.2 & 38.7 & 46.3 \\
        DTST     & 90.6 & 44.7 & 84.8 & 34.3 & 28.7 & 31.6 & 35.0 & 37.6 & 84.7 & 43.3 & 85.3 & 48.5 & 1.9  & 30.4 & 39.0 & 49.2 & 31.4 & 35.9 & 40.3 & 49.2 \\
        LTIR     & 92.9 & 55.0 & 85.3 & 34.2 & 31.1 & 34.9 & 40.7 & 31.4 & 85.2 & 40.1 & 87.1 & 42.6 & 0.3  & 36.4 & 46.1 & 50.2 & 33.0 & 38.5 & 43.2 & 50.2 \\
        PIT      & 87.5 & 43.4 & 78.8 & 31.2 & 30.2 & 36.3 & 39.9 & 42.0 & 79.2 & 37.1 & 79.3 & 46.0 & 25.7 & 23.5 & 49.9 & 50.6 & 34.2 & \textbf{41.6} & \textbf{44.9} & 50.6 \\
        ELS-DA   & 89.4 & 50.1 & 83.9 & 35.9 & 27.0 & 32.4 & 38.6 & 37.5 & 84.5 & 39.6 & 85.7 & 50.4 & 0.3  & 33.6 & 32.1 & 49.2 & 32.4 & 37.3 & 42.8 & 49.2 \\
				CaCo & 91.9 & 54.3 & 82.7 & 31.7 & 25.0 & 38.1 & \textbf{46.7} & 39.2 & 82.6 & 39.7 & 76.2 & 63.5 & 23.6 & \textbf{85.1} & 38.6 & 47.8 & 10.3 & 23.4 & 35.1 & 49.2 \\
				MGA & 92.1 & 53.3 & 83.4 & 32.1 & 25.2 & 34.3 & 42.9 & 38.2 & 85.6 & 41.3 & 79.1 & 61.2 & 27.1 & 83.2 & 38.1 & 48.9 & 11.7 & 25.2 & 34.7 & 50.1  \\
				KL & 90.1 & 52.7 & 84.2 & 33.6 & 27.0 & 33.9 & 43.5 & 32.8 & 84.7 & 42.8 & 81.1 & 65.1 & 25.2 & 82.9 & 36.7 & 49.9 & 12.3 & 22.4 & 38.9 & 49.5 \\
				GRDA & 93.1 & 51.3 & 85.2 & 33.6 & 27.2 & \textbf{38.9} & 43.1 & 40.1 & 84.1 & 36.7 & 78.2 & 66.5 & 25.6 & 82.1 & 35.6 & 50.7 & 14.1 & 25.2 & 36.1 & 49.9 \\
                PPOT & 94.2 & 54.2 & 83.2 & 34.5 & 24.0 & 33.2 & 50.0 & 42.1 & 83.5 & 41.2 & 73.9 & 62.8 & 28.0 & 80.1 & 35.9 & 50.2 & 14.1 & 24.7 & 35.6 & 44.5 \\
                PDA & 93.0 & 50.2 & 85.1 & 33.6 & 25.1 & 36.0 & 41.2 & 40.2 & 82.1 & 42.5 & 77.6 & 62.5 & 27.6 & 82.0 & 34.8 & 50.3 & 11.9 & 23.2 & 33.5 & 49.1 \\
                CPH & 94.5 & 50.2 & \textbf{86.0} & 32.0 & 25.6 & 38.8 & 43.2 & 40.1 & 85.0 & 42.8 & 36.7 & \textbf{79.1} & 65.9 & 28.1 & \textbf{84.9} & 36.2 & \textbf{50.1} & 14.5 & 25.9 & 50.0 \\
			    \midrule
                \textbf{RLGLC} & \textbf{96.4} & \textbf{56.7} & 83.1 & \textbf{36.7} & 23.6 & 37.6 & 43.2 & \textbf{44.9} & \textbf{87.9} & \textbf{45.7} & 77.1 & 63.8 & \textbf{30.2} & 83.2 & 41.3 & \textbf{53.4} & 14.1 & 26.7 & 33.7 & \textbf{51.5} \\
				\bottomrule
			\end{tabular}
	}
	\label{tab:44}
\end{table*}

\begin{table*}[!ht]
    \caption{Results on semantic segmentation task SYNTHIA $\to$ Cityscapes for unsupervised domain adaptation by using DeepLab-V2 \cite{chen2017deeplab} with ResNet101 as the segmentation backbone. The best results are highlighted in \textbf{bold}.}
    \centering
    \label{tab:add_tpami_6}
    \resizebox{\textwidth}{!}{
    \begin{tabular}{l|ccccccccccccccccccc|c}
        \toprule
        Method & Road & SW & Build & Wall & Fence & Pole & TL & TS & Veg & Terrain & Sky & PR & Rider & Car & Truck & Bus & Train & Motor & Bike & mIoU \\
        \midrule
        DCAN    & 81.5 & 33.4 & 72.4 & 7.9 & 0.2 & 20.0 & 8.6 & 10.5 & 71.0 & 25.0 & 68.7 & 51.5 & 18.7 & 75.3 & 42.5 & 22.7 & 12.8 & 15.5 & 28.1 & 36.5\\
        CBST    & 53.6 & 23.7 & 75.0 & 12.5 & 0.3 & 36.4 & 23.5 & 26.3 & 84.8 & 31.3 & 74.7 & 67.2 & 17.5 & 84.5 & 45.6 & 28.4 & 15.2 & 21.6 & 28.1 & 42.5\\
        ADV     & 85.6 & 42.2 & 79.7 & 8.7 & 0.4 & 25.9 & 5.4 & 8.1 & 80.4 & 29.5 & 84.1 & 57.9 & 23.8 & 73.3 & 48.1 & 36.4 & 14.2 & 24.7 & 33.0 & 41.2\\
        SWLS    & 68.4 & 30.1 & 74.2 & 21.5 & 0.4 & 29.2 & 29.3 & 25.1 & 81.5 & 27.2 & 63.1 & 63.1 & 16.4 & 75.6 & 13.5 & 44.7 & 26.1 & 19.3 & 51.9 & 36.5 \\
        MSL     & 82.9 & 40.7 & 80.3 & 10.2 & 0.8 & 25.8 & 12.8 & 18.2 & 82.5 & 34.1 & 53.1 & 53.1 & 18.0 & 79.0 & 31.4 & 48.9 & 10.4 & 21.5 & 35.6 & 41.4 \\
        PyCDA   & 75.5 & 30.9 & 83.3 & 20.8 & 0.7 & 32.7 & 27.3 & 33.5 & 85.0 & 36.5 & 64.1 & 64.1 & 25.4 & 85.0 & 45.2 & 51.2 & 32.0 & 22.4 & 32.1 & 46.7 \\
        CrCDA   & 86.2 & 44.9 & 79.5 & 8.3 & 0.7 & 27.8 & 9.4 & 11.8 & 78.6 & 31.7 & 57.2 & 57.2 & 26.1 & 76.8 & 39.9 & 49.3 & 21.5 & 25.3 & 32.1 & 42.9 \\
        CSCL    & 80.2 & 41.1 & 78.9 & 23.6 & 0.6 & 31.0 & 27.1 & 29.5 & 82.5 & 29.8 & 62.1 & 62.1 & 26.8 & 81.5 & 37.2 & 50.7 & 27.3 & 23.5 & 42.9 & 47.2 \\
        LSE     & 82.9 & 43.1 & 78.1 & 9.3 & 0.6 & 28.2 & 9.1 & 14.4 & 77.0 & 28.1 & 58.1 & 58.1 & 25.9 & 71.9 & 38.0 & 48.0 & 29.4 & 26.2 & 31.2 & 42.6 \\
        FADA    & 84.5 & 40.1 & 83.1 & 4.8 & 0.0 & 34.3 & 20.1 & 27.2 & 84.8 & 36.5 & 53.5 & 53.5 & 22.6 & 85.4 & 43.7 & 52.0 & 26.8 & 27.8 & 25.6 & 45.2 \\
        SS-UDA      & 84.3 & 37.7 & 79.5  & 5.3  & 0.4   & 24.9 & 9.2  & 8.4  & 80.8 & 34.1    & 57.2 & 23.0 & 19.8  & 78.0  & 24.6  & 28.3  & 12.0  & 20.1  & 36.5 & 41.7 \\
        PIT         & 83.1 & 27.6 & 78.6  & 8.9  & 0.3   & 21.8 & 26.4 & 33.8 & 76.4 & 27.6    & 31.3 & 31.4 & 15.2  & 76.1  & 18.4  & 26.7  & 10.4  & 19.7  & 31.3 & 44.0 \\
        ELS-DA      & 81.7 & 43.8 & 80.1  & 22.3 & 0.5   & 29.4 & 28.6 & 21.2 & 83.4 & 33.7    & 26.3 & 48.4 & 20.4  & 79.2  & 26.7  & 30.1  & 13.7  & 24.1  & 40.2 & 47.2 \\
        PPOT & 91.2 & 43.5 & 82.1 & 22.9 & 0.7 & 32.9 & 31.2 & 29.0 & 83.1 & 36.5 & 85.4 & 65.2 & 30.0 & 83.9 & 45.2 & 55.1 & 32.2 & 27.6 & 47.8 & 47.0 \\
        PDA & 90.1 & 43.5 & 82.1 & 25.1 & 0.6 & 33.8 & 30.2 & 27.6 & 84.6 & 39.1 & 87.0 & 63.2 & 30.0 & 85.1 & 44.9 & 55.2 & 30.1 & 29.7 & 47.2 & 48.3 \\
        CPH & 92.1 & 43.2 & 84.9 & 25.1 & 0.8 & 35.0 & 30.2 & 29.8 & 84.1 & 38.6 & 85.7 & 64.6 & 30.2 & 86.3 & 47.1 & 55.2 & 30.7 & 29.0 & \textbf{53.1} & 50.0 \\
        \midrule
\textbf{RLGLC} & \textbf{93.8} & \textbf{45.6} & \textbf{86.2} & \textbf{27.7} & \textbf{1.2} & \textbf{36.3} & \textbf{34.7} & \textbf{32.2} & \textbf{87.1} & \textbf{41.4} & \textbf{89.6} & \textbf{67.4} & \textbf{32.6} & \textbf{87.6} & \textbf{49.1} & \textbf{58.1} & \textbf{33.5} & \textbf{30.9} & 46.9 & \textbf{51.7} \\
        
        \bottomrule
    \end{tabular}}
\end{table*}

\begin{table}[!ht]
	\caption{mAP on two object detection tasks including PASCAL VOC $\to$ Clipart (P$\to$C) and PASCAL VOC $\to$ Watercolor (P$\to$W) for unsupervised domain adaptation by using ResNet-101 and Faster-RCNN as the backbone}
	\centering
			\begin{tabular}{l|cc|c}
				\toprule
				Method & P $\to$ C & P $\to$ W & Avg \\
				\midrule
				CaCo & 43.9 & 58.7 & 51.3 \\
				MGA & 44.8 & 58.1 & 51.5 \\
				KL & 44.1 & 57.4 & 50.8\\
				GRDA & 42.9 & 54.7 & 50.8 \\
				\midrule
				\textbf{RLGLC} & \textbf{46.2} & \textbf{60.3} & \textbf{53.3} \\
				\bottomrule
			\end{tabular}
	\label{tab:45}
\end{table}

\begin{table}[ht]
\centering
\caption{Accuracy (\%) for Office-31 D$\rightarrow$A (ResNet-50)}
\label{tab:accuracy_curve_1}
\begin{tabular}{cccc}
\toprule
\textbf{Epoch} & \textbf{RLGC} & \textbf{RLGC*} & \textbf{RLGLC} \\
\midrule
2000 & 66.2 & 70.1 & 70.7 \\
2500 & 69.0 & 71.6 & 74.2 \\
3000 & 73.9 & 74.2 & 77.6 \\
3500 & 74.7 & 77.6 & 80.0 \\
4000 & 74.9 & 77.4 & 80.1 \\
\bottomrule
\end{tabular}
\end{table}

\begin{table}[ht]
\centering
\caption{Accuracy (\%) for Digits M$\rightarrow$U (ResNet-50)}
\label{tab:accuracy_curve_2}
\begin{tabular}{cccc}
\toprule
\textbf{Epoch} & \textbf{RLGC} & \textbf{RLGC*} & \textbf{RLGLC} \\
\midrule
2000 & 92.8 & 93.0 & 92.3 \\
2500 & 94.6 & 95.7 & 96.0 \\
3000 & 96.0 & 97.1 & 97.3 \\
3500 & 97.2 & 97.9 & 98.0 \\
4000 & 97.1 & 97.8 & 98.1 \\
\bottomrule
\end{tabular}
\end{table}

\subsection{Ablation study and parameter sensitivity}

{\bf Ablation study on the component modules.}
RLGLC is mainly composed of two parts, including the global consistency module and the local consistency module. To evaluate the effects of the two modules, we construct a simplified version of RLGLC by eliminating the local consistency module and denote it by RLGC*. From Table \ref{tab:1}, \ref{tab:2}, \ref{tab:3}, \ref{tab:domatne}, and \ref{tab:4}, we can observe that RLGC* obtains comparable classification results with other methods on both specific transfer tasks and average classification accuracy, e.g., in Table \ref{tab:2}, \ref{tab:domatne}, and \ref{tab:4}, the performance of RLGC* is better than that of all the baselines compared. Also for the specific transfer task R$\to$P in Table \ref{tab:1}, we can obtain that the performance of RLGC* is better than all compared baselines. This demonstrates the effectiveness of the global consistency module. Comparing RLGLC with RLGC*, we observe that RLGLC outperforms RLGC* on all datasets, both in terms of specific transfer tasks and final average results, which demonstrates that the proposed local consistency module can indeed improve the discriminativeness of the learned features in the target domain. Based on the specific transfer task D$\to$A of the Office31 dataset and the specific transfer task M$\to$U of the Digits dataset, we record the accuracy curves of RLGC* and RLGLC during the training process in Table \ref{tab:accuracy_curve_1} and Table \ref{tab:accuracy_curve_2}. When epoch$>$2500, the accuracy of RLGLC is greater than RLGC*. This further indicates that the local consistency module is effective in making the learned representations of target domain samples more discriminative.

{\bf Ablation study on the proposed discrepancy metric and the inequality constraint.}
Based on the specific transfer task P$\to$C of the Office-home dataset, we compare the trends of three different metrics during the training process, including the proposed AR-WWD, the Wasserstein of Wasserstein distance (WWD) which is obtained by setting the hyperparameter $\beta$ in the AR-WWD to 0, and the Wasserstein distance (WD).   The differences in metrics are shown in Figure \ref{fig:g}, where the abscissa indicates the number of training epochs and the ordinate indicates the distribution distance. As we can see, the training curve of the proposed AR-WWD is the most stable and also the fastest convergent. The Wasserstein distance is the most oscillating and converges the slowest. These observations demonstrate the effectiveness of the proposed AR-WWD. Also, comparing AR-WWD with WWD, we conclude that relaxing the exact aligning constraint to a loose one can better align the distributions. Comparing WWD with WD, we conclude that using the WD as the ground metric in another nested WD is more effective in comparing image data. Also, we denote the method that uses WWD as the measurement of distributional diversity as RLGC. Comparing RLGC with RLGC*, the only difference is located in the measurement of distributional diversity. The baselines that related to WD include WDGRL and SWD. The experimental results of RLGC are shown in Table \ref{tab:1}, \ref{tab:2}, \ref{tab:3}, \ref{tab:domatne}, and \ref{tab:4}. We can observe that on all datasets, both the specific migration task and the final average result, the performance of the methods related to WD is basically lower than that of RLGC, which is basically lower than the character of RLGC*, this futher demonstrates the effectiveness of the proposed AR-WWD.

{\bf Ablation study on the proposed regularization item.}
Based on the specific transfer task U$\to$M of the Digits dataset, we evaluate the effects of the proposed regularization item. 
As shown in Figure \ref{fig: sub_figure1}, the classification accuracy of RLGLC in case $\alpha=10^{-1}$ is obviously lower than the classification accuracy of RLGLC in case $\alpha=1$. This demonstrates that the regularization item is important for improving the performance of the proposed model.

{\bf Ablation study on the local consistency module.} Compared with the existing UDA framework in Equation (\ref{asd}), our proposed framework in Equation (\ref{Eq:udbsasdasf}) adds a conditional mutual information term to enhance the discriminability of target-domain feature representations. To assess the standalone effectiveness of this term in traditional UDA frameworks, we designed experiments on the VisDA-2017 dataset using DANN, SWD, and CAN, and on the DomainNet dataset using DANN, SWD, and MDD. We integrated our Local Consistency Module (LCM) into each method, resulting in “+ LM” versions shown in Table \ref{tab:3} and Table \ref{tab:domatne}. Across both datasets, adding the LCM consistently improves performance by at least 1\%. On VisDA-2017, for example, SWD+LM achieves a 1.2\% improvement over SWD. Additionally, within specific transfer tasks, methods incorporating “+ LM” exhibit further gains. On DomainNet, MDD+LM outperforms MDD by 1.5\% for the S $\to$ R transfer task. These results demonstrate the efficacy of our proposed framework and confirm that the Local Consistency Module can independently enhance performance.

{\bf Influence of hyper-parameters.}
Based on the specific transfer task U$\to$M, we evaluate the effects of the hyper-parameter $\alpha $, which balances the influence of the regularization item in the RLGLC, and the hyper-parameters $\beta$ and $\lambda $, which controls the effectiveness of ${W_{1,{W_{2,{d_\Omega }}}}}\left( {P_s^\varphi\left( Z \right) ,P_t^\varphi \left( Z \right)} \right)$. For $\alpha $, we first fix $\beta=0.4$ and $\lambda=10$, and then select $\alpha $ from the range of $\{ {10^{ - 3}},{10^{ - 2}},{10^{ - 1}},1, 10, {10^2},{10^3}\} $. The results are shown in Figure \ref{fig: sub_figure1}. We can see that our method obtains the best accuracy when $\alpha=1$. 
We then fix $\alpha=1$ and $\beta=0.4$, and then select $\lambda $ from the range of $\{ {10^{ - 3}},{10^{ - 2}},{10^{ - 1}},1, 10, {10^2},{10^3}\} $. From the results in Figure \ref{fig: sub_figure2}, we can observe that the best accuracy is achieved when $\lambda=10$.
Next, we fix $\alpha=1$ and $\lambda=10$, and then select $\beta $ from the range of $\left\{ 0.1, 0.2, 0.3,...,0.9 \right\}$. As we can see from Figure \ref{fig: sub_figure3}, our method obtains the best results when $\beta=0.4$. This suggests that constraining the target domain’s distribution within the source domain’s distribution helps reduce label-irrelevant information, demonstrating the effectiveness of the proposed UDA framework. Notably, when \(\beta = 0.4\), accuracy is the lowest, indicating that overly relaxed constraints lead to the loss of task-relevant information.

\section{Conclusion}
In this paper, we revisit adversarial-based representation learning for UDA from an information-theoretic perspective and show that aligning feature distributions and minimizing source-domain risk alone only ensures the transferability of target-domain features. To preserve their discriminative power, an additional loss term tailored to target-domain data is necessary. Motivated by these findings, we propose a novel adversarial-based representation learning framework that explicitly enforces both transferability and discriminability. Our framework is instantiated as RLGLC. A global  consistency module, AR-WWD, alleviates class imbalance and dimension-insensitivity issues in distribution alignment, while a local consistency module enhances target-domain feature discriminability via conditional mutual information. We theoretically prove that our approach tightens the upper bound of expected risk on the target domain, bridging the gap between theory and practice. Empirical results on multiple benchmark datasets demonstrate that RLGLC consistently outperforms state-of-the-art methods, validating the importance of explicitly ensuring transferability and discriminability in adversarial-based UDA.

\ifCLASSOPTIONcaptionsoff
  \newpage
\fi

\bibliographystyle{IEEEtran}
\bibliography{rlglc}

\begin{thebibliography}{100}
\providecommand{\url}[1]{#1}
\csname url@samestyle\endcsname
\providecommand{\newblock}{\relax}
\providecommand{\bibinfo}[2]{#2}
\providecommand{\BIBentrySTDinterwordspacing}{\spaceskip=0pt\relax}
\providecommand{\BIBentryALTinterwordstretchfactor}{4}
\providecommand{\BIBentryALTinterwordspacing}{\spaceskip=\fontdimen2\font plus
\BIBentryALTinterwordstretchfactor\fontdimen3\font minus
  \fontdimen4\font\relax}
\providecommand{\BIBforeignlanguage}[2]{{%
\expandafter\ifx\csname l@#1\endcsname\relax
\typeout{** WARNING: IEEEtran.bst: No hyphenation pattern has been}%
\typeout{** loaded for the language `#1'. Using the pattern for}%
\typeout{** the default language instead.}%
\else
\language=\csname l@#1\endcsname
\fi
#2}}
\providecommand{\BIBdecl}{\relax}
\BIBdecl

\bibitem{ganin2016domain}
Y.~Ganin, E.~Ustinova, H.~Ajakan, P.~Germain, H.~Larochelle, F.~Laviolette,
  M.~Marchand, and V.~Lempitsky, ``Domain-adversarial training of neural
  networks,'' \emph{The journal of machine learning research}, vol.~17, no.~1,
  pp. 2096--2030, 2016.

\bibitem{zhang2020unsupervised}
Y.~Zhang, B.~Deng, H.~Tang, L.~Zhang, and K.~Jia, ``Unsupervised multi-class
  domain adaptation: Theory, algorithms, and practice,'' \emph{IEEE
  Transactions on Pattern Analysis and Machine Intelligence}, 2020.

\bibitem{chang2022unified}
W.~Chang, Y.~Shi, H.~D. Tuan, and J.~Wang, ``Unified optimal transport
  framework for universal domain adaptation,'' \emph{Conference on Neural
  Information Processing Systems}, 2022.

\bibitem{sun2024transvqa}
Y.~Sun, W.~Dong, X.~Li, L.~Dong, G.~Shi, and X.~Xie, ``Transvqa: Transferable
  vector quantization alignment for unsupervised domain adaptation,''
  \emph{IEEE Transactions on Image Processing}, 2024.

\bibitem{gao2024learning}
J.~Gao, X.~Ma, and C.~Xu, ``Learning transferable conceptual prototypes for
  interpretable unsupervised domain adaptation,'' \emph{IEEE Transactions on
  Image Processing}, 2024.

\bibitem{shen2018wasserstein}
J.~Shen, Y.~Qu, W.~Zhang, and Y.~Yu, ``Wasserstein distance guided
  representation learning for domain adaptation,'' in \emph{Proceedings of the
  AAAI conference on artificial intelligence}, vol.~32, no.~1, 2018.

\bibitem{arjovsky2017wasserstein}
M.~Arjovsky, S.~Chintala, and L.~Bottou, ``Wasserstein generative adversarial
  networks,'' in \emph{International conference on machine learning}.\hskip 1em
  plus 0.5em minus 0.4em\relax PMLR, 2017, pp. 214--223.

\bibitem{zhang2019bridging}
Y.~Zhang, T.~Liu, M.~Long, and M.~Jordan, ``Bridging theory and algorithm for
  domain adaptation,'' in \emph{International Conference on Machine
  Learning}.\hskip 1em plus 0.5em minus 0.4em\relax PMLR, 2019, pp. 7404--7413.

\bibitem{wang2024probability}
Y.~Wang, C.-X. Ren, Y.-M. Zhai, Y.-W. Luo, and H.~Yan, ``Probability-polarized
  optimal transport for unsupervised domain adaptation,'' in \emph{Proceedings
  of the AAAI Conference on Artificial Intelligence}, vol.~38, no.~14, 2024,
  pp. 15\,653--15\,661.

\bibitem{bai2024prompt}
S.~Bai, M.~Zhang, W.~Zhou, S.~Huang, Z.~Luan, D.~Wang, and B.~Chen,
  ``Prompt-based distribution alignment for unsupervised domain adaptation,''
  in \emph{Proceedings of the AAAI Conference on Artificial Intelligence},
  vol.~38, no.~2, 2024, pp. 729--737.

\bibitem{gu2022unsupervised}
X.~Gu, J.~Sun, and Z.~Xu, ``Unsupervised and semi-supervised robust spherical
  space domain adaptation,'' \emph{IEEE Transactions on Pattern Analysis and
  Machine Intelligence}, vol.~46, no.~3, pp. 1757--1774, 2022.

\bibitem{hupseudo}
D.~Hu, J.~Liang, X.~Wang, and C.-S. Foo, ``Pseudo-calibration: Improving
  predictive uncertainty estimation in unsupervised domain adaptation,'' in
  \emph{Forty-first International Conference on Machine Learning}, 2024.

\bibitem{wei2024class}
K.~Wei, X.~Yang, Z.~Xu, and C.~Deng, ``Class-incremental unsupervised domain
  adaptation via pseudo-label distillation,'' \emph{IEEE Transactions on Image
  Processing}, 2024.

\bibitem{zhu2024versatile}
J.~Zhu, F.~Ye, Q.~Xiao, P.~Guo, Y.~Zhang, and Q.~Yang, ``A versatile framework
  for unsupervised domain adaptation based on instance weighting,'' \emph{IEEE
  Transactions on Image Processing}, 2024.

\bibitem{jiahua2022and}
D.~Jiahua, C.~Yang, Z.~Fang, and D.~Zhengming, ``Where and how to transfer:
  Knowledge aggregation-induced transferability perception for unsupervised
  domain adaptation,'' \emph{IEEE Transactions on Pattern Analysis and Machine
  Intelligence}, 2022.

\bibitem{kang2019contrastive}
G.~Kang, L.~Jiang, Y.~Yang, and A.~G. Hauptmann, ``Contrastive adaptation
  network for unsupervised domain adaptation,'' in \emph{Proceedings of the
  IEEE/CVF Conference on Computer Vision and Pattern Recognition}, 2019, pp.
  4893--4902.

\bibitem{qiang2021robust}
W.~Qiang, J.~Li, C.~Zheng, B.~Su, and H.~Xiong, ``Robust local preserving and
  global aligning network for adversarial domain adaptation,'' \emph{IEEE
  Transactions on Knowledge and Data Engineering}, vol.~35, no.~3, pp.
  3014--3029, 2021.

\bibitem{qiang2021auxiliary}
W.~Qiang, J.~Li, C.~Zheng, and B.~Su, ``Auxiliary task guided mean and
  covariance alignment network for adversarial domain adaptation,''
  \emph{Knowledge-Based Systems}, vol. 223, p. 107066, 2021.

\bibitem{oord2018representation}
A.~v.~d. Oord, Y.~Li, and O.~Vinyals, ``Representation learning with
  contrastive predictive coding,'' \emph{arXiv preprint arXiv:1807.03748},
  2018.

\bibitem{chen2020simple}
T.~Chen, S.~Kornblith, M.~Norouzi, and G.~Hinton, ``A simple framework for
  contrastive learning of visual representations,'' in \emph{International
  conference on machine learning}.\hskip 1em plus 0.5em minus 0.4em\relax PMLR,
  2020, pp. 1597--1607.

\bibitem{chen2020intriguing}
T.~Chen and L.~Li, ``Intriguing properties of contrastive losses,'' \emph{arXiv
  preprint arXiv:2011.02803}, 2020.

\bibitem{kumar2020understanding}
A.~Kumar, T.~Ma, and P.~Liang, ``Understanding self-training for gradual domain
  adaptation,'' in \emph{International Conference on Machine Learning}.\hskip
  1em plus 0.5em minus 0.4em\relax PMLR, 2020, pp. 5468--5479.

\bibitem{dhouib2020margin}
S.~Dhouib, I.~Redko, and C.~Lartizien, ``Margin-aware adversarial domain
  adaptation with optimal transport,'' in \emph{International Conference on
  Machine Learning}.\hskip 1em plus 0.5em minus 0.4em\relax PMLR, 2020, pp.
  2514--2524.

\bibitem{balaji2020robust}
Y.~Balaji, R.~Chellappa, and S.~Feizi, ``Robust optimal transport with
  applications in generative modeling and domain adaptation,'' \emph{arXiv
  preprint arXiv:2010.05862}, 2020.

\bibitem{cui2020gradually}
S.~Cui, S.~Wang, J.~Zhuo, C.~Su, Q.~Huang, and Q.~Tian, ``Gradually vanishing
  bridge for adversarial domain adaptation,'' in \emph{Proceedings of the
  IEEE/CVF Conference on Computer Vision and Pattern Recognition}, 2020, pp.
  12\,455--12\,464.

\bibitem{combes2020domain}
R.~T.~d. Combes, H.~Zhao, Y.-X. Wang, and G.~Gordon, ``Domain adaptation with
  conditional distribution matching and generalized label shift,'' \emph{arXiv
  preprint arXiv:2003.04475}, 2020.

\bibitem{cui2020heuristic}
S.~Cui, X.~Jin, S.~Wang, Y.~He, and Q.~Huang, ``Heuristic domain adaptation,''
  \emph{arXiv preprint arXiv:2011.14540}, 2020.

\bibitem{hu2020unsupervised}
L.~Hu, M.~Kan, S.~Shan, and X.~Chen, ``Unsupervised domain adaptation with
  hierarchical gradient synchronization,'' in \emph{Proceedings of the IEEE/CVF
  Conference on Computer Vision and Pattern Recognition}, 2020, pp. 4043--4052.

\bibitem{kang2020pixel}
G.~Kang, Y.~Wei, Y.~Yang, Y.~Zhuang, and A.~G. Hauptmann, ``Pixel-level cycle
  association: A new perspective for domain adaptive semantic segmentation,''
  \emph{arXiv preprint arXiv:2011.00147}, 2020.

\bibitem{tang2020unsupervised}
H.~Tang, K.~Chen, and K.~Jia, ``Unsupervised domain adaptation via structurally
  regularized deep clustering,'' in \emph{Proceedings of the IEEE/CVF
  conference on computer vision and pattern recognition}, 2020, pp. 8725--8735.

\bibitem{gretton2012kernel}
A.~Gretton, K.~M. Borgwardt, M.~J. Rasch, B.~Sch{\"o}lkopf, and A.~Smola, ``A
  kernel two-sample test,'' \emph{The Journal of Machine Learning Research},
  vol.~13, no.~1, pp. 723--773, 2012.

\bibitem{tzeng2014deep}
E.~Tzeng, J.~Hoffman, N.~Zhang, K.~Saenko, and T.~Darrell, ``Deep domain
  confusion: Maximizing for domain invariance,'' \emph{arXiv preprint
  arXiv:1412.3474}, 2014.

\bibitem{sun2016return}
B.~Sun, J.~Feng, and K.~Saenko, ``Return of frustratingly easy domain
  adaptation,'' in \emph{Proceedings of the AAAI Conference on Artificial
  Intelligence}, vol.~30, no.~1, 2016.

\bibitem{kuroki2019unsupervised}
S.~Kuroki, N.~Charoenphakdee, H.~Bao, J.~Honda, I.~Sato, and M.~Sugiyama,
  ``Unsupervised domain adaptation based on source-guided discrepancy,'' in
  \emph{Proceedings of the AAAI Conference on Artificial Intelligence},
  vol.~33, no.~01, 2019, pp. 4122--4129.

\bibitem{lee2019sliced}
C.-Y. Lee, T.~Batra, M.~H. Baig, and D.~Ulbricht, ``Sliced wasserstein
  discrepancy for unsupervised domain adaptation,'' in \emph{Proceedings of the
  IEEE/CVF Conference on Computer Vision and Pattern Recognition}, 2019, pp.
  10\,285--10\,295.

\bibitem{wu2019domain}
Y.~Wu, E.~Winston, D.~Kaushik, and Z.~Lipton, ``Domain adaptation with
  asymmetrically-relaxed distribution alignment,'' in \emph{International
  Conference on Machine Learning}.\hskip 1em plus 0.5em minus 0.4em\relax PMLR,
  2019, pp. 6872--6881.

\bibitem{xu2020reliable}
R.~Xu, P.~Liu, L.~Wang, C.~Chen, and J.~Wang, ``Reliable weighted optimal
  transport for unsupervised domain adaptation,'' in \emph{Proceedings of the
  IEEE/CVF Conference on Computer Vision and Pattern Recognition}, 2020, pp.
  4394--4403.

\bibitem{li2020enhanced}
M.~Li, Y.-M. Zhai, Y.-W. Luo, P.-F. Ge, and C.-X. Ren, ``Enhanced transport
  distance for unsupervised domain adaptation,'' in \emph{Proceedings of the
  IEEE/CVF Conference on Computer Vision and Pattern Recognition}, 2020, pp.
  13\,936--13\,944.

\bibitem{xyc19}
X.~Chen, S.~Wang, M.~Long, and J.~Wang, ``Transferability vs. discriminability:
  Batch spectral penalization for adversarial domain adaptation,'' in
  \emph{Proceedings of the 36th International Conference on Machine Learning,
  {ICML} 2019, 9-15 June 2019, Long Beach, California, {USA}}, ser. Proceedings
  of Machine Learning Research, K.~Chaudhuri and R.~Salakhutdinov, Eds.,
  vol.~97.\hskip 1em plus 0.5em minus 0.4em\relax {PMLR}, 2019, pp. 1081--1090.

\bibitem{huang2024gradient}
F.~Huang, S.~Song, and L.~Zhang, ``Gradient harmonization in unsupervised
  domain adaptation,'' \emph{IEEE Transactions on Pattern Analysis and Machine
  Intelligence}, 2024.

\bibitem{liu2021cycle}
H.~Liu, J.~Wang, and M.~Long, ``Cycle self-training for domain adaptation,''
  \emph{Advances in Neural Information Processing Systems}, vol.~34, pp.
  22\,968--22\,981, 2021.

\bibitem{mansour2009domain}
Y.~Mansour, M.~Mohri, and A.~Rostamizadeh, ``Domain adaptation: Learning bounds
  and algorithms,'' \emph{arXiv preprint arXiv:0902.3430}, 2009.

\bibitem{ben2010theory}
S.~Ben-David, J.~Blitzer, K.~Crammer, A.~Kulesza, F.~Pereira, and J.~W.
  Vaughan, ``A theory of learning from different domains,'' \emph{Machine
  learning}, vol.~79, no.~1, pp. 151--175, 2010.

\bibitem{mohri2012new}
M.~Mohri and A.~M. Medina, ``New analysis and algorithm for learning with
  drifting distributions,'' in \emph{International Conference on Algorithmic
  Learning Theory}.\hskip 1em plus 0.5em minus 0.4em\relax Springer, 2012, pp.
  124--138.

\bibitem{germain2013pac}
P.~Germain, A.~Habrard, F.~Laviolette, and E.~Morvant, ``A pac-bayesian
  approach for domain adaptation with specialization to linear classifiers,''
  in \emph{International conference on machine learning}.\hskip 1em plus 0.5em
  minus 0.4em\relax PMLR, 2013, pp. 738--746.

\bibitem{cortes2015adaptation}
C.~Cortes, M.~Mohri, and A.~Mu{\~n}oz~Medina, ``Adaptation algorithm and theory
  based on generalized discrepancy,'' in \emph{Proceedings of the 21th ACM
  SIGKDD International Conference on Knowledge Discovery and Data Mining},
  2015, pp. 169--178.

\bibitem{redko2017theoretical}
I.~Redko, A.~Habrard, and M.~Sebban, ``Theoretical analysis of domain
  adaptation with optimal transport,'' in \emph{Joint European Conference on
  Machine Learning and Knowledge Discovery in Databases}.\hskip 1em plus 0.5em
  minus 0.4em\relax Springer, 2017, pp. 737--753.

\bibitem{zhao2019learning}
H.~Zhao, R.~T. Des~Combes, K.~Zhang, and G.~Gordon, ``On learning invariant
  representations for domain adaptation,'' in \emph{International Conference on
  Machine Learning}.\hskip 1em plus 0.5em minus 0.4em\relax PMLR, 2019, pp.
  7523--7532.

\bibitem{tishby2000information}
N.~Tishby, F.~C. Pereira, and W.~Bialek, ``The information bottleneck method,''
  \emph{arXiv preprint physics/0004057}, 2000.

\bibitem{sridharan2008information}
K.~Sridharan and S.~M. Kakade, ``An information theoretic framework for
  multi-view learning,'' in \emph{COLT}, no. 114, 2008, pp. 403--414.

\bibitem{xu2013survey}
C.~Xu, D.~Tao, and C.~Xu, ``A survey on multi-view learning,'' \emph{arXiv
  preprint arXiv:1304.5634}, 2013.

\bibitem{tsai2020self}
Y.-H.~H. Tsai, Y.~Wu, R.~Salakhutdinov, and L.-P. Morency, ``Self-supervised
  learning from a multi-view perspective,'' \emph{arXiv preprint
  arXiv:2006.05576}, 2020.

\bibitem{cover2012elements}
T.~M. Cover and J.~A. Thomas, \emph{Elements of Information Theory}.\hskip 1em
  plus 0.5em minus 0.4em\relax John Wiley \& Sons, 2012.

\bibitem{bengio2013representation}
Y.~Bengio, A.~Courville, and P.~Vincent, ``Representation learning: A review
  and new perspectives,'' \emph{IEEE transactions on pattern analysis and
  machine intelligence}, vol.~35, no.~8, pp. 1798--1828, 2013.

\bibitem{zhou2016learning}
B.~Zhou, A.~Khosla, A.~Lapedriza, A.~Oliva, and A.~Torralba, ``Learning deep
  features for discriminative localization,'' in \emph{Proceedings of the IEEE
  conference on computer vision and pattern recognition}, 2016, pp. 2921--2929.

\bibitem{lin2017focal}
T.~Lin, ``Focal loss for dense object detection,'' \emph{arXiv preprint
  arXiv:1708.02002}, 2017.

\bibitem{mikolov2013efficient}
T.~Mikolov, ``Efficient estimation of word representations in vector space,''
  \emph{arXiv preprint arXiv:1301.3781}, vol. 3781, 2013.

\bibitem{Solomon_Rustamov_Guibas_Butscher_2014}
\BIBentryALTinterwordspacing
J.~Solomon, R.~Rustamov, L.~Guibas, and A.~Butscher,
  ``\BIBforeignlanguage{en-US}{Earth mover’s distances on discrete
  surfaces},'' \emph{\BIBforeignlanguage{en-US}{ACM Transactions on Graphics}},
  p. 1–12, Jul 2014. [Online]. Available:
  \url{http://dx.doi.org/10.1145/2601097.2601175}
\BIBentrySTDinterwordspacing

\bibitem{Rubner_2000}
\BIBentryALTinterwordspacing
Y.~Rubner, ``\BIBforeignlanguage{en-US}{The earth mover’s distance as a
  metric for image retrieval},'' \emph{\BIBforeignlanguage{en-US}{International
  Journal of Computer Vision}}, p. 99–121, Jan 2000. [Online]. Available:
  \url{http://dx.doi.org/10.1023/a:1026543900054}
\BIBentrySTDinterwordspacing

\bibitem{Li_2018}
W.~Li, ``Geometry of probability simplex via optimal transport,'' \emph{arXiv
  preprint arXiv:1803.06360}, vol.~2, no.~4, p.~13, 2018.

\bibitem{dukler2019wasserstein}
Y.~Dukler, W.~Li, A.~Lin, and G.~Mont{\'u}far, ``Wasserstein of wasserstein
  loss for learning generative models,'' in \emph{International Conference on
  Machine Learning}.\hskip 1em plus 0.5em minus 0.4em\relax PMLR, 2019, pp.
  1716--1725.

\bibitem{villani2009optimal}
C.~Villani \emph{et~al.}, \emph{Optimal transport: old and new}.\hskip 1em plus
  0.5em minus 0.4em\relax Springer, 2009, vol. 338.

\bibitem{Gulrajani_Ahmed_Arjovsky_Dumoulin_Courville}
I.~Gulrajani, F.~Ahmed, M.~Arjovsky, V.~Dumoulin, and A.~C. Courville,
  ``Improved training of wasserstein gans,'' \emph{Advances in neural
  information processing systems}, vol.~30, 2017.

\bibitem{Petzka_Fischer_Lukovnikov_2017}
H.~Petzka, A.~Fischer, and D.~Lukovnikov, ``\BIBforeignlanguage{en-US}{On the
  regularization of wasserstein gans},''
  \emph{\BIBforeignlanguage{en-US}{International Conference on Learning
  Representations,International Conference on Learning Representations}}, Sep
  2017.

\bibitem{feder1994relations}
M.~Feder and N.~Merhav, ``Relations between entropy and error probability,''
  \emph{IEEE Transactions on Information Theory}, vol.~40, no.~1, pp. 259--266,
  1994.

\bibitem{vh17}
H.~Venkateswara, J.~Eusebio, S.~Chakraborty, and S.~Panchanathan, ``Deep
  hashing network for unsupervised domain adaptation,'' in \emph{2017 IEEE
  Conference on Computer Vision and Pattern Recognition (CVPR)}, 2017.

\bibitem{ks10}
K.~Saenko, B.~Kulis, M.~Fritz, and T.~Darrell, ``Adapting visual category
  models to new domains,'' in \emph{Computer Vision - {ECCV} 2010, 11th
  European Conference on Computer Vision, Heraklion, Crete, Greece, September
  5-11, 2010, Proceedings, Part {IV}}, ser. Lecture Notes in Computer Science,
  K.~Daniilidis, P.~Maragos, and N.~Paragios, Eds., vol. 6314.\hskip 1em plus
  0.5em minus 0.4em\relax Springer, 2010, pp. 213--226.

\bibitem{pxc17c}
X.~Peng, B.~Usman, N.~Kaushik, J.~Hoffman, D.~Wang, and K.~Saenko, ``Visda: The
  visual domain adaptation challenge,'' \emph{arXiv preprint arXiv:1710.06924},
  2017.

\bibitem{hjj02}
J.~J. Hull, ``A database for handwritten text recognition research,''
  \emph{IEEE Transactions on Pattern Analysis and Machine Intelligence},
  vol.~16, no.~5, pp. 550--554, 2002.

\bibitem{ny11}
Y.~Netzer, T.~Wang, A.~Coates, A.~Bissacco, B.~Wu, and A.~Ng, ``Reading digits
  in natural images with unsupervised feature learning,'' \emph{NIPS}, 01 2011.

\bibitem{lcy98}
Y.~Lecun, L.~Bottou, Y.~Bengio, and P.~Haffner, ``Gradient-based learning
  applied to document recognition,'' \emph{Proceedings of the IEEE}, vol.~86,
  pp. 2278 -- 2324, 12 1998.

\bibitem{peng2019moment}
X.~Peng, Q.~Bai, X.~Xia, Z.~Huang, K.~Saenko, and B.~Wang, ``Moment matching
  for multi-source domain adaptation,'' in \emph{Proceedings of the IEEE/CVF
  international conference on computer vision}, 2019, pp. 1406--1415.

\bibitem{he2016deep}
K.~He, X.~Zhang, S.~Ren, and J.~Sun, ``Deep residual learning for image
  recognition,'' in \emph{Proceedings of the IEEE conference on computer vision
  and pattern recognition}, 2016, pp. 770--778.

\bibitem{msl15}
M.~Long, Y.~Cao, J.~Wang, and M.~I. Jordan, ``Learning transferable features
  with deep adaptation networks,'' in \emph{Proceedings of the 32nd
  International Conference on Machine Learning, {ICML} 2015, Lille, France,
  6-11 July 2015}, ser. {JMLR} Workshop and Conference Proceedings, F.~R. Bach
  and D.~M. Blei, Eds., vol.~37.\hskip 1em plus 0.5em minus 0.4em\relax
  JMLR.org, 2015, pp. 97--105.

\bibitem{lms16}
M.~Long, H.~Zhu, J.~Wang, and M.~I. Jordan, ``Deep transfer learning with joint
  adaptation networks,'' in \emph{International conference on machine
  learning}.\hskip 1em plus 0.5em minus 0.4em\relax PMLR, 2017, pp. 2208--2217.

\bibitem{ss17}
S.~Sankaranarayanan, Y.~Balaji, C.~D. Castillo, and R.~Chellappa, ``Generate to
  adapt: Aligning domains using generative adversarial networks,'' in
  \emph{Proceedings of the IEEE Conference on Computer Vision and Pattern
  Recognition}, 2018, pp. 8503--8512.

\bibitem{te17}
E.~Tzeng, J.~Hoffman, K.~Saenko, and T.~Darrell, ``Adversarial discriminative
  domain adaptation,'' in \emph{Proceedings of the IEEE conference on computer
  vision and pattern recognition}, 2017, pp. 7167--7176.

\bibitem{myl17}
M.~Liu, T.~Breuel, and J.~Kautz, ``Unsupervised image-to-image translation
  networks,'' in \emph{Advances in Neural Information Processing Systems 30:
  Annual Conference on Neural Information Processing Systems 2017, December
  4-9, 2017, Long Beach, CA, {USA}}, I.~Guyon, U.~von Luxburg, S.~Bengio, H.~M.
  Wallach, R.~Fergus, S.~V.~N. Vishwanathan, and R.~Garnett, Eds., 2017, pp.
  700--708.

\bibitem{jh18}
J.~Hoffman, E.~Tzeng, T.~Park, J.~Zhu, P.~Isola, K.~Saenko, A.~A. Efros, and
  T.~Darrell, ``Cycada: Cycle-consistent adversarial domain adaptation,'' in
  \emph{Proceedings of the 35th International Conference on Machine Learning,
  {ICML} 2018, Stockholmsm{\"{a}}ssan, Stockholm, Sweden, July 10-15, 2018},
  ser. Proceedings of Machine Learning Research, J.~G. Dy and A.~Krause, Eds.,
  vol.~80.\hskip 1em plus 0.5em minus 0.4em\relax {PMLR}, 2018, pp. 1994--2003.

\bibitem{lms18}
M.~Long, Z.~Cao, J.~Wang, and M.~I. Jordan, ``Conditional adversarial domain
  adaptation,'' \emph{arXiv preprint arXiv:1705.10667}, 2017.

\bibitem{saito2018maximum}
K.~Saito, K.~Watanabe, Y.~Ushiku, and T.~Harada, ``Maximum classifier
  discrepancy for unsupervised domain adaptation,'' in \emph{Proceedings of the
  IEEE conference on computer vision and pattern recognition}, 2018, pp.
  3723--3732.

\bibitem{ycz19}
Y.~Zhang, T.~Liu, M.~Long, and M.~I. Jordan, ``Bridging theory and algorithm
  for domain adaptation,'' in \emph{Proceedings of the 36th International
  Conference on Machine Learning, {ICML} 2019, 9-15 June 2019, Long Beach,
  California, {USA}}, ser. Proceedings of Machine Learning Research,
  K.~Chaudhuri and R.~Salakhutdinov, Eds., vol.~97.\hskip 1em plus 0.5em minus
  0.4em\relax {PMLR}, 2019, pp. 7404--7413.

\bibitem{jiang2022invariant}
Y.~Jiang and V.~Veitch, ``Invariant and transportable representations for
  anti-causal domain shifts,'' \emph{Conference on Neural Information
  Processing Systems}, 2022.

\bibitem{yue2021transporting}
Z.~Yue, Q.~Sun, X.-S. Hua, and H.~Zhang, ``Transporting causal mechanisms for
  unsupervised domain adaptation,'' in \emph{Proceedings of the IEEE/CVF
  International Conference on Computer Vision}, 2021, pp. 8599--8608.

\bibitem{shen2022connect}
K.~Shen, R.~M. Jones, A.~Kumar, S.~M. Xie, J.~Z. HaoChen, T.~Ma, and P.~Liang,
  ``Connect, not collapse: Explaining contrastive learning for unsupervised
  domain adaptation,'' in \emph{International Conference on Machine
  Learning}.\hskip 1em plus 0.5em minus 0.4em\relax PMLR, 2022, pp.
  19\,847--19\,878.

\bibitem{gulrajani2022identifiability}
I.~Gulrajani and T.~Hashimoto, ``Identifiability conditions for domain
  adaptation,'' in \emph{International Conference on Machine Learning}.\hskip
  1em plus 0.5em minus 0.4em\relax PMLR, 2022, pp. 7982--7997.

\bibitem{kong2022partial}
L.~Kong, S.~Xie, W.~Yao, Y.~Zheng, G.~Chen, P.~Stojanov, V.~Akinwande, and
  K.~Zhang, ``Partial disentanglement for domain adaptation,'' in
  \emph{International Conference on Machine Learning}.\hskip 1em plus 0.5em
  minus 0.4em\relax PMLR, 2022, pp. 11\,455--11\,472.

\bibitem{wu2018dcan}
Z.~Wu, X.~Han, Y.-L. Lin, M.~G. Uzunbas, T.~Goldstein, S.~N. Lim, and L.~S.
  Davis, ``Dcan: Dual channel-wise alignment networks for unsupervised scene
  adaptation,'' in \emph{Proceedings of the European Conference on Computer
  Vision (ECCV)}, 2018, pp. 518--534.

\bibitem{zou2018unsupervised}
Y.~Zou, Z.~Yu, B.~Kumar, and J.~Wang, ``Unsupervised domain adaptation for
  semantic segmentation via class-balanced self-training,'' in
  \emph{Proceedings of the European conference on computer vision (ECCV)},
  2018, pp. 289--305.

\bibitem{vu2019advent}
T.-H. Vu, H.~Jain, M.~Bucher, M.~Cord, and P.~P{\'e}rez, ``Advent: Adversarial
  entropy minimization for domain adaptation in semantic segmentation,'' in
  \emph{Proceedings of the IEEE/CVF conference on computer vision and pattern
  recognition}, 2019, pp. 2517--2526.

\bibitem{dong2019semantic}
J.~Dong, Y.~Cong, G.~Sun, and D.~Hou, ``Semantic-transferable weakly-supervised
  endoscopic lesions segmentation,'' in \emph{Proceedings of the IEEE/CVF
  International Conference on Computer Vision}, 2019, pp. 10\,712--10\,721.

\bibitem{chen2019domain}
M.~Chen, H.~Xue, and D.~Cai, ``Domain adaptation for semantic segmentation with
  maximum squares loss,'' in \emph{Proceedings of the IEEE/CVF international
  conference on computer vision}, 2019, pp. 2090--2099.

\bibitem{lian2019constructing}
Q.~Lian, F.~Lv, L.~Duan, and B.~Gong, ``Constructing self-motivated pyramid
  curriculums for cross-domain semantic segmentation: A non-adversarial
  approach,'' in \emph{Proceedings of the IEEE/CVF International Conference on
  Computer Vision}, 2019, pp. 6758--6767.

\bibitem{huang2020contextual}
J.~Huang, S.~Lu, D.~Guan, and X.~Zhang, ``Contextual-relation consistent domain
  adaptation for semantic segmentation,'' in \emph{European conference on
  computer vision}.\hskip 1em plus 0.5em minus 0.4em\relax Springer, 2020, pp.
  705--722.

\bibitem{dong2020cscl}
J.~Dong, Y.~Cong, G.~Sun, Y.~Liu, and X.~Xu, ``Cscl: Critical
  semantic-consistent learning for unsupervised domain adaptation,'' in
  \emph{Computer Vision--ECCV 2020: 16th European Conference, Glasgow, UK,
  August 23--28, 2020, Proceedings, Part VIII 16}.\hskip 1em plus 0.5em minus
  0.4em\relax Springer, 2020, pp. 745--762.

\bibitem{subhani2020learning}
M.~N. Subhani and M.~Ali, ``Learning from scale-invariant examples for domain
  adaptation in semantic segmentation,'' in \emph{Computer Vision--ECCV 2020:
  16th European Conference, Glasgow, UK, August 23--28, 2020, Proceedings, Part
  XXII 16}.\hskip 1em plus 0.5em minus 0.4em\relax Springer, 2020, pp.
  290--306.

\bibitem{wang2020classes}
H.~Wang, T.~Shen, W.~Zhang, L.-Y. Duan, and T.~Mei, ``Classes matter: A
  fine-grained adversarial approach to cross-domain semantic segmentation,'' in
  \emph{European conference on computer vision}.\hskip 1em plus 0.5em minus
  0.4em\relax Springer, 2020, pp. 642--659.

\bibitem{pan2020unsupervised}
F.~Pan, I.~Shin, F.~Rameau, S.~Lee, and I.~S. Kweon, ``Unsupervised
  intra-domain adaptation for semantic segmentation through self-supervision,''
  in \emph{Proceedings of the IEEE/CVF conference on computer vision and
  pattern recognition}, 2020, pp. 3764--3773.

\bibitem{lv2020cross}
F.~Lv, T.~Liang, X.~Chen, and G.~Lin, ``Cross-domain semantic segmentation via
  domain-invariant interactive relation transfer,'' in \emph{Proceedings of the
  IEEE/CVF Conference on Computer Vision and Pattern Recognition}, 2020, pp.
  4334--4343.

\bibitem{dong2020can}
J.~Dong, Y.~Cong, G.~Sun, B.~Zhong, and X.~Xu, ``What can be transferred:
  Unsupervised domain adaptation for endoscopic lesions segmentation,'' in
  \emph{Proceedings of the IEEE/CVF conference on computer vision and pattern
  recognition}, 2020, pp. 4023--4032.

\bibitem{cui2024effective}
H.~Cui, L.~Zhao, F.~Li, L.~Zhu, X.~Han, and J.~Li, ``Effective comparative
  prototype hashing for unsupervised domain adaptation,'' in \emph{Proceedings
  of the AAAI Conference on Artificial Intelligence}, vol.~38, no.~8, 2024, pp.
  8329--8337.

\bibitem{li2024semantic}
X.~Li, C.~Lan, G.~Wei, and Z.~Chen, ``Semantic-aware message broadcasting for
  efficient unsupervised domain adaptation,'' \emph{IEEE Transactions on Image
  Processing}, 2024.

\bibitem{friedman1937use}
M.~Friedman, ``The use of ranks to avoid the assumption of normality implicit
  in the analysis of variance,'' \emph{Journal of the american statistical
  association}, vol.~32, no. 200, pp. 675--701, 1937.

\bibitem{everingham2010pascal}
M.~Everingham, L.~Van~Gool, C.~K. Williams, J.~Winn, and A.~Zisserman, ``The
  pascal visual object classes (voc) challenge,'' \emph{International journal
  of computer vision}, vol.~88, no.~2, pp. 303--338, 2010.

\bibitem{inoue2018cross}
N.~Inoue, R.~Furuta, T.~Yamasaki, and K.~Aizawa, ``Cross-domain
  weakly-supervised object detection through progressive domain adaptation,''
  in \emph{Proceedings of the IEEE conference on computer vision and pattern
  recognition}, 2018, pp. 5001--5009.

\bibitem{richter2016playing}
S.~R. Richter, V.~Vineet, S.~Roth, and V.~Koltun, ``Playing for data: Ground
  truth from computer games,'' in \emph{European conference on computer
  vision}.\hskip 1em plus 0.5em minus 0.4em\relax Springer, 2016, pp. 102--118.

\bibitem{cordts2016cityscapes}
M.~Cordts, M.~Omran, S.~Ramos, T.~Rehfeld, M.~Enzweiler, R.~Benenson,
  U.~Franke, S.~Roth, and B.~Schiele, ``The cityscapes dataset for semantic
  urban scene understanding,'' in \emph{Proceedings of the IEEE conference on
  computer vision and pattern recognition}, 2016, pp. 3213--3223.

\bibitem{ros2016synthia}
G.~Ros, L.~Sellart, J.~Materzynska, D.~Vazquez, and A.~M. Lopez, ``The synthia
  dataset: A large collection of synthetic images for semantic segmentation of
  urban scenes,'' in \emph{Proceedings of the IEEE conference on computer
  vision and pattern recognition}, 2016, pp. 3234--3243.

\bibitem{dong2021and}
J.~Dong, Y.~Cong, G.~Sun, Z.~Fang, and Z.~Ding, ``Where and how to transfer:
  Knowledge aggregation-induced transferability perception for unsupervised
  domain adaptation,'' \emph{IEEE Transactions on Pattern Analysis and Machine
  Intelligence}, vol.~46, no.~3, pp. 1664--1681, 2021.

\bibitem{huang2022category}
J.~Huang, D.~Guan, A.~Xiao, S.~Lu, and L.~Shao, ``Category contrast for
  unsupervised domain adaptation in visual tasks,'' in \emph{Proceedings of the
  IEEE/CVF Conference on Computer Vision and Pattern Recognition}, 2022, pp.
  1203--1214.

\bibitem{zhou2022multi}
W.~Zhou, D.~Du, L.~Zhang, T.~Luo, and Y.~Wu, ``Multi-granularity alignment
  domain adaptation for object detection,'' in \emph{Proceedings of the
  IEEE/CVF Conference on Computer Vision and Pattern Recognition}, 2022, pp.
  9581--9590.

\bibitem{nguyen2021kl}
A.~T. Nguyen, T.~Tran, Y.~Gal, P.~H. Torr, and A.~G. Baydin, ``Kl guided domain
  adaptation,'' \emph{arXiv preprint arXiv:2106.07780}, 2022.

\bibitem{xu2022graph}
Z.~Xu, G.-H. Lee, Y.~Wang, H.~Wang \emph{et~al.}, ``Graph-relational domain
  adaptation,'' \emph{arXiv preprint arXiv:2202.03628}, 2022.

\bibitem{chen2017deeplab}
L.-C. Chen, G.~Papandreou, I.~Kokkinos, K.~Murphy, and A.~L. Yuille, ``Deeplab:
  Semantic image segmentation with deep convolutional nets, atrous convolution,
  and fully connected crfs,'' \emph{IEEE transactions on pattern analysis and
  machine intelligence}, vol.~40, no.~4, pp. 834--848, 2017.

\end{thebibliography}

\clearpage

\vskip -0.5in
\begin{IEEEbiography}[{\includegraphics[width=1in,height=1.25in,clip,keepaspectratio]{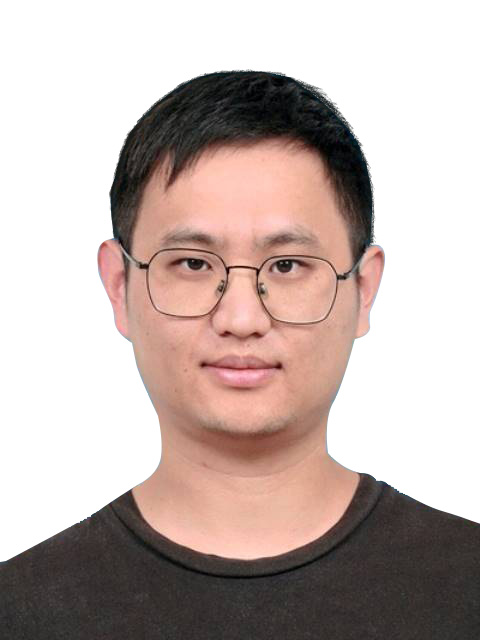}}]{Wenwen Qiang}
	received the MS degree in mathematics from China Agricultural University, Beijing, in 2018, and PhD degree in software engineering from the University of Chinese Academy of Sciences, Beijing, in 2022. Currently, he is assistant professor with the Institute of Software Chinese Academy of Science. His research interests include transfer learning, self-supervised learning, and causal inference.
\end{IEEEbiography}
\vskip -0.25in

\begin{IEEEbiography}[{\includegraphics[width=1in,height=1.25in,clip,keepaspectratio]{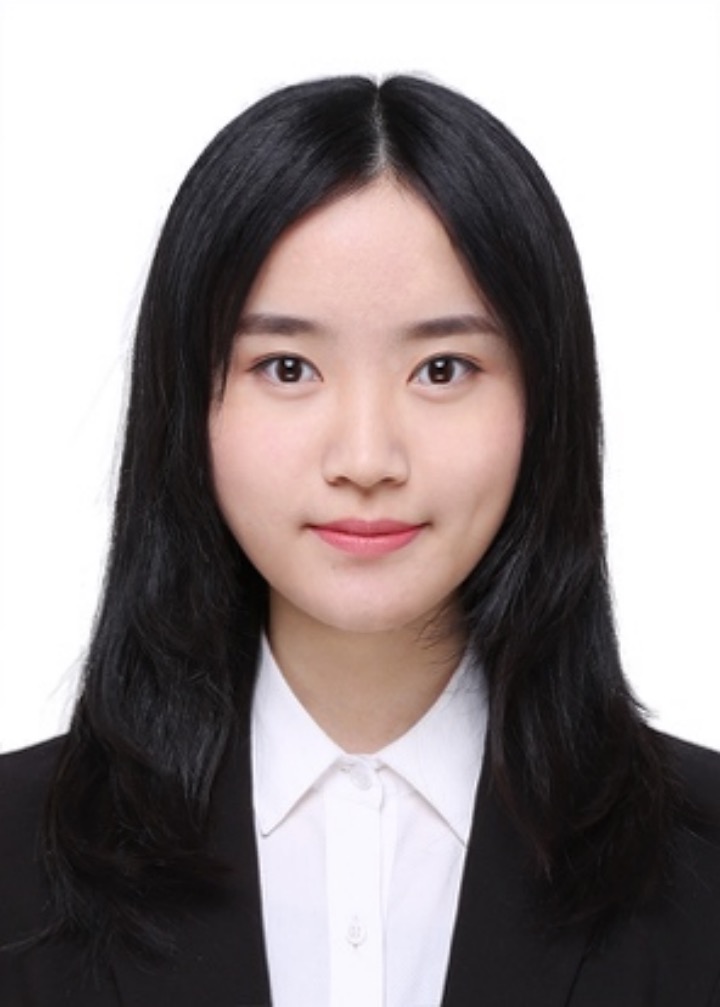}}]{Ziyin Gu}
	received the bachelor degree in Physics from Nankai University, Tianjing, in 2019. She is currently a PhD candidate in Computer Science and Technology at the University of Chinese Academy of Sciences. Her primary research interests focus on representation learning, including transfer learning, human-machine dialogue systems, and sentiment analysis.
    
\end{IEEEbiography}
\vskip -0.25in

\begin{IEEEbiography}[{\includegraphics[width=1in,height=1.25in,clip,keepaspectratio]{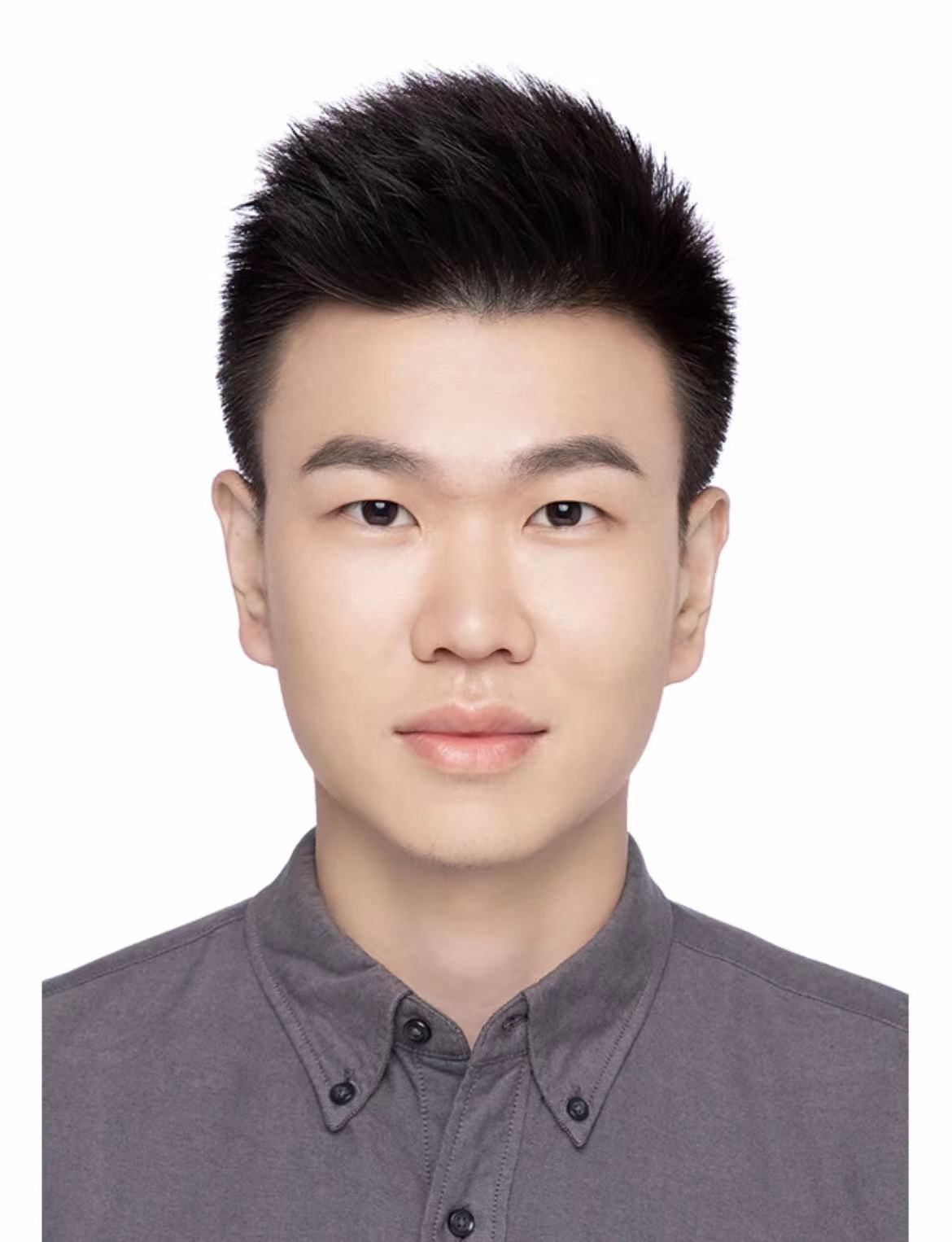}}]{Lingyu Si}
	received the MS degree of advanced computing in the department of engineering, University of Bristol, UK in 2018, and PhD degree in software engineering from the University of Chinese Academy of Sciences, Beijing, in 2023.    
    He is currently a senior engineering with the Institute of Software Chinese Academy of Science. His research interests include transfer learning, computer vision, and representation learning.
\end{IEEEbiography}
\vskip -0.25in

\begin{IEEEbiography}[{\includegraphics[width=1in,height=1.25in,clip,keepaspectratio]{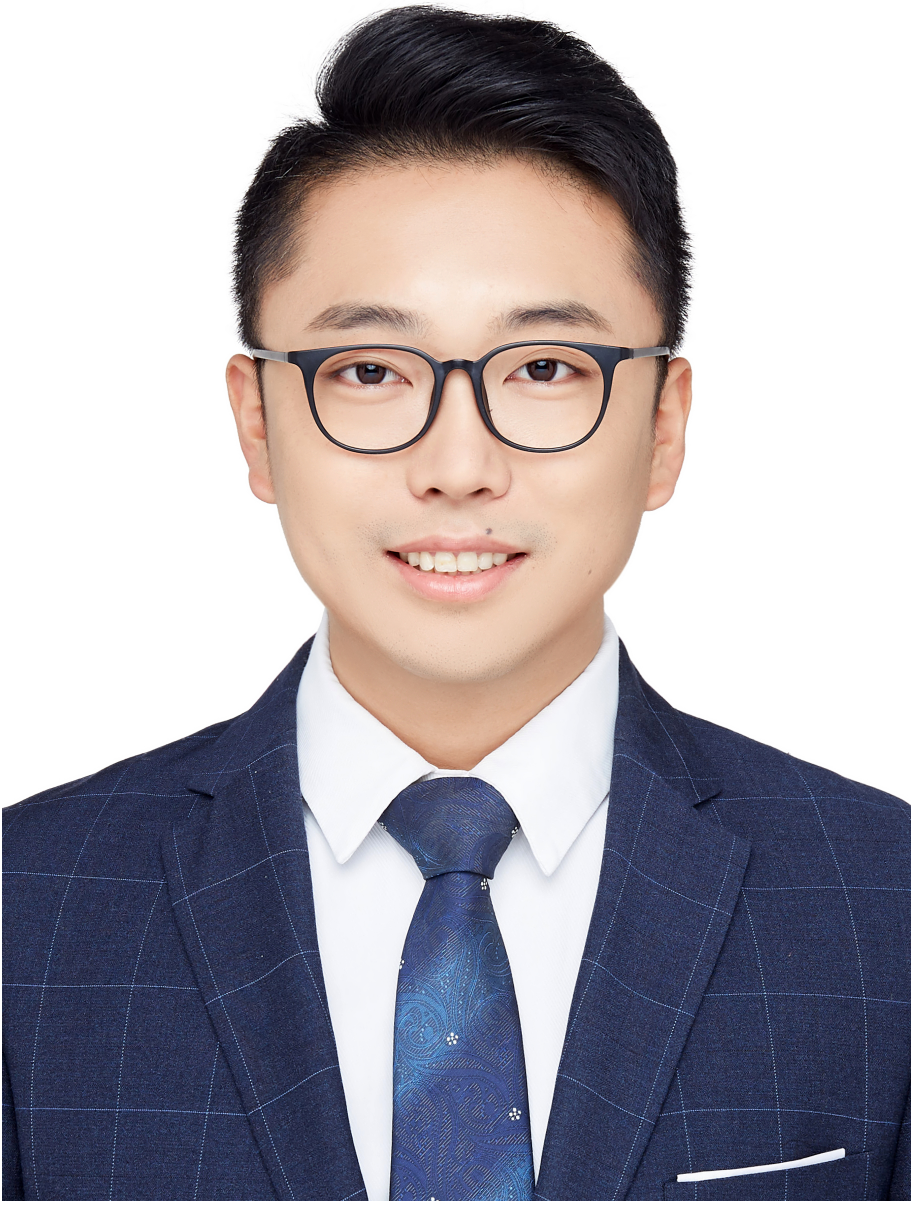}}]{Jiangmeng Li}
	received the MS degree with concentration of data science, School of Professional Studies, New York University, New York, New York, USA, in 2018, and PhD degree in software engineering from the University of Chinese Academy of Sciences, Beijing, in 2023.    
    He is currently an assistant professor with the Institute of Software Chinese Academy of Science. His research interests include transfer learning, deep learning, and machine learning.
\end{IEEEbiography}
\vskip -0.25in

\begin{IEEEbiography}[{\includegraphics[width=1in,height=1.25in,clip,keepaspectratio]{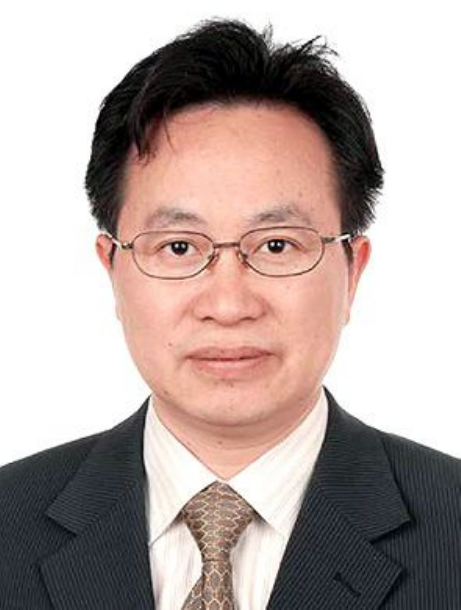}}]{Changwen Zheng}
	received the Ph.D. degree in Huazhong University of Science and Technology. He is currently a professor in the Institute of Software Chinese Academy of Science and the University of Chinese Academy of Sciences. His research interests include computer graph, artificial intelligence, and causal inference.
\end{IEEEbiography}
\vskip -0.25in

\begin{IEEEbiography}[{\includegraphics[width=1in,height=1.25in,clip,keepaspectratio]{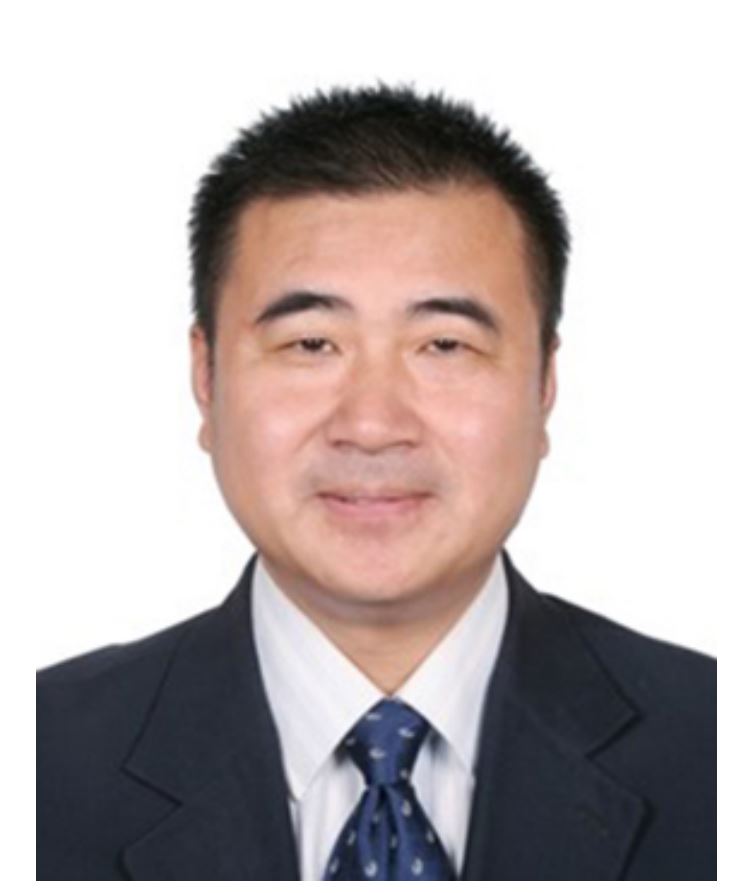}}]{Fuchun Sun}
	received the PhD degree in computer science and technology from Tsinghua University, Beijing, China, in 1997. He is currently a Professor with the Department of Computer Science and Technology and President of Academic Committee of the Department, Tsinghua University, deputy director of State Key Lab. of Intelligent Technology and Systems, Beijing, China. His research interests include intelligent control and robotics, information sensing and processing in artificial cognitive systems, and networked control systems. He was recognized as a Distinguished Young Scholar in 2006 by the Natural Science Foundation of China. He serves as Editor-in-Chief of International Journal on Cognitive Computation and Systems, and an Associate Editor for a series of international journals including the IEEE Transactions on Cognitive and Developmental Systems, the IEEE Transactions on Fuzzy Systems, and the IEEE Transactions on Systems, Man, and Cybernetic: Systems. He was elected an IEEE Fellow in 2019.
\end{IEEEbiography}
\vskip -0.25in

\begin{IEEEbiography}[{\includegraphics[width=1in,height=1.25in,clip,keepaspectratio]{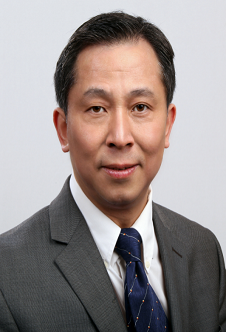}}]{Hui Xiong}
	is currently a Chair Professor at the Hong Kong University of Science and Technology (Guangzhou). Dr. Xiong’s research interests include data mining, mobile computing, and their applications in business. Dr. Xiong received his PhD in Computer Science from University of Minnesota, USA. He has served regularly on the organization and program committees of numerous conferences, including as a Program Co-Chair of the Industrial and Government Track for the 18th ACM SIGKDD International Conference on Knowledge Discovery and Data Mining (KDD), a Program Co-Chair for the IEEE 2013 International Conference on Data Mining (ICDM), a General Co-Chair for the 2015 IEEE International Conference on Data Mining (ICDM), and a Program Co-Chair of the Research Track for the 2018 ACM SIGKDD International Conference on Knowledge Discovery and Data Mining. He received the 2021 AAAI Best Paper Award and the 2011 IEEE ICDM Best Research Paper award. For his outstanding contributions to data mining and mobile computing, he was elected an AAAS Fellow and an IEEE Fellow in 2020.	
\end{IEEEbiography}

\clearpage

\section{Appendix}
\subsection{Proofs}
\begin{lemma}
	\label{l1}
	Suppose $Z_u$ is a deterministic function of $X_u$, where $u \in \left\{ {s,t} \right\}$, then the following Markov chain holds: ${Z_t} \leftarrow {X_t} \leftrightarrow Y \leftrightarrow {X_s} \to {Z_s}$.
\end{lemma}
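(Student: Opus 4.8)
The plan is to show that the joint law of the tuple $(Z_t,X_t,Y,X_s,Z_s)$ factorizes in exactly the way a Markov chain rooted at $Y$ requires, namely
\[
p(z_t,x_t,y,x_s,z_s)=p(z_t\mid x_t)\,p(x_t\mid y)\,p(y)\,p(x_s\mid y)\,p(z_s\mid x_s),
\]
and then read off the chain $Z_t\leftarrow X_t\leftrightarrow Y\leftrightarrow X_s\to Z_s$ (the single-headed arrows recording that $Z_t,Z_s$ are generated deterministically, the double-headed ones recording the symmetric relation $X_t-Y-X_s$; for the Markov property only the underlying undirected path matters). So it suffices to build the displayed factorization from the hypotheses.

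First I would use the standing hypothesis that $Z_u=\varphi(X_u)$ is a deterministic function of $X_u$ for $u\in\{s,t\}$. Once $X_t$ is fixed, $Z_t$ is pinned down to the point mass at $\varphi(X_t)$, so $p(z_t\mid x_t,y,x_s,z_s)=p(z_t\mid x_t)$; symmetrically $p(z_s\mid x_s,y,x_t,z_t)=p(z_s\mid x_s)$. Together these give $p(z_t,z_s\mid x_t,y,x_s)=p(z_t\mid x_t)\,p(z_s\mid x_s)$, hence $p(z_t,x_t,y,x_s,z_s)=p(z_t\mid x_t)\,p(z_s\mid x_s)\,p(x_t,y,x_s)$, which already establishes the two outer links $Z_t\leftarrow X_t$ and $X_s\to Z_s$. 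It then remains only to factor $p(x_t,y,x_s)=p(x_t\mid y)\,p(y)\,p(x_s\mid y)$, i.e.\ to prove the single sub-chain $X_t\leftrightarrow Y\leftrightarrow X_s$, equivalently $X_s\perp X_t\mid Y$.

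This last conditional independence is precisely the ``generative point of view'' referred to just before the lemma, and it is the only genuinely non-mechanical step. Under the covariate-shift setting together with \textbf{Assumption \ref{a1}} the label $Y=\eta(X_s)=\eta(X_t)$ is a deterministic function of either input and, in the generative reading, is the sole channel through which the two domains communicate: in the decoupling $X_u=(X_Y,X_Y^{Uu})$ of \textbf{Section \ref{098}}, the label-bearing component is shared while the residual parts $X_Y^{Us},X_Y^{Ut}$ are domain-specific and independent, so conditioning on $Y$ (in the extremal ``share only label information'' case, on $X_Y$ itself) renders $X_s$ and $X_t$ independent. Substituting $p(x_t,y,x_s)=p(x_t\mid y)\,p(y)\,p(x_s\mid y)$ into the expression obtained above yields the claimed factorization, and therefore the Markov chain $Z_t\leftarrow X_t\leftrightarrow Y\leftrightarrow X_s\to Z_s$. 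I expect the main obstacle to be stating this middle step with enough care — making explicit why conditioning on the label, rather than on the full shared latent content of the two domains, already decouples them — since everything else follows mechanically from the determinism of $\varphi$ and the chain rule for densities.
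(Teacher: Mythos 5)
Your argument for the two outer links — that determinism of $\varphi$ pins down $Z_s$ once $X_s$ is known and $Z_t$ once $X_t$ is known, giving $p(z_s\mid x_s,y,x_t,z_t)=p(z_s\mid x_s)$ and its mirror — is the same move the paper makes, just phrased as a density factorization rather than as conditional expectations of indicators $\mathbb{E}[\mathbbm{1}_{[Z_s\in A]}\mid X_s,Y,X_t]=\mathbb{E}[\mathbbm{1}_{[Z_s\in A]}\mid X_s]$. Those are equivalent, so up to that point you are in lockstep with the paper.

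Where you genuinely diverge — and, I would say, improve — is in treating the middle link $X_t\leftrightarrow Y\leftrightarrow X_s$, i.e.\ $X_s\perp X_t\mid Y$, as a step that needs its own justification. The paper's proof establishes only the conditional independencies $Z_s\perp (Y,X_t)\mid X_s$ and $Z_t\perp (Y,X_s)\mid X_t$ and then simply declares the full chain. Those four independencies do not by themselves force the inner factorization $p(x_t,y,x_s)=p(x_t\mid y)p(y)p(x_s\mid y)$; one could have $X_s$ and $X_t$ arbitrarily correlated given $Y$ while $Z_s,Z_t$ are still deterministic functions, and the chain would fail at the middle. So the paper's proof is silently relying on $X_s\perp X_t\mid Y$ as part of the generative model, not deriving it. You name this explicitly, isolate it as ``the only genuinely non-mechanical step,'' and argue for it from the covariate-shift/decoupling picture in \textbf{Section \ref{098}} (shared label-bearing component $X_Y$, domain-specific residuals $X_Y^{Us},X_Y^{Ut}$ independent). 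That argument is still informal — it is borrowing an intuitive decomposition the paper itself only motivates heuristically — but you are candid about that, and it is strictly more forthcoming than the paper, which asserts the middle link without flagging it as an assumption at all. In short: same mechanism for the outer links, but you correctly surface a hidden assumption that the paper's proof does not discharge.
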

\begin{proof}
	When $Z_u$ is a deterministic function of $X_u$, for any $A$ in the sigma-algebra induced by $Z_u$, we have:
	\begin{align}
	  \mathbb{E}\left[ {{\mathbbm{1}_{[{Z_s} \in A]}}\left| {{X_s},\left\{ {Y,{X_t}} \right\}} \right.} \right] &= \mathbb{E}\left[ {{\mathbbm{1}_{[{Z_s} \in A]}}\left| {{X_s},{X_t}} \right.} \right] \\
	&= \mathbb{E}\left[ {{\mathbbm{1}_{[{Z_s} \in A]}}\left| {{X_s}} \right.} \right],\\
	\mathbb{E}\left[ {{\mathbbm{1}_{[{Z_t} \in A]}}\left| {{X_t},\left\{ {Y,{X_s}} \right\}} \right.} \right] &= \mathbb{E}\left[ {{\mathbbm{1}_{[{Z_t} \in A]}}\left| {{X_t},{X_s}} \right.} \right] \\
	&= \mathbb{E}\left[ {{\mathbbm{1}_{[{Z_s} \in A]}}\left| {{X_t}} \right.} \right].  
	\end{align}
	
	This implies ${X_t} \upmodels {Z_s}\left| {{X_s}} \right.,Y \upmodels {Z_s}\left| {{X_s}} \right.,{X_s} \upmodels {Z_t}\left| {{X_t}} \right.$, and $Y \upmodels {Z_t}\left| {{X_t}} \right.$. Thus, we obtain the Markov chain: ${Z_t} \leftarrow {X_t} \leftrightarrow Y \leftrightarrow {X_s} \to {Z_s}$.
\end{proof}

\label{12}
\begin{proposition}
	\label{p1}
	Based on \textbf{Definition \ref{d1}}, we have: $I\left( {{X_s};{X_t}\left| {{Z_u}} \right.} \right) =0 \Leftrightarrow I\left( {{X_u};Y} \right) = I\left( {{Z_u};Y} \right)$.
\end{proposition}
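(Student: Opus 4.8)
The plan is to fix $u = t$ throughout; the case $u = s$ follows by the very same argument after exchanging the two domains, since the Markov chain of \textbf{Lemma \ref{l1}} and \textbf{Assumption \ref{a1}} are both symmetric in $s$ and $t$. The core of the argument is to route everything through the single quantity $I(Y; X_t \mid Z_t)$ and to show that all three conditions --- $I(X_s; X_t \mid Z_t) = 0$, $I(Y; X_t \mid Z_t) = 0$, and $I(X_t; Y) = I(Z_t; Y)$ --- are equivalent to that quantity vanishing.

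First I would rewrite the right-hand equality. By \textbf{Assumption \ref{a1}}, $I(X_t; Y) = H(Y)$, hence $H(Y \mid X_t) = 0$, so $Y$ is (almost surely) a deterministic function of $X_t$; since $Z_t = \varphi(X_t)$, this also gives $H(Y \mid X_t, Z_t) = 0$. Writing the mutual informations as differences of entropies then yields
\begin{equation}
I(X_t; Y) - I(Z_t; Y) = H(Y \mid Z_t) - H(Y \mid X_t) = H(Y \mid Z_t) = I(Y; X_t \mid Z_t),
\end{equation}
where the last equality again uses $H(Y \mid X_t, Z_t) = 0$. Hence $I(X_t; Y) = I(Z_t; Y) \iff I(Y; X_t \mid Z_t) = 0$, and it remains to prove $I(X_s; X_t \mid Z_t) = 0 \iff I(Y; X_t \mid Z_t) = 0$.

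For the forward implication I would use that $Y$ is a deterministic function of $X_s$ as well (again $H(Y \mid X_s) = 0$ by \textbf{Assumption \ref{a1}}), so the chain rule gives
\begin{equation}
I(X_s; X_t \mid Z_t) = I(Y; X_t \mid Z_t) + I(X_s; X_t \mid Y, Z_t) \ge I(Y; X_t \mid Z_t),
\end{equation}
whence $I(X_s; X_t \mid Z_t) = 0$ forces $I(Y; X_t \mid Z_t) = 0$. For the reverse implication I would invoke the Markov chain $Z_t \leftarrow X_t \leftrightarrow Y \leftrightarrow X_s$ from \textbf{Lemma \ref{l1}}, which gives both $P(X_s \mid X_t, Z_t, Y) = P(X_s \mid Y)$ and $P(X_s \mid Z_t, Y) = P(X_s \mid Y)$. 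If $I(Y; X_t \mid Z_t) = 0$, i.e.\ $P(Y \mid X_t, Z_t) = P(Y \mid Z_t)$, then marginalizing over $Y$ gives
\begin{equation}
P(X_s \mid X_t, Z_t) = \sum_{Y} P(X_s \mid Y)\, P(Y \mid X_t, Z_t) = \sum_{Y} P(X_s \mid Y)\, P(Y \mid Z_t) = P(X_s \mid Z_t),
\end{equation}
so $X_s \perp X_t \mid Z_t$, i.e.\ $I(X_s; X_t \mid Z_t) = 0$. Stringing the equivalences together proves the proposition.

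The step I expect to be the main obstacle is making the ``deterministic function'' claims and the conditional-independence manipulations rigorous: \textbf{Assumption \ref{a1}} only delivers $H(Y \mid X_u) = 0$, which places $Y = g(X_u)$ and the identity $H(Y \mid X_t, Z_t) = 0$ on a set of full measure rather than everywhere, and the sum over $Y$ in the last display should really be an integral carrying the appropriate null-set caveats (the $\sigma$-algebra bookkeeping is exactly of the kind carried out in the proof of \textbf{Lemma \ref{l1}}). The mutual-information identities themselves are entirely routine once these measure-theoretic points are pinned down.
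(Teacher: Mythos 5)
Your proof is correct and, notably, it is \emph{more} complete than the paper's own argument. Both you and the paper begin from essentially the same identity: you derive $I(X_u;Y)-I(Z_u;Y)=I(Y;X_u\mid Z_u)$ by pinning down $H(Y\mid X_u)=H(Y\mid X_u,Z_u)=0$ via \textbf{Assumption \ref{a1}}, while the paper derives $I(X_u;Y\mid Z_u)=I(X_u;Y)-I(Z_u;Y)$ directly from the chain rule and $I(Z_u;Y\mid X_u)=0$; these are the same fact, and in either form it reduces the right-hand side of the proposition to $I(Y;X_u\mid Z_u)=0$. From there the approaches diverge. The paper establishes only the implication $I(X_s;X_t\mid Z_u)=0 \Rightarrow I(Y;X_u\mid Z_u)=0$, via a somewhat long chain-rule expansion of $I(Y;X_s\mid Z_s)$ culminating in the inequality $I(Y;X_u\mid Z_u)\le I(X_s;Y\mid X_{\bar u})+I(X_s;X_t\mid Z_u)$ and then killing the first term with $H(Y\mid X_{\bar u})=0$; the converse direction is asserted in the final line but never argued. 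You prove the same forward implication more economically by the single chain-rule identity $I(X_s;X_t\mid Z_t)=I(Y;X_t\mid Z_t)+I(X_s;X_t\mid Y,Z_t)$, valid because $Y$ is (a.s.) a function of $X_s$, which immediately gives $I(X_s;X_t\mid Z_t)\ge I(Y;X_t\mid Z_t)$. You then supply the missing reverse implication by invoking the Markov chain $Z_t\leftarrow X_t\leftrightarrow Y\leftrightarrow X_s$ of \textbf{Lemma \ref{l1}} and marginalizing $P(X_s\mid X_t,Z_t)=\int P(X_s\mid Y)\,P(Y\mid X_t,Z_t)\,dY$, which collapses to $P(X_s\mid Z_t)$ once $P(Y\mid X_t,Z_t)=P(Y\mid Z_t)$. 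This is precisely the step the paper omits, and it is the step that actually justifies stating the result as a biconditional. The one caveat you flag yourself --- that ``$Y$ deterministic in $X_u$'' and the disintegration over $Y$ hold only up to null sets --- is genuine but routine, and no more delicate than what the paper already handles in \textbf{Lemma \ref{l1}}.
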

\begin{proof}
\begin{equation}	
	\begin{array}{l}
		I\left(X_u;Y\left| Z_u \right. \right) \\
		= I\left(X_u;Y \right) - I\left(X_u;Y;Z_u \right) \\
		= I\left(X_u;Y \right) - I\left(Y;Z_u \right) + I\left(Z_u;Y\left| X_u \right. \right)\\
		= I\left(X_u;Y \right) - I\left(Y;Z_u \right).
	\end{array}
\end{equation}

Since $Z_u$ is a represention of $X_u$, we have
\begin{equation}
\begin{array}{l}
I\left( {Y;{X_s}\left| {{Z_s}} \right.} \right)\\
= I\left( {{X_s};Y\left| {{X_t}{Z_s}} \right.} \right) + I\left( {{X_s};{X_t};Y\left| {{Z_s}} \right.} \right)\\
= I\left( {{X_s};Y\left| {{X_t}} \right.} \right) - I\left( {{X_s};{Z_s};Y\left| {{X_t}} \right.} \right) + I\left( {{X_s};{X_t};Y\left| {{Z_s}} \right.} \right)\\
= I\left( {{X_s};Y\left| {{X_t}} \right.} \right) - I\left( {{Z_s};Y\left| {{X_t}} \right.} \right) + I\left( {{Z_s};Y\left| {{X_t}{X_s}} \right.} \right) \\ \qquad\qquad\qquad\qquad\qquad+ I\left( {{X_s};{X_t};Y\left| {{Z_s}} \right.} \right)\\
\le I\left( {{X_s};Y\left| {{X_t}} \right.} \right) + I\left( {{Z_s};Y\left| {{X_t}{X_s}} \right.} \right) + I\left( {{X_s};{X_t};Y\left| {{Z_s}} \right.} \right)\\
= I\left( {{X_s};Y\left| {{X_t}} \right.} \right) + I\left( {{X_s};{X_t};Y\left| {{Z_s}} \right.} \right)\\
= I\left( {{X_s};Y\left| {{X_t}} \right.} \right) + I\left( {{X_s};{X_t}\left| {{Z_s}} \right.} \right) - I\left( {{Z_s};{Z_t}\left| {{Z_s}Y} \right.} \right)\\
\le I\left( {{X_s};Y\left| {{X_t}} \right.} \right) + I\left( {{X_s};{X_t}\left| {{Z_s}} \right.} \right).
\end{array}
\end{equation}

Similarly, we have $I\left( {Y;{X_t}\left| {{Z_t}} \right.} \right) \le I\left( {{X_t};Y\left| {{X_s}} \right.} \right) + I\left( {{X_s};{X_t}\left| {{Z_t}} \right.} \right)$.

Therefore, we can obtain that
\begin{equation}
\begin{array}{l}
I\left( {{X_s};Y\left| {{Z_s}} \right.} \right) \le I\left( {{X_s};{X_t}\left| {{Z_s}} \right.} \right),\\
{\rm{ }}I\left( {{X_t};Y\left| {{Z_t}} \right.} \right) \le I\left( {{X_s};{X_t}\left| {{Z_t}} \right.} \right),\\
I\left( {{X_s};{X_t}\left| {{Z_u}} \right.} \right) = 0 \Rightarrow I\left( {{X_u};Y\left| {{Z_u}} \right.} \right) = 0.
\end{array}
\end{equation}

Therefore, we have $I\left( {{X_s};{X_t}\left| {{Z_u}} \right.} \right)=0 \Leftrightarrow I\left( {{X_u};Y} \right) = I\left( {{Z_u};Y} \right)$.
\end{proof}

\begin{proposition}
	\label{p3}
	In the extreme case, $X_s$ and $X_t$ only share label information, the proposed method is equivalent to the supervised information bottleneck method without needing to access the labels.
\end{proposition}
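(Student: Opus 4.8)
The plan is to make the hypothesis ``$X_s$ and $X_t$ only share label information'' precise, reduce each of the four conditional mutual information terms in Equation~(\ref{Eq:bbbd}) to an expression involving only $Z_u$ and $Y$, and recognize the resulting objective as the deterministic supervised information bottleneck objective. First I would formalize the extreme case using the decoupling of Section~\ref{098}: write $X_u=(X_Y,X^{Uu}_Y)$ with $X_Y\perp X^{Uu}_Y$ and $X^{Us}_Y\perp X^{Ut}_Y$; ``only share label information'' then means the shared part $X_Y$ is informationally equivalent to $Y$ (i.e.\ $H(X_Y\mid Y)=H(Y\mid X_Y)=0$), so that $I(X_s;X_t)=H(Y)$, $X_s\perp X_t\mid Y$, and $X^{Uu}_Y\perp Y$. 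I would also record, via \textbf{Lemma~\ref{l1}}, that each $Z_u=\varphi(X_u)$ is a deterministic function of $X_u$, hence $I(Z_u;Y\mid X_u)=0$, $I(Z_u;X_t\mid X_s)=0$, and $I(X_u;Z_u)=H(Z_u)$.

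Next I would reduce the four terms. For the transferability terms, since $X_t=(Y,X^{Ut}_Y)$ and $X^{Ut}_Y$ is independent of $(X_s,Z_s,Y)$, conditioning on $X_t$ drops the nuisance $X^{Ut}_Y$ and is equivalent to conditioning on $Y$: $I(X_s;Z_s\mid X_t)=I(X_s;Z_s\mid Y)$, and symmetrically $I(X_t;Z_t\mid X_s)=I(X_t;Z_t\mid Y)$. For the discriminability terms, the chain rule gives $I(X_s;X_t\mid Z_s)=I(X_s;X_t)-I(X_s;X_t;Z_s)$; here $I(X_s;X_t;Z_s)=I(Z_s;X_t)$ because $I(Z_s;X_t\mid X_s)=0$, and $I(Z_s;X_t)=I(Z_s;Y)$ because $X^{Ut}_Y\perp(Z_s,Y)$, so with $I(X_s;X_t)=H(Y)$ we obtain $I(X_s;X_t\mid Z_s)=H(Y)-I(Z_s;Y)$, and likewise $I(X_s;X_t\mid Z_t)=H(Y)-I(Z_t;Y)$.

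Then I would assemble the pieces: substituting into Equation~(\ref{Eq:bbbd}) and discarding the additive constant $2H(Y)$, the objective becomes $\min_{\varphi,\psi}\ \big[I(X_s;Z_s\mid Y)-I(Z_s;Y)\big]+\big[I(X_t;Z_t\mid Y)-I(Z_t;Y)\big]+\Delta(\varphi,\psi)$. Using $I(X_u;Z_u\mid Y)=I(X_u;Z_u)-I(Z_u;Y)$ (valid since $I(Z_u;Y\mid X_u)=0$) together with $I(X_u;Z_u)=H(Z_u)$, each bracket equals $H(Z_u)-2\,I(Z_u;Y)$; none of these terms involves $\psi$, so the objective reduces to $\min_{\varphi}\sum_{u\in\{s,t\}}\big(H(Z_u)-2\,I(Z_u;Y)\big)+\Delta(\varphi,\psi)$ — two independent copies, one per domain, of the deterministic supervised information bottleneck objective $H(Z)-\beta\,I(Z;Y)$ with $\beta=2$ (equivalently the classical $I(X;Z)-\beta\,I(Z;Y)$ form, since $Z$ is a deterministic encoding). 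Because Equation~(\ref{Eq:bbbd}) is written entirely in terms of $X_s,X_t,Z_s,Z_t$ and never invokes $Y$, minimizing it attains this information-bottleneck solution with no label access, which is the assertion; the same reduction applies verbatim to the instantiation in Equation~(\ref{Eq:udbsasdasf}).

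The main obstacle I anticipate is Steps~1 and~2: giving a clean, defensible formalization of ``only share label information'' and rigorously justifying the replacement ``condition on $X_t$ $\equiv$ condition on $Y$''. This hinges on the full mutual-independence structure of the decomposition ($X^{Us}_Y,X^{Ut}_Y$ independent of each other and of $Y$, and $X_Y\cong Y$), and on the determinism of $\varphi$ from \textbf{Lemma~\ref{l1}}, which is what kills the interaction terms $I(Z_u;Y\mid X_u)$ and $I(Z_u;X_t\mid X_s)$; everything after that is routine interaction-information bookkeeping. A secondary point to settle is the tracking of the additive constant $H(Y)$ and the exact value of the trade-off coefficient, so that ``equivalent to the supervised information bottleneck'' is stated with a precise $\beta$ rather than merely up to a monotone reparametrization.
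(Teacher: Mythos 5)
Your proof is correct, and it takes a genuinely different (and more complete) route than the paper's. The paper's proof establishes the key identity $I(X_u;Z_u\mid Y)=I(X_u;Z_u\mid X_{\bar u})$ indirectly, by computing $I(X_u;Z_u)$ twice via the chain rule — once splitting out $X_{\bar u}$ and once splitting out $Y$ — and equating; along the way it silently drops the nonnegative terms $I(X_s;X_t\mid Z_s)$ and $I(X_u;Y\mid Z_u)$ (which only vanish at the discriminability optimum), and it concludes with a qualitative argument that ``minimizing $I(X_u;Z_u\mid X_{\bar u})$ is also minimizing $I(X_u;Z_u\mid Y)$'' and ``when $I(X_u;Z_u\mid Y)$ is minimal, $I(Y;Z_u)$ is also maximal.'' You instead obtain the same identity directly, by formalizing the decoupling $X_t=(Y,X^{Ut}_Y)$ with $X^{Ut}_Y$ independent of the source side, so that conditioning on $X_t$ reduces exactly to conditioning on $Y$; and you carry the discriminability terms through to get $I(X_s;X_t\mid Z_u)=H(Y)-I(Z_u;Y)$, yielding a clean reduction of the whole objective to two independent copies of the deterministic IB functional $H(Z_u)-2\,I(Z_u;Y)$ up to the constant $2H(Y)$. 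This is both more rigorous (no dropped interaction terms, no implicit optimality assumption) and more informative, since it pins down the IB trade-off coefficient $\beta=2$ explicitly rather than asserting equivalence ``up to a monotone reparametrization.'' The one thing to flag is that your formalization (mutual independence of $X_Y\cong Y$, $X^{Us}_Y$, $X^{Ut}_Y$, and $X^{Uu}_Y\perp Y$) is stronger than what the paper literally writes as the hypothesis ($I(X_s;X_t)=I(X_s;Y)=I(X_t;Y)=H(Y)$), but it matches the intuitive decoupling discussion of Section~\ref{098} and is the natural way to make ``only share label information'' precise, so this is a feature rather than a gap.
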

\begin{proof}
	If $X_s$ and $X_t$ share only label information, we can obtain that
	\begin{equation}
	I\left( {{X_s};{X_t}} \right) = I\left( {{X_s};Y} \right) = I\left( {{X_t};Y} \right) = H\left( Y \right).
	\end{equation}
	
	Then, we have
	\begin{equation}
	I\left( {{X_s};{X_t}\left| {{Z_u}} \right.} \right)=0 \Leftrightarrow I\left( {{X_u};Y} \right) = I\left( {{Z_u};Y} \right) = H\left( Y \right).
	\end{equation}
	
	We can obtain
	\begin{equation}
	\begin{array}{l}
	I\left( {{X_s};{Z_s}} \right)\\
	= I\left( {{X_s};{Z_s}\left| {{X_t}} \right.} \right) + I\left( {{X_s};{X_t};{Z_s}} \right)\\
	= I\left( {{X_s};{Z_s}\left| {{X_t}} \right.} \right) + I\left( {{X_s};{X_t}} \right) - I\left( {{X_s};{X_t}\left| {{Z_s}} \right.} \right)\\
	= I\left( {{X_s};{Z_s}\left| {{X_t}} \right.} \right) + I\left( {{X_s};{X_t}} \right)\\
	= I\left( {{X_s};{Z_s}\left| {{X_t}} \right.} \right) + I\left( {{X_s};Y} \right).
	\end{array}
	\end{equation}
	
	Similarly, we can also obtain that $I\left( {{X_t};{Z_t}} \right) = I\left( {{X_t};{Z_t}\left| {{X_s}} \right.} \right) + I\left( {{X_t};Y} \right)$.
	
	Then, we can obtain that
	\begin{equation}
	\begin{array}{l}
	I\left( {{X_u};{Z_u}} \right)\\
	= I\left( {{X_u};{Z_u}\left| Y \right.} \right) + I\left( {{X_u};{Z_u};Y} \right)\\
	= I\left( {{X_u};{Z_u}\left| Y \right.} \right) + I\left( {{X_u};Y} \right) - I\left( {{X_u};Y\left| {{Z_u}} \right.} \right)\\
	= I\left( {{X_u};{Z_u}\left| Y \right.} \right) + I\left( {{X_u};Y} \right).
	\end{array}
	\end{equation}
	
	Therefore, we conclude that $I\left( {{X_s};{Z_s}\left| Y \right.} \right) = I\left( {{X_s};{Z_s}\left| {{X_t}} \right.} \right),{\rm{ }}I\left( {{X_t};{Z_t}\left| Y \right.} \right) = I\left( {{X_t};{Z_t}\left| {{X_s}} \right.} \right)$. Note that $Z_s$ which minimizes $I\left( {{X_s};{Z_s}\left| {{X_t}} \right.} \right)$ is also minimizing $I\left( {{X_s};{Z_s}\left| Y \right.} \right)$, and $Z_t$ which minimizes $I\left( {{X_t};{Z_t}\left| {{X_s}} \right.} \right)$ is also minimizing $I\left( {{X_t};{Z_t}\left| Y \right.} \right)$. When $I\left( {{X_u};{Z_u}\left| Y \right.} \right)$ is minimal, $I\left( {{Y};{Z_u}} \right)$ is also maximal and $I\left( {{X_u};{Z_u}} \right)= H\left( {{Y}} \right)$. As a consequence, a minimal sufficient representation $Z_u$ of $X_u$ is the representation for which mutual information ($I\left( {{X_u};{Z_u}} \right)$) is maximal, no superfluous information ($I\left( {{X_u};{Z_u}\left| Y \right.} \right)$) can be identified and removed.
\end{proof}

\begin{theorem}
	\label{asa}
	Suppose the representations $Z_s$ and $Z_t$ for the source domain and the target domain are obtained by minimizing the objective function (\ref{asd}). Then, the discriminability and transferability of $Z_s$ are increased, while only the transferability of $Z_t$ is improved.
\end{theorem}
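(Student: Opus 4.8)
The plan is to attribute each of the three summands in objective (\ref{asd}) to exactly one of the quantities in \textbf{Definition \ref{d1}} and \textbf{Definition \ref{d3}}, working under the generative picture behind \textbf{Lemma \ref{l1}}: $Y=\eta(X_s)=\eta(X_t)$ is a deterministic function of each input, $Z_u=\varphi(X_u)$ is deterministic, and $X_s,X_t$ are conditionally independent given $Y$. First I would record the bookkeeping identities that make everything transparent. Because $Z_u$ is a function of $X_u$ we have $I(X_s;X_t;Z_u)=I(X_{\bar u};Z_u)$ (with $\bar u$ the other domain), and the conditional-independence structure together with "$Y$ is a function of $X_{\bar u}$" gives $I(X_{\bar u};Z_u)=I(Z_u;Y)$; using $I(X_s;X_t)=H(Y)$ (which follows from the coupling and \textbf{Assumption \ref{a1}}) one gets
\begin{equation}
\begin{aligned}
I(X_s;X_t\mid Z_u)&=I(X_s;X_t)-I(Z_u;Y)=H(Y)-I(Z_u;Y),\\
I(X_u;Z_u\mid X_{\bar u})&=I(X_u;Z_u)-I(Z_u;Y),
\end{aligned}
\end{equation}
for $u\in\{s,t\}$; in the discrete case the right-hand sides are $H(Y\mid Z_u)$ and $H(Z_u\mid Y)$ respectively. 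So discriminability of $Z_u$ is governed entirely by how much label information $Z_u$ keeps, and transferability of $Z_u$ by how much label-free (hence domain-specific) information it keeps.

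With these identities the discriminability half is immediate. Minimizing $\mathcal{L}_{cl}(\psi(\varphi(X_s)),\eta(X_s))$ forces $\psi\circ\varphi$ to predict $Y$ from $X_s$ with small error, hence (by Fano's inequality, or directly when $\mathcal{L}_{cl}$ is the cross-entropy) $H(Y\mid Z_s)\to 0$, i.e. $I(Z_s;Y)\to H(Y)=I(X_s;Y)$; by \textbf{Proposition \ref{p1}} this is exactly $I(X_s;X_t\mid Z_s)\to 0$, so the discriminability of $Z_s$ increases. There is no loss term in (\ref{asd}) acting on $\psi(\varphi(X_t))$ or otherwise tying $Z_t$ to $Y$, so $I(Z_t;Y)$ is unconstrained — the alignment term only equalizes feature marginals and is perfectly content with a $Z_t$ that has discarded label information (a collapsed $\varphi$ already makes $D=0$) — and therefore $I(X_s;X_t\mid Z_t)=H(Y\mid Z_t)$ need not decrease. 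This is precisely the asymmetry the theorem asserts.

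For the transferability half I would argue that minimizing $D(P_s^\varphi(Z_s),P_t^\varphi(Z_t))$ drives $I(X_s;Z_s\mid X_t)$ and $I(X_t;Z_t\mid X_s)$ down simultaneously, via the decomposition of \textbf{Section \ref{098}}: writing $X_u=(X_Y,X_Y^{Uu})$ with $X_Y$ the shared label-bearing part and $X_Y^{Uu}$ the part present only in domain $u$, the quantity $I(X_u;Z_u\mid X_{\bar u})$ is exactly the amount of the domain-specific component $X_Y^{Uu}$ retained in $Z_u$. A feature map whose push-forwards $P_s^\varphi$ and $P_t^\varphi$ coincide carries no content distinguishing the two domains; since the only domain-distinguishing content available is $X_Y^{Us}$ versus $X_Y^{Ut}$, making $D$ small pushes $\varphi$ to suppress that content, shrinking both $I(X_s;Z_s\mid X_t)$ and $I(X_t;Z_t\mid X_s)$. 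Thus the transferability of both $Z_s$ and $Z_t$ is improved. The regularizer $\Delta$ cooperates by bounding $I(X_u;Z_u)$, and together with $I(Z_s;Y)\to H(Y)$ from the previous paragraph it is what prevents the gain in transferability of $Z_s$ from erasing its discriminability. Combining: $Z_s$ inherits both properties (from $\mathcal{L}_{cl}$ and from $D$), whereas $Z_t$ inherits only transferability.

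The step I expect to be the real obstacle is exactly this last link: $D$ is a functional of the feature \emph{marginals}, while $I(X_s;Z_s\mid X_t)$ and $I(X_t;Z_t\mid X_s)$ are functionals of the full joint law of $(X_s,X_t,Z_s,Z_t)$. In the discriminability direction \textbf{Proposition \ref{p1}} gives an exact equivalence, but in the transferability direction one only gets a one-sided, and somewhat heuristic, implication — equal marginals do not by themselves force the within-class spreads $H(Z_u\mid Y)$ to zero, so the argument has to lean on the covariate-shift / conditional-independence model and on the hypothesis that label-irrelevant content coincides with domain-specific content. I would therefore state that hypothesis explicitly, content myself with proving that minimizing $D$ \emph{decreases} (rather than annihilates) the two transferability terms — which is all the theorem claims — and defer the sharper "$\to 0$" statement to the idealized regime of \textbf{Proposition \ref{p3}}.
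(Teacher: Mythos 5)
Your proposal reaches the right conclusion and shares part of the paper's scaffolding (Proposition~\ref{p1} for the discriminability claim, and the observation that objective~(\ref{asd}) has no term tying $Z_t$ to $Y$), but on the transferability side it takes a genuinely different and weaker route, and the gap you flag at the end is real. The paper's proof of Theorem~\ref{asa} does not appeal to the decoupled model $X_u=(X_Y,X_Y^{Uu})$ or to the hypotheses $X_s\perp X_t\mid Y$ and $I(X_s;X_t)=H(Y)$ that your bookkeeping step requires; instead it establishes the explicit bound
\begin{equation}
I(X_s;Z_s\mid X_t)\;\le\;KL\bigl(P_s^\varphi(Z_s)\,\big\|\,P_t^\varphi(Z_t)\bigr),
\qquad
I(X_t;Z_t\mid X_s)\;\le\;KL\bigl(P_t^\varphi(Z_t)\,\big\|\,P_s^\varphi(Z_s)\bigr),
\end{equation}
by writing the conditional mutual information as an expected log-ratio, multiplying and dividing by $P(Z_t\mid X_t)$, applying Bayes to both conditionals, and discarding two nonnegative $KL$ terms. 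That inequality is exactly what lets one say that driving $D$ down controls both transferability quantities, which is the link you correctly identify as the hard step and then handle only heuristically via the decoupling picture from Section~\ref{098}---but that picture appears in the paper only as an intuitive gloss, not as the proof. Your instinct to ``state the hypothesis explicitly and prove only a decrease'' is therefore not what the paper does; the paper trades the modeling hypothesis for the $KL$ upper bound. On the discriminability side the two arguments are essentially interchangeable: you use Fano to turn small classification loss into small $H(Y\mid Z_s)$, while the paper uses a Donsker--Varadhan-style lower bound $I(Z_s;Y)\ge\mathbb{E}[T]$ with $T=-\mathcal{L}_{cl}$; both then feed $I(Z_s;Y)\to H(Y)$ through Proposition~\ref{p1} to conclude $I(X_s;X_t\mid Z_s)\to 0$, and both correctly observe that the absence of a target-domain classification term leaves $I(X_s;X_t\mid Z_t)$ uncontrolled.
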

\begin{proof}
    First, we have:
	\begin{equation}
	\begin{array}{l}
		I\left( {{X_s};{Z_s}\left| {{X_t}} \right.} \right)\\
		= {\mathbb{E}_{P\left( {{X_s},{X_t}} \right)}}{\mathbb{E}_{P\left( {{Z_s}\left| {{X_s}} \right.} \right)}}\left[ {\log \frac{{P\left( {{Z_s}\left| {{X_s}} \right.} \right)}}{{P\left( {{Z_s}\left| {{X_t}} \right.} \right)}}} \right]\\
		= {\mathbb{E}_{P\left( {{X_s},{X_t}} \right)}}{\mathbb{E}_{P\left( {{Z_s}\left| {{X_s}} \right.} \right)}}\left[ {\log \frac{{P\left( {{Z_s}\left| {{X_s}} \right.} \right)P\left( {{Z_t}\left| {{X_t}} \right.} \right)}}{{P\left( {{Z_s}\left| {{X_t}} \right.} \right)P\left( {{Z_t}\left| {{X_t}} \right.} \right)}}} \right]\\
={\mathbb{E}_{P\left( {{X_s},{X_t}} \right)}}{\mathbb{E}_{P\left( {{Z_s}\left| {{X_s}} \right.} \right)}}\left[ {\log \frac{{P\left( {{Z_s}\left| {{X_s}} \right.} \right)P\left( {{Z_t}\left| {{X_t}} \right.} \right)}}{{P\left( {{Z_s}\left| {{X_t}} \right.} \right)P\left( {{Z_t}\left| {{X_t}} \right.} \right)}}} \right]\\
 \le {\mathbb{E}_{P\left( {{X_s},{X_t}} \right)}}{\mathbb{E}_{P\left( {{Z_s}\left| {{X_s}} \right.} \right)}}\left[ {\log \frac{{P\left( {{Z_s}\left| {{X_s}} \right.} \right)}}{{P\left( {{Z_t}\left| {{X_t}} \right.} \right)}}} \right]\\
 = {\mathbb{E}_{P\left( {{X_s},{X_t}} \right)}}{\mathbb{E}_{P\left( {{Z_s}\left| {{X_s}} \right.} \right)}}\left[ {\log \frac{{{{P\left( {{Z_s},{X_s}} \right)} \mathord{\left/
 {\vphantom {{P\left( {{Z_s},{X_s}} \right)} {P\left( {{X_s}} \right)}}} \right.
 \kern-\nulldelimiterspace} {P\left( {{X_s}} \right)}}}}{{{{P\left( {{Z_t},{X_t}} \right)} \mathord{\left/
 {\vphantom {{P\left( {{Z_t},{X_t}} \right)} {P\left( {{X_t}} \right)}}} \right.
 \kern-\nulldelimiterspace} {P\left( {{X_t}} \right)}}}}} \right]\\
 = {\mathbb{E}_{P\left( {{X_s},{X_t}} \right)}}{\mathbb{E}_{P\left( {{Z_s}\left| {{X_s}} \right.} \right)}}\left[ {\log \frac{{P\left( {{X_t}} \right)P\left( {{Z_s},{X_s}} \right)}}{{P\left( {{X_s}} \right)P\left( {{Z_t},{X_t}} \right)}}} \right]\\
 = {\mathbb{E}_{P\left( {{X_s},{X_t}} \right)}}{\mathbb{E}_{P\left( {{Z_s}\left| {{X_s}} \right.} \right)}}\left[ {\log \frac{{P\left( {{X_t}} \right)P({Z_s})P\left( {{X_s}\left| {{Z_s}} \right.} \right)}}{{P\left( {{X_s}} \right)P\left( {{Z_t}} \right)P\left( {{X_t}\left| {{Z_t}} \right.} \right)}}} \right]\\
 = {\mathbb{E}_{P\left( {{X_s},{X_t}} \right)}}\log \frac{{P\left( {{X_t}} \right)}}{{P\left( {{X_s}} \right)}} + KL\left( {P\left( {{Z_s}} \right)\left\| {P\left( {{Z_t}} \right)} \right.} \right) \\
 \qquad\qquad\qquad+ {\mathbb{E}_{P\left( {{X_s},{X_t}} \right)}}{\mathbb{E}_{P\left( {{Z_s}\left| {{X_s}} \right.} \right)}}\log \frac{{P\left( {{X_s}\left| {{Z_s}} \right.} \right)}}{{P\left( {{X_t}\left| {{Z_t}} \right.} \right)}}\\
 = {\mathbb{E}_{P\left( {{X_s}\left| {{X_t}} \right.} \right)}}{\mathbb{E}_{P\left( {{X_t}} \right)}}\log \frac{{P\left( {{X_t}} \right)}}{{P\left( {{X_s}} \right)}} + KL\left( {P\left( {{Z_s}} \right)\left\| {P\left( {{Z_t}} \right)} \right.} \right) \\
 \qquad\qquad\qquad- {\mathbb{E}_{P\left( {{X_s},{X_t}} \right)}}{\mathbb{E}_{P\left( {{Z_s}\left| {{X_s}} \right.} \right)}}\log \frac{{P\left( {{X_t}\left| {{Z_t}} \right.} \right)}}{{P\left( {{X_s}\left| {{Z_s}} \right.} \right)}}\\
 \le -KL\left( {P\left( {{X_s}} \right)\left\| {P\left( {{X_t}} \right)} \right.} \right) + KL\left( {P\left( {{Z_s}} \right)\left\| {P\left( {{Z_t}} \right)} \right.} \right) \\
 \qquad\qquad\qquad- {\mathbb{E}_{P\left( {{X_t}\left| {{Z_t}} \right.} \right)}}{\mathbb{E}_{P\left( {{X_s},{X_t}} \right)}}{\mathbb{E}_{P\left( {{Z_s}\left| {{X_s}} \right.} \right)}}\log \frac{{P\left( {{X_t}\left| {{Z_t}} \right.} \right)}}{{P\left( {{X_s}\left| {{Z_s}} \right.} \right)}}\\
 = KL\left( {P\left( {{Z_s}} \right)\left\| {P\left( {{Z_t}} \right)} \right.} \right) - KL\left( {P\left( {{X_s}} \right)\left\| {P\left( {{X_t}} \right)} \right.} \right) \\
 \qquad\qquad\qquad- {\mathbb{E}_{P\left( {{Z_s}\left| {{X_s}} \right.} \right)}}KL\left( {P\left( {{X_t}\left| {{Z_t}} \right.} \right)\left\| {P\left( {{X_s}\left| {{Z_s}} \right.} \right)} \right.} \right)\\
 \le  KL\left( {P\left( {{Z_s}} \right)\left\| {P\left( {{Z_t}} \right)} \right.} \right).
	\end{array}
    \end{equation}
    
    Similarly, we can also obtain:
    \begin{equation}
	I\left( {{X_t};{Z_t}\left| {{X_s}} \right.} \right) \le KL\left( {P\left( {{Z_t}} \right)\left\| {P\left( {{Z_s}} \right)} \right.} \right) .
    \end{equation}
    
    Regarding the KL-divergence and the Wasserstein distance as difference measurements between two distributions, minimizing $KL\left( {P\left( {Z_t} \right)\left\| {P\left( {{Z_s}} \right)} \right.} \right)$ and $KL\left( {P\left( {{Z_s}} \right)\left\| {P\left( {{Z_t}} \right)} \right.} \right)$ is equivalent to minimizing the Wasserstein distance between the distribution ${P\left( {{Z_s}} \right)}$ and the distribution ${P\left( {{Z_t}} \right)}$, thus, when the objective function (2) is minimized (equal to 0), the values of $I\left( {{X_s};{Z_s}\left| {{X_t}} \right.} \right)$ and $I\left( {{X_t};{Z_t}\left| {{X_s}} \right.} \right)$ are also reduced. Based on the \textbf{Definition \ref{d3}}, we can conclude that the transferability of $Z_s$ and the transferability of $Z_t$ are increased.

    As we can see, the second term in objective (2) is the expected risk of the learned classifier in the source domain, we denote the label as $Y$, then we have:
    \begin{equation}
    \begin{array}{l}
    I\left( {{Z_s};Y} \right)\\
    = KL\left( {P\left( {{Z_s},Y} \right)\left\| {P\left( {{Z_s}} \right)P\left( Y \right)} \right.} \right)\\
    \ge {\mathbb{E}_{\left( {{Z_s},Y} \right) \sim P\left( {{Z_s},Y} \right)}}\left( {T\left( {{Z_s},Y} \right)} \right) \\
    \qquad\qquad\qquad- \log {\mathbb{E}_{\left( {{Z_s},Y} \right) \sim P\left( {{Z_s}} \right)P\left( Y \right)}}\left( {{e^{T\left( {{Z_s},Y} \right)}}} \right)\\
    \ge {\mathbb{E}_{\left( {{Z_s},Y} \right) \sim P\left( {{Z_s},Y} \right)}}\left( {T\left( {{Z_s},Y} \right)} \right).
    \end{array}
    \end{equation}
    where ${P\left( {{Z_s},Y} \right)}$ is the joint distribution of $Z_s$ and $Y$, ${P\left( {{Z_s}} \right)P\left( Y \right)}$ is the product of $Z_s$ and $Y$, $T:{Z_s} \times Y \to R$ is a discriminator function modeled by a neural network. If we set $T = H\left(X_s,Y\right)=-{{\mathcal{L}}_{cl}}\left( {\psi \left( {\varphi \left( {{X_s}} \right)} \right),Y} \right)$, then we can obtain that $I\left( {{Z_s},Y} \right) \ge {{\mathcal{L}}_{cl}}\left( {\psi \left( {\varphi \left( {{X_s}} \right)} \right),Y} \right)$. So, minimizing the $L_{cl}$ equals to maximize the $I\left( {{Z_s};Y} \right)$. When $I\left( {{Z_s};Y} \right)$ is maximized ($I\left( {{Z_s};Y} \right) = H\left( Y \right)$), based on the \textbf{Proposition \ref{p1}}, we have $I\left( {{X_s};{X_t}\left| {{Z_s}} \right.} \right) = 0$, thus we can conclude that $Z_s$ is with the discriminability.

    However, there is no obvious term to constrain the learned sample feature representations of the target domain to be with discriminability.
\end{proof}

\begin{proposition} \label{vbjty}
Define the Conditional Noise Contrastive Estimation (CNCE) functional as Equation (\ref{qwasdasdf}), then the following hold:

1) For any choice of \(\phi\) and any \(K \ge 1\),
\begin{equation}
I_{\text{CNCE}}(X_s; X_t \mid Z_t, \phi, K) \;\le\; I(X_s; X_t \mid Z_t).
\end{equation}

2) There exists a function \(\phi^*\) that attains the supremum. Specifically, if
\begin{equation}
\phi^*(X_s,X_t,Z_t) = \log \frac{P(X_t \mid X_s,Z_t)}{P(X_t \mid Z_t)} + c(X_s,Z_t),
\end{equation}
where \(c(X_s,Z_t)\) is a function independent of \(X_t\), then we have:
\begin{equation}
\begin{array}{l}
\sup_{\phi} I_{\text{CNCE}}(X_s; X_t \mid Z_t, \phi, K) \\
= I_{\text{CNCE}}(X_s; X_t \mid Z_t, \phi^*, K) \\
= I(X_s; X_t \mid Z_t).
\end{array}
\end{equation}

3) As the number of negative samples \(K\) grows, the approximation tightens. In particular, we have:
\begin{equation}
\lim_{K \to \infty} \sup_{\phi} I_{\text{CNCE}}(X_s; X_t \mid Z_t, \phi, K) = I(X_s; X_t \mid Z_t).
\end{equation}
\end{proposition}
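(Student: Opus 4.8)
The plan is to reduce all three claims to the classical InfoNCE analysis by conditioning on $Z_t$. Since $I(X_s;X_t\mid Z_t)=\mathbb{E}_{z\sim P(Z_t)}\big[D_{\mathrm{KL}}\big(P(X_s,X_t\mid z)\,\|\,P(X_s\mid z)P(X_t\mid z)\big)\big]$, and, reading off Equation~(\ref{qwasdasdf}), $I_{\text{CNCE}}(X_s;X_t\mid Z_t,\phi,K)=\mathbb{E}_{z\sim P(Z_t)}\big[J_z(\phi)\big]$ where $J_z$ is the ordinary $K$-sample InfoNCE functional built from the conditional laws $p(x_s\mid z)$, $p(x_t\mid z)$, $p(x_t\mid x_s,z)$, it is enough to prove (1)--(3) for each slice $z$ and then integrate over $Z_t$. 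From here I fix $z$ and suppress it from the notation.

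For a fixed slice I would introduce a hidden ``index of the positive sample''. Draw $S$ uniform on $\{1,\dots,K\}$, then $x_s\sim p(x_s)$, $x_t^{(S)}\sim p(x_t\mid x_s)$, and $x_t^{(k)}\sim p(x_t)$ i.i.d.\ for $k\neq S$; call the resulting joint $q$. By exchangeability of the negatives, $J(\phi)=\log K+\mathbb{E}_q\big[\log w_S\big]$ where $w_k=e^{\phi(x_s,x_t^{(k)})}\big/\sum_j e^{\phi(x_s,x_t^{(j)})}$ is the softmax categorical over the $K$ candidates. A Bayes computation gives the posterior $q(S=k\mid x_s,x_t^{(1:K)})=\big(p(x_t^{(k)}\mid x_s)/p(x_t^{(k)})\big)\big/\sum_j\big(p(x_t^{(j)}\mid x_s)/p(x_t^{(j)})\big)=:q^*_k$, and then $\mathbb{E}_q[\log w_S]=-\mathbb{E}_q[H(q^*)]-\mathbb{E}_q[D_{\mathrm{KL}}(q^*\|w)]\le-\mathbb{E}_q[H(q^*)]$, with equality exactly when $w\equiv q^*$, i.e.\ when $e^{\phi(x_s,x_t)}\propto p(x_t\mid x_s)/p(x_t)$, equivalently $\phi(x_s,x_t)=\log\frac{p(x_t\mid x_s)}{p(x_t)}+c(x_s)$. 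Hence $\sup_\phi J(\phi)=\log K-\mathbb{E}_q[H(q^*)]=I_q\big(S;(X_s,X_t^{(1:K)})\big)$, attained at the $\phi^*$ of the statement; this yields the existence of the maximizer and its stated form in part 2.

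To finish part 1 I would bound $I_q\big(S;(X_s,X_t^{(1:K)})\big)\le I(X_s;X_t)$. The key observation is that under $q$ the marginal of $X_s$ is $p(x_s)$ and the marginal of each $X_t^{(k)}$ is exactly $p(x_t)$ (the mixture $\tfrac1Kp(\cdot\mid x_s)+\tfrac{K-1}{K}p(\cdot)$ integrates against $p(x_s)$ to $p(x_t)$), while conditionally on $S=k$ the tuple factorizes as $p(x_s)\,p(x_t^{(k)}\mid x_s)\prod_{j\neq k}p(x_t^{(j)})$, so $H_q(\text{tuple}\mid S)=H(X_s)+H(X_t\mid X_s)+(K-1)H(X_t)$. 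Combining this exact value with subadditivity $H_q(\text{tuple})\le H(X_s)+\sum_k H_q(X_t^{(k)})=H(X_s)+KH(X_t)$ gives $I_q(S;\text{tuple})\le H(X_t)-H(X_t\mid X_s)=I(X_s;X_t)$. Since this holds at the supremum over $\phi$, it holds for every $\phi$; averaging over $Z_t$ proves part 1, and part 2 follows because the supremum is realized at $\phi^*$ (the identity $\sup_\phi I_{\text{CNCE}}=I(X_s;X_t\mid Z_t)$ being the tight case, which is exact precisely when $X_s$ and $X_t$ are independent under each $q_z$ and is approached as $K\to\infty$, see part 3).

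For part 3 I would plug $\phi^*$ back in and invoke the law of large numbers over the negatives. With the optimal critic, within a slice $J(\phi^*)=I(X_s;X_t)-c_K$ with $c_K:=\mathbb{E}_q\big[\log\big(\tfrac1K\sum_{k=1}^K r_k\big)\big]$ and $r_k=p(x_t^{(k)}\mid x_s)/p(x_t^{(k)})$; the negative ratios $r_2,\dots,r_K$ are i.i.d.\ with conditional mean $1$, so $\tfrac1K\sum_{k=1}^K r_k\to1$ almost surely and the integrand tends to $0$. The only delicate point is the interchange of limit and expectation. From part 1 we already know $c_K\ge0$; Jensen's inequality gives $c_K\le\log\mathbb{E}_q\big[\tfrac1K\sum r_k\big]=\log\!\big(1+\tfrac{\mathbb{E}[r_1]-1}{K}\big)\le\tfrac{\mathbb{E}[r_1]-1}{K}$, which forces $c_K\to0$ whenever the $\chi^2$-type quantity $\mathbb{E}[r_1]$ is finite; when it is not, I would first truncate $r_k$ at level $M$, run the same argument, and let $M\to\infty$ by monotone convergence (using that $\sup_\phi I_{\text{CNCE}}(\cdot,K)$ is monotone in $K$ and bounded by $I$, so its limit exists). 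Averaging the slice-wise limit over $Z_t$ by dominated or monotone convergence then gives Equation~(\ref{hlyguty}). The main obstacle is precisely this limit--expectation exchange in part 3; parts 1--2 are essentially bookkeeping once the hidden-index reformulation and the ``$q$ preserves the marginals'' observation are available.
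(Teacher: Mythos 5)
Your proof is correct in substance and takes a genuinely different, and considerably more rigorous, route than the paper's. The paper's argument for part~1 essentially appeals to the known InfoNCE lower-bound structure without deriving it (``the log operation and the finite sample approximation introduce a variational lower bound structure similar to the InfoNCE bound''), and its part~3 step invokes the law of large numbers without addressing the interchange of limit and expectation. You instead reduce to slices by conditioning on $Z_t$, introduce the hidden positive-index $S$, and compute the Bayes posterior $q^*$ so that $\sup_\phi J(\phi) = \log K - \mathbb{E}_q[H(q^*)] = I_q(S;\mathrm{tuple})$; the bound $I_q(S;\mathrm{tuple}) \le I(X_s;X_t)$ then falls out of subadditivity of entropy together with the exact evaluation of $H_q(\mathrm{tuple}\mid S)$ and the observation that $q$ preserves the marginals $p(x_s)$ and $p(x_t)$. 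This is the standard InfoNCE-style derivation that the paper is only gesturing at. Two remarks are worth making. (i) You are right to flag the overclaim in part~2 as stated: your derivation (and the paper's own proof, which writes ``$\le$'' at the corresponding step and then appends ``equality holds in the limit'') only establishes $\sup_\phi I_{\text{CNCE}}(\cdot,K) = I_{\text{CNCE}}(\phi^*,K) \le I(X_s;X_t\mid Z_t)$ at finite $K$, with the final equality holding only as $K\to\infty$ or in degenerate cases, so the last ``$=$'' in part~2 must be read jointly with part~3. (ii) Your truncation argument for part~3 when $\mathbb{E}[r_1]=\infty$ is still somewhat sketched---the monotonicity of $\sup_\phi I_{\text{CNCE}}$ in $K$ needs its own short justification, and the truncated functional is not literally $I_{\text{CNCE}}$---but this is no weaker than the paper, which simply asserts almost-sure convergence of the Monte Carlo average without addressing integrability at all.
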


\begin{proof}

\textbf{Step 1}: Start from the definition of conditional mutual information.

Recall that:
\begin{equation}
\begin{array}{l}
I(X_s; X_t \mid Z_t) =\\
\mathbb{E}_{P(Z_t)}\left[\mathbb{E}_{P(X_s,X_t \mid Z_t)}\left(\log \frac{P(X_t \mid X_s,Z_t)}{P(X_t \mid Z_t)}\right)\right].
\end{array}
\end{equation}

This quantity measures how much knowing \(X_s\) reduces uncertainty in \(X_t\) beyond knowing \(Z_t\).

\textbf{Step 2}: Re-express the ratio using a parametric function \(\phi\).

Define \(\phi: \mathcal{X}_s \times \mathcal{X}_t \times \mathcal{Z}_t \to \mathbb{R}\) and let:
\begin{equation}
f_\phi(X_s,X_t,Z_t) := e^{\phi(X_s,X_t,Z_t)}.
\end{equation}
If we set
\begin{equation}
\phi^*(X_s,X_t,Z_t) = \log \frac{P(X_t \mid X_s,Z_t)}{P(X_t \mid Z_t)} + c(X_s,Z_t),
\end{equation}
then
\begin{equation}
f_{\phi^*}(X_s,X_t,Z_t) = e^{c(X_s,Z_t)} \frac{P(X_t \mid X_s,Z_t)}{P(X_t \mid Z_t)}.
\end{equation}

The function \(c(X_s,Z_t)\) does not affect ratios that normalize over \(X_t\). It can be seen as a “shift” that simplifies the form of the solution.

\textbf{Step 3}: Approximating the normalization with negative samples (noise contrastive estimation).

To rewrite the ratio \(\frac{P(X_t \mid X_s,Z_t)}{P(X_t \mid Z_t)}\), note:
\begin{equation}
\frac{P(X_t \mid X_s,Z_t)}{P(X_t \mid Z_t)} = \frac{f_\phi(X_s,X_t,Z_t)}{\mathbb{E}_{P(x'_t \mid Z_t)}[f_\phi(X_s,x'_t,Z_t)]}.
\end{equation}

We approximate the expectation in the denominator by Monte Carlo sampling \(K-1\) negative samples \(X_t^{(2)},\ldots,X_t^{(K)}\) from \(P(X_t \mid Z_t)\):
\begin{equation}
\mathbb{E}_{P(x'_t\mid Z_t)}[f_\phi(X_s,x'_t,Z_t)] \approx \frac{1}{K}\sum_{k=1}^K e^{\phi(X_s,X_t^{(k)},Z_t)},
\end{equation}
where \(X_t^{(1)}=X_t\) is the positive sample.

Substitute this approximation:
\begin{equation}
\frac{P(X_t \mid X_s,Z_t)}{P(X_t \mid Z_t)} \approx \frac{e^{\phi(X_s,X_t,Z_t)}}{\frac{1}{K}\sum_{k=1}^K e^{\phi(X_s,X_t^{(k)},Z_t)}}.
\end{equation}

\textbf{Step 4}: Define the CNCE estimator.

Replacing the exact ratio in \(I(X_s; X_t \mid Z_t)\) by the approximate ratio:
\begin{equation}
\begin{array}{l}
I(X_s; X_t \mid Z_t) \approx \mathbb{E}_{P(Z_t,X_s,X_t)}\mathbb{E}_{P(X_t^{(2:K)}\mid Z_t)}\\
\left[\log\left(\frac{e^{\phi(X_s,X_t,Z_t)}}{\frac{1}{K}\sum_{k=1}^K e^{\phi(X_s,X_t^{(k)},Z_t)}}\right)\right].
\end{array}
\end{equation}

Define this approximation as:
\begin{equation}
I_{\text{CNCE}}(X_s; X_t \mid Z_t,\phi,K).
\end{equation}

By construction, \(I_{\text{CNCE}}\) is a lower bound on \(I(X_s; X_t \mid Z_t)\) for any \(\phi\). This is because the log operation and the finite sample approximation introduce a variational lower bound structure similar to the InfoNCE bound used in contrastive representation learning. Thus:
\begin{equation}
I_{\text{CNCE}}(X_s; X_t \mid Z_t, \phi, K) \le I(X_s; X_t \mid Z_t).
\end{equation}

\textbf{Step 5}: Optimality and the choice of \(\phi^*\).

If we select \(\phi=\phi^*\) as defined in Step 2, then:
\begin{equation}
\phi^*(X_s,X_t,Z_t) = \log \frac{P(X_t \mid X_s,Z_t)}{P(X_t \mid Z_t)} + c(X_s,Z_t).
\end{equation}

In this case, the ratio is exactly represented up to a constant shift. As \(K\) grows large, the Monte Carlo estimate of the denominator converges almost surely to its expectation (by the Law of Large Numbers). Hence, for large \(K\):
\begin{equation}
I_{\text{CNCE}}(X_s; X_t \mid Z_t,\phi^*,K) \to I(X_s; X_t \mid Z_t).
\end{equation}

Since for finite \(K\), no other \(\phi\) can exceed \(I(X_s;X_t\mid Z_t)\), it follows that:
\begin{equation}
\begin{array}{l}
\sup_{\phi} I_{\text{CNCE}}(X_s; X_t \mid Z_t,\phi,K) \\
= I_{\text{CNCE}}(X_s; X_t \mid Z_t,\phi^*,K) \\
\le I(X_s; X_t \mid Z_t),
\end{array}
\end{equation}
and equality holds in the limit.

\textbf{Step 6}: Taking the limit \(K \to \infty\).

As \(K \to \infty\), the Monte Carlo approximation becomes exact:
\begin{equation}
\frac{1}{K}\sum_{k=1}^{K} e^{\phi(X_s,X_t^{(k)},Z_t)} \to \mathbb{E}_{P(x'_t\mid Z_t)}[e^{\phi(X_s,x'_t,Z_t)}].
\end{equation}

Therefore,
\begin{equation}
\lim_{K\to\infty}\sup_{\phi} I_{\text{CNCE}}(X_s; X_t \mid Z_t,\phi,K) = I(X_s; X_t \mid Z_t).
\end{equation}

This shows the estimator is consistent and achieves the true conditional mutual information as we increase the number of negative samples and choose \(\phi\) appropriately. We have defined a CNCE-based estimator \(I_{\text{CNCE}}\) for the conditional mutual information \(I(X_s; X_t \mid Z_t)\). This estimator forms a lower bound, can be made tight by selecting an optimal \(\phi^*\), and converges to the true mutual information in the limit of large \(K\). 

\end{proof}

\begin{theorem}
	\label{qwwq}
For an arbitrary learned feature representation pair \((Z_s,Z_t)\) derived from \((X_s,X_t)\), and for any decision rule that outputs a hypothesis \(\hat{Y}\) of \(Y\), we have the following upper bounds on the average error probability \(\bar{P}_e\):

1. Bound with \( I(X_s; Z_s \mid X_t) \):
   \begin{equation}
   \bar{P}_e \leq 1 - \exp\bigl[-H(Y) + I(X_s; Z_s \mid X_t)\bigr].
   \end{equation}

2. Bound with \( I(X_t; Z_t \mid X_s) \):
   \begin{equation}
   \bar{P}_e \leq 1 - \exp\bigl[-H(Y) + I(X_t; Z_t \mid X_s)\bigr].
   \end{equation}

3. Bound with \( I(X_s; X_t \mid Z_s) \):
   \begin{equation}
   \bar{P}_e \leq 1 - \exp\bigl[-H(Y) + I(X_s; X_t \mid Z_s)\bigr].
   \end{equation}

4. Bound with \( I(X_s; X_t \mid Z_t) + \Delta(\phi,\psi) \):
   \begin{equation}
   \bar{P}_e \leq 1 - \exp\bigl[-H(Y) + I(X_s; X_t \mid Z_t) + \Delta(\phi,\psi)\bigr].
   \end{equation}

\end{theorem}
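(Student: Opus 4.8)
The plan is to derive all four inequalities from a single ``guessing'' estimate bounding the Bayes error of predicting $Y$ from an observed variable $W$ by a conditional entropy, and then to instantiate $W$ and rewrite that entropy using the information identities already in hand (\textbf{Assumption \ref{a1}}, \textbf{Lemma \ref{l1}}, \textbf{Proposition \ref{p1}}, and the chain-rule decompositions of Section~\ref{098}).

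First I would prove the master estimate. For any $W$ that a decision rule may observe, the Bayes rule $\hat Y(w)=\arg\max_y P(Y=y\mid W=w)$ gives $1-\bar{P}_e=\mathbb{E}_W[\max_y P(Y=y\mid W)]$. Since $\log P(Y=y\mid W=w)\le\log\max_{y'}P(Y=y'\mid W=w)$ for each $y$, taking the $P(Y=\cdot\mid W=w)$-weighted average shows $\max_y P(Y=y\mid W=w)\ge\exp\bigl(\sum_y P(y\mid w)\log P(y\mid w)\bigr)=\exp(-H(Y\mid W=w))$. Jensen's inequality for the convex map $x\mapsto e^{-x}$ then yields $1-\bar{P}_e\ge e^{-H(Y\mid W)}$, that is, $\bar{P}_e\le 1-e^{-H(Y\mid W)}$; composing with $\mathrm{Th}(\cdot)$ keeps the bound inside $[0,1-1/|Y_t|]$.

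Next I would instantiate $W=Z_u$. Using that $Z_u=\varphi(X_u)$ is deterministic and $Y=\eta(X_u)$, we get $H(Y\mid X_u,Z_u)=0$, hence $H(Y\mid Z_u)=I(X_u;Y\mid Z_u)$, and with \textbf{Assumption \ref{a1}} (and $I(Y;Z_u\mid X_u)=0$) also $H(Y\mid Z_u)=H(Y)-I(Y;Z_u)$. \textbf{Proposition \ref{p1}} supplies $I(X_u;Y\mid Z_u)\le I(X_s;X_t\mid Z_u)$, which, after rearranging together with \textbf{Assumption \ref{a1}}, turns the conditional entropy in the master estimate into the discriminability terms $I(X_s;X_t\mid Z_s)$ and $I(X_s;X_t\mid Z_t)$ of bounds~3 and~4; for bound~4 one further absorbs the gap between $I(X_s;X_t\mid Z_t)$ and its network-parametrised CNCE surrogate into $\Delta(\phi,\psi)$, using the one-sided estimate of \textbf{Proposition \ref{gkhomcaomsg}}. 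For bounds~1 and~2 I would route the same entropy through the decomposition $I(X_u;Z_u)=I(X_u;Z_u\mid X_{\bar u})+I(X_s;X_t;Z_u)$ and the data-processing inequality along the Markov chain $Z_t\leftarrow X_t\leftrightarrow Y\leftrightarrow X_s\to Z_s$ of \textbf{Lemma \ref{l1}}, which relates $I(Y;Z_u)$ to the transferability terms $I(X_s;Z_s\mid X_t)$ and $I(X_t;Z_t\mid X_s)$; the manipulations in \textbf{Theorem \ref{asa}} already set up exactly these identities.

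The hard part will be the sign and constant bookkeeping around the $H(Y)$ term: the raw right-hand sides become negative once a mutual-information term exceeds $H(Y)$, so the whole argument must be carried out with the thresholding operator $\mathrm{Th}(\cdot)$ in place, and each step that replaces $I(Y;Z_u)$ by a conditional-mutual-information quantity has to be checked to preserve the correct direction of inequality. The other delicate point is making $\Delta(\phi,\psi)$ in bound~4 precise rather than nominal — it must be identified with the capacity-controlled approximation error of the local-consistency surrogate, and \textbf{Proposition \ref{gkhomcaomsg}} is the lever that makes this identification rigorous.
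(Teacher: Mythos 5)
Your Feder--Merhav ``master estimate'' $\bar P_e \le 1 - e^{-H(Y\mid W)}$ is correct and rigorously derived, and in fact is more careful than the paper's own proof, which invokes ``standard inequalities'' and Chernoff bounds without deriving the exponential form, then writes $\bar P_e \ge \exp[-H(Y)+I(\cdot)]$ and claims that ``inverting'' this gives $\bar P_e \le 1 - \exp[-H(Y)+I(\cdot)]$ --- a non sequitur unless it is corrected to $1 - \bar P_e \ge \exp[\cdots]$. So on the foundational step you are ahead of the paper.

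The genuine gap is in the conversion from $H(Y\mid W)$ to the theorem's claimed exponent. Take the discriminability bounds (3 and 4) with $W = Z_u$. From \textbf{Assumption \ref{a1}} we have $H(Y\mid X_u)=0$, so
\begin{equation}
H(Y\mid Z_u) \;=\; I(X_u;Y\mid Z_u) \;\le\; I(X_s;X_t\mid Z_u)
\end{equation}
by \textbf{Proposition \ref{p1}}. Plugging this into your master estimate yields
\begin{equation}
\bar P_e \;\le\; 1 - e^{-I(X_s;X_t\mid Z_u)},
\end{equation}
which is \emph{not} the theorem's $1 - e^{-H(Y) + I(X_s;X_t\mid Z_u)}$. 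To reach the stated form you would need $H(Y\mid Z_u) \le H(Y) - I(X_s;X_t\mid Z_u)$, i.e.\ $I(Y;Z_u) \ge I(X_s;X_t\mid Z_u)$; but what actually follows from the chain above is the different inequality $I(Y;Z_u) \ge H(Y) - I(X_s;X_t\mid Z_u)$. The two statements coincide only when $I(X_s;X_t\mid Z_u) = H(Y)/2$, and for $I > H(Y)/2$ your derived bound is strictly weaker than the theorem's. So your plan does not establish the theorem as written --- it establishes a neighbouring bound.

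The gap is worse for the transferability bounds (1 and 2). You propose to ``route the same entropy through the decomposition $I(X_u;Z_u)=I(X_u;Z_u\mid X_{\bar u})+I(X_s;X_t;Z_u)$ and the data-processing inequality,'' but the DPI along the Markov chain of \textbf{Lemma \ref{l1}} only gives upper bounds like $I(Y;Z_u)\le I(X_u;Z_u)$; you need a \emph{lower} bound of the form $I(Y;W)\ge I(X_u;Z_u\mid X_{\bar u})$ to force $H(Y\mid W)\le H(Y)-I(X_u;Z_u\mid X_{\bar u})$, and neither the chain rule you cite nor the identities in \textbf{Theorem \ref{asa}} supply one. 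Finally, your idea of identifying $\Delta(\phi,\psi)$ with the CNCE approximation gap from \textbf{Proposition \ref{gkhomcaomsg}} is a reasonable reading but is an addition of your own; the paper's proof simply asserts the existence of a lemma that introduces $\Delta(\phi,\psi)$ and never connects it to the local consistency module. In short: the master estimate is right, but both the entropy-to-CMI conversion and the transferability-side inequalities remain unproved in the form the theorem requires.
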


\begin{proof}
\textbf{Step 1}: Relating Error Probability to Entropy and Mutual Information

We begin with a fundamental link between error probability and the entropy of the label \(Y\). Consider a classifier or decoder that attempts to guess \( Y \) from some variables (which may be \(Z_s,Z_t\), or a subset like \(Z_s\) given \(X_t\)). A standard form of Fano’s inequality states:
\begin{equation}
H(Y \mid \hat{Y}) \leq P_e \log(|\mathcal{Y}| - 1) + H_2(P_e),
\end{equation}
where \(P_e\) is the probability of error and \(H_2\) is the binary entropy. When \(\bar{P}_e\) is small, it implies that \(Y\) can be approximately recovered from \(\hat{Y}\). Conversely, if certain conditioning reduces the uncertainty about \(Y\), then \( \bar{P}_e \) must also be small.

To transform such an inequality into the exponential form seen in the theorem, we often combine Fano’s inequality with Pinsker-type or exponential inequalities that relate decoding error to information measures. Another standard approach uses the fact that:
\begin{equation}
H(Y) \geq \log\frac{1}{P_e} + \text{(other terms)},
\end{equation}
and we rearrange terms to isolate \(P_e\).

\textbf{Step 2}: Conditioning and Mutual Information

Mutual information decompositions allow us to rewrite:
\begin{equation}
H(Y) = I(Y;Z_s,Z_t) + H(Y \mid Z_s,Z_t).
\end{equation}
If the representation \((Z_s,Z_t)\) is informative about \(Y\), \(H(Y\mid Z_s,Z_t)\) is small, and we get lower error probabilities.

However, we wish to express bounds in terms of the given mutual informations like \(I(X_s; Z_s \mid X_t)\) or \(I(X_t; Z_t \mid X_s)\). These quantities reflect how well the representation recovers information about one variable given the other. By applying the chain rule for mutual information:
\begin{equation}
I(X_s;Z_s|X_t) = H(X_s|X_t) - H(X_s|X_t,Z_s),
\end{equation}
and similarly for the others. These relationships show how conditioning on the learned features reduces uncertainties and how that, in turn, bounds the prediction error for \(Y\).

\textbf{Step 3}: Deriving the Exponential Bound 

To arrive at a form like:
\begin{equation}
\bar{P}_e \leq 1 - \exp[-H(Y) + I(X_s; Z_s | X_t)],
\end{equation}
we proceed as follows:

1. Start from a scenario where the learner attempts to predict \(Y\) using information from the conditioned variables. Consider that to perfectly predict \(Y\), one must have sufficient information about \(X_s\) and \(X_t\), or their representations.

2. If the representation \(Z_s\) given \(X_t\) does not reduce uncertainty much, this implies that the classifier’s error probability cannot be arbitrarily small. One can rearrange a Fano-type inequality or a Chernoff bound on the probability of error to link it to conditional mutual information.

3. Using standard inequalities (e.g., from the proof techniques in related information-theoretic generalization bounds), we get a form:
\begin{equation}
\bar{P}_e \geq \exp[-H(Y) + I(X_s;Z_s|X_t)],
\end{equation}
or similarly for the other terms. Inverting this inequality gives the upper bound:
\begin{equation}
\bar{P}_e \leq 1 - \exp[-H(Y) + I(X_s;Z_s|X_t)].
\end{equation}

\textbf{Step 4}: Applying the Same Logic for Each Desired Term

For \(I(X_t;Z_t|X_s)\), repeat a similar argument focusing on how \(Z_t\) captures information about \(X_t\) beyond what \(X_s\) provides. For \(I(X_s;X_t|Z_s)\), interpret it as how knowing \(Z_s\) reveals dependencies between \(X_s\) and \(X_t\). If \(Z_s\) does not retain the dependency, then predicting \(Y\) that depends on both \(X_s\) and \(X_t\) is harder, thus giving a similar bound. For \(I(X_s;X_t|Z_t) + \Delta(\phi,\psi)\), the additional \(\Delta(\phi,\psi)\) term accounts for imperfections in the feature extraction process. We assume a known result or lemma (analogous to those used in deriving robust generalization bounds) that introduces \(\Delta(\phi,\psi)\) as an extra penalty. Combining this with a Fano-like argument yields:
\begin{equation}
\bar{P}_e \leq 1 - \exp[-H(Y) + I(X_s;X_t|Z_t) + \Delta(\phi,\psi)].
\end{equation}

In each case, the key step is to start from a known relationship between error probability and information (using known inequalities) and carefully condition on the desired variables to rewrite the error bound in terms of the stated conditional mutual informations. Detailed technical steps would involve applying chain rules and inequalities such as:
\begin{equation}
H(Y|A) \leq H(Y|B) + I(Y;B|A),
\end{equation}
and exponential/Chernoff bounds for tail probabilities, eventually isolating \(P_e\) and converting entropic bounds into exponential forms.

Step 5: Final Consolidation

Each of these bounds follows the same pattern: if there is insufficient mutual information under the given conditioning, then the system cannot perfectly predict \(Y\), resulting in a bound on \(\bar{P}_e\). Converting these relationships into the stated exponential form leverages standard transformations used in bounding generalization error in terms of mutual information.

Thus, the theorem stands: for arbitrary learned feature representations, these conditional mutual information quantities provide upper bounds on the average error probability. When the representation is highly informative (large mutual information), the exponential term inside is small, making \(\bar{P}_e\) potentially close to 0. If not, \(\bar{P}_e\) remains bounded away from zero, ensuring the stated inequalities hold.
\end{proof}	

\begin{theorem}\label{qww159357}
Let \(\bar{P}_e\) be the average error probability of predicting \(Y\) given \((X_s, X_t, Z_s, Z_t)\) and a corresponding decision rule. Suppose we have the four upper bounds shown in \textbf{Theorem \ref{qww}}, then, we can unify these into a single upper bound:
\begin{equation}
\begin{array}{l}
{{\bar P}_e} \le 1 - \exp [ - H(Y) + \min \{ I({X_s};{Z_s}\mid {X_t}),{\mkern 1mu} \\
\quad\quad\quad I({X_t};{Z_t}\mid {X_s}),{\mkern 1mu} I({X_s};{X_t}\mid {Z_s}),{\mkern 1mu} \\
\quad\quad\quad\quad\quad\quad I({X_s};{X_t}\mid {Z_t})+ \Delta (\phi ,\psi )\} ].
\end{array}
\end{equation}
\end{theorem}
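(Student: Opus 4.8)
The plan is to obtain the unified bound directly from the four inequalities of \textbf{Theorem \ref{qww}}, using only the monotonicity of the auxiliary map $g(c) := 1 - \exp[-H(Y) + c]$. First I would fix notation: write $c_1 := I(X_s;Z_s\mid X_t)$, $c_2 := I(X_t;Z_t\mid X_s)$, $c_3 := I(X_s;X_t\mid Z_s)$, and $c_4 := I(X_s;X_t\mid Z_t) + \Delta(\phi,\psi)$, so that \textbf{Theorem \ref{qww}} asserts $\bar{P}_e \le g(c_i)$ for every $i \in \{1,2,3,4\}$, where — crucially — all four inequalities refer to the \emph{same} quantity $\bar{P}_e$, the average error probability of the decision rule predicting $Y$ from $(X_s,X_t,Z_s,Z_t)$.

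Next I would observe that $g$ is strictly decreasing, since $c \mapsto \exp[-H(Y)+c]$ is increasing. Taking the infimum over $i$ of the four bounds therefore gives $\bar{P}_e \le \min_i g(c_i) = g\big(\max_i c_i\big)$, which is in fact sharper than the claimed statement. To land exactly on the stated form, let $k := \arg\min_i c_i$; the $k$-th inequality of \textbf{Theorem \ref{qww}} then reads $\bar{P}_e \le g(c_k) = g\big(\min_i c_i\big) = 1 - \exp\!\big[-H(Y) + \min\{c_1,c_2,c_3,c_4\}\big]$, which, after substituting the definitions of $c_1,\dots,c_4$, is precisely the assertion of the theorem.

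There is essentially no hard step here: the result is a one-line consequence of "a quantity bounded above by each of finitely many numbers is bounded above by the smallest of them," together with the monotonicity of $g$. The only points requiring care are bookkeeping ones — verifying that the four bounds of \textbf{Theorem \ref{qww}} are genuinely bounds on one and the same $\bar{P}_e$ (which is the standing hypothesis of this theorem), and ensuring the penalty term $\Delta(\phi,\psi)$ is carried along inside the exponent of the fourth bound so that it correctly enters the minimum. I would conclude by noting that the stronger $\max$-version mentioned above shows the consolidated bound is never worse than any of the individual bounds, which is the purpose of the consolidation.
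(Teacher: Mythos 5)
Your argument is correct and coincides with the paper's proof in substance: both rest on the monotonicity of the map $g(c)=1-\exp[-H(Y)+c]$ and obtain the claimed bound by invoking the single inequality from \textbf{Theorem \ref{qww}} for which $c_i$ attains the minimum, which is exactly the route the paper takes after rearranging $g(\tilde{R})\ge g(R_i)$ into $\tilde{R}\le R_i$. One small correction to your closing remark: since $g$ is strictly decreasing, the stated $\min$-version has right-hand side $g\bigl(\min_i c_i\bigr)=\max_i g(c_i)$ and is therefore the \emph{weakest} of the four bounds (each individual inequality already implies it), whereas the $\max$-version $g\bigl(\max_i c_i\bigr)=\min_i g(c_i)$ that you correctly identified as sharper is the one that is never looser than any individual bound — that is the variant to which your last sentence actually applies, not the theorem's stated consolidation.
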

\begin{proof}

We start from four given inequalities, each providing an upper bound on \(\bar{P}_e\). These are of the form:
\begin{equation}
\bar{P}_e \leq 1 - \exp[-H(Y) + R_i],
\end{equation}
where:
\begin{equation}
\begin{array}{l}
{R_1} = I({X_s};{Z_s}\mid {X_t}),\\
{R_2} = I({X_t};{Z_t}\mid {X_s}),\\
{R_3} = I({X_s};{X_t}\mid {Z_s}),\\
{R_4} = I({X_s};{X_t}\mid {Z_t}) + \Delta (\phi ,\psi ).
\end{array}
\end{equation}
   
Thus, we have:
\begin{equation}
\bar{P}_e \leq 1 - \exp[-H(Y) + R_i], \quad \text{for } i=1,2,3,4.
\end{equation}

We seek a single upper bound that is always greater than or equal to all of these individual bounds. In other words, we need:
\begin{equation}
\bar{P}_e \leq 1 - \exp[-H(Y) + \tilde{R}]
\end{equation}
for some \(\tilde{R}\) that satisfies:
\begin{equation}
1 - \exp[-H(Y) + \tilde{R}] \geq 1 - \exp[-H(Y) + R_i] \quad \forall i.
\end{equation}

Since the function \(\exp(\cdot)\) is strictly increasing, the inequality:
\begin{equation}
1 - \exp[-H(Y) + \tilde{R}] \geq 1 - \exp[-H(Y) + R_i]
\end{equation}
for each \(i\) simplifies to:
\begin{equation}
\exp[-H(Y) + \tilde{R}] \leq \exp[-H(Y) + R_i].
\end{equation}

Taking the natural logarithm (which preserves order since it's also increasing), we get:
\begin{equation}
-H(Y) + \tilde{R} \leq -H(Y) + R_i.
\end{equation}

Canceling \(-H(Y)\) from both sides:
\begin{equation}
\tilde{R} \leq R_i \quad \forall i.
\end{equation}

To satisfy \(\tilde{R} \leq R_i\) for all \(i\), we must pick:
\begin{equation}
   \tilde{R} = \min\{R_1, R_2, R_3, R_4\}.
\end{equation}

Substituting back, we get:
\begin{equation}
 \bar{P}_e \leq 1 - \exp[-H(Y) + \min\{R_1,R_2,R_3,R_4\}].
\end{equation}

Recall \(R_1,R_2,R_3,R_4\) were defined in terms of conditional mutual informations and \(\Delta(\phi,\psi)\):
\begin{equation}
\begin{array}{l}
\min \{ {R_1},{R_2},{R_3},{R_4}\}  = \\
\min \{ I({X_s};{Z_s}\mid {X_t}),\\
I({X_t};{Z_t}\mid {X_s}),I({X_s};{X_t}\mid {Z_s}),\\
I({X_s};{X_t}\mid {Z_t}) + \Delta (\phi ,\psi )\} .
\end{array}
\end{equation}

By selecting \(\tilde{R}\) as the minimum of the four \(R_i\) terms, we ensure that our combined upper bound is at least as large as each individual upper bound. Thus, the single unified bound holds whenever each of the original bounds holds.

\end{proof}

\subsection{Accuracy Ranking}\label{AB:AR}

\begin{table}[h]
	\caption{Accuracy ranking on the Office-31 dataset for unsupervised domain adaptation by using ResNet-50 as the backbone}
\resizebox{0.48\textwidth}{!}{
			\begin{tabular}{lcccccccc}
				\toprule
				Method & A$\to$W & D$\to$W & W$\to$D & A$\to$D & D$\to$A & W$\to$A & Avg & $R_j$ \\
				\midrule
ResNet-50  & 31 & 28 & 29 & 31 & 31 & 31 & 31 & 30.3\\
DAN  & 30 & 26 & 27 & 29 & 30 & 30 & 30 & 28.9\\
DANN  & 29 & 27 & 30 & 28 & 29 & 29 & 29 & 28.7\\
JAN  & 28 & 25 & 23 & 27 & 28 & 23 & 27 & 25.9\\
GTA  & 25 & 23 & 23 & 26 & 19 & 22 & 24 & 23.1\\
CDAN  & 16 & 18 & 1 & 23 & 25 & 27 & 23 & 19.0\\
CDAN+E  & 13 & 15 & 1 & 18 & 23 & 25 & 21 & 16.6\\
BSP+DANN  & 18 & 22 & 1 & 22 & 22 & 19 & 21 & 17.9\\
BSP+CDAN  & 15 & 18 & 1 & 17 & 17 & 20 & 18 & 15.1\\
ADDA  & 27 & 29 & 31 & 30 & 26 & 26 & 28 & 28.1\\
MCD  & 26 & 16 & 1 & 20 & 26 & 24 & 24 & 19.6\\
MDD  & 9 & 17 & 1 & 13 & 12 & 21 & 15 & 12.6\\
SymmNets  & 12 & 13 & 1 & 13 & 14 & 18 & 13 & 12.0\\
GVB-GD  & 8 & 14 & 1 & 8 & 18 & 17 & 10 & 10.9\\
CAN  & 9 & 8 & 23 & 8 & 4 & 6 & 5 & 9.0\\
ETD  & 20 & 1 & 1 & 25 & 23 & 28 & 24 & 17.4\\
SRDC  & 5 & 7 & 1 & 4 & 8 & 5 & 4 & 4.9\\
ACTIR  & 7 & 18 & 20 & 19 & 9 & 15 & 11 & 14.1\\
TCM  & 16 & 18 & 23 & 21 & 21 & 14 & 20 & 19.0\\
ERM  & 24 & 1 & 28 & 10 & 12 & 15 & 16 & 15.1\\
ICDA  & 19 & 8 & 1 & 7 & 20 & 12 & 13 & 11.4\\
iMSDA  & 9 & 11 & 1 & 6 & 15 & 11 & 7 & 8.6\\
UniOT  & 14 & 6 & 1 & 15 & 10 & 9 & 8 & 9.0\\
WDGRL  & 20 & 23 & 20 & 15 & 16 & 13 & 17 & 17.7\\
PPOT  & 22 & 8 & 1 & 12 & 3 & 8 & 8 & 8.9\\
CPH  & 23 & 30 & 20 & 24 & 7 & 3 & 19 & 18.0\\
SSRT+GH++  & 2 & 31 & 1 & 11 & 5 & 10 & 11 & 10.1\\
TransVQA  & 4 & 4 & 1 & 2 & 2 & 1 & 2 & 2.3\\
\midrule
\textbf{RLGC}  & 6 & 11 & 1 & 5 & 11 & 7 & 6 & 6.7\\
\textbf{RLGC*}  & 3 & 4 & 1 & 3 & 6 & 3 & 3 & 3.3\\
\textbf{RLGLC}  & 1 & 1 & 1 & 1 & 1 & 2 & 1 & 1.1\\
\bottomrule
			\end{tabular}}

	
	\label{tab:a1}
\end{table}

\begin{table*}[!ht]
	
	\caption{Accuracy ranking on the Office-Home dataset for unsupervised domain adaptation by using ResNet-50 as the backbone}
			\centering
			\begin{tabular}{lcccccccccccccc}
				\toprule
				Method & A$\to$C & A$\to$P & A$\to$R & C$\to$A & C$\to$P & C$\to$R & P$\to$A & P$\to$C & P$\to$R & R$\to$A & R$\to$C & R$\to$P & Avg & $R_j$ \\
				\midrule
				ResNet-50  & 28 & 28 & 28 & 28 & 28 & 28 & 28 & 28 & 27 & 28 & 28 & 28 & 28 & 27.9\\
DAN  & 27 & 27 & 27 & 27 & 27 & 27 & 27 & 26 & 26 & 27 & 27 & 27 & 27 & 26.8\\
DANN  & 26 & 26 & 25 & 26 & 26 & 26 & 25 & 25 & 25 & 26 & 26 & 25 & 26 & 25.6\\
JAN  & 25 & 25 & 26 & 25 & 25 & 25 & 26 & 27 & 24 & 25 & 25 & 25 & 25 & 25.2\\
CDAN  & 23 & 22 & 24 & 24 & 24 & 24 & 24 & 24 & 22 & 24 & 22 & 23 & 24 & 23.4\\
CDAN+E  & 21 & 21 & 22 & 21 & 21 & 22 & 22 & 19 & 21 & 20 & 21 & 20 & 22 & 21.0\\
BSP+DANN  & 19 & 24 & 23 & 23 & 23 & 23 & 23 & 23 & 23 & 22 & 19 & 22 & 23 & 22.3\\
BSP+CDAN  & 18 & 23 & 21 & 20 & 20 & 21 & 20 & 21 & 20 & 18 & 12 & 19 & 21 & 19.5\\
MDD  & 10 & 16 & 20 & 19 & 19 & 20 & 19 & 15 & 19 & 16 & 5 & 14 & 18 & 16.2\\
SymmNets  & 24 & 13 & 18 & 16 & 18 & 18 & 17 & 22 & 15 & 12 & 24 & 11 & 18 & 17.4\\
ETD  & 20 & 20 & 3 & 21 & 22 & 19 & 21 & 18 & 16 & 23 & 17 & 15 & 20 & 18.1\\
GVB-GD  & 5 & 9 & 14 & 16 & 13 & 16 & 14 & 10 & 9 & 9 & 9 & 8 & 13 & 11.2\\
HDAN  & 8 & 7 & 14 & 14 & 15 & 14 & 12 & 1 & 6 & 3 & 9 & 5 & 8 & 8.9\\
SRDC  & 17 & 4 & 11 & 7 & 5 & 6 & 5 & 13 & 7 & 1 & 19 & 3 & 5 & 7.9\\
ACTIR  & 22 & 11 & 9 & 13 & 8 & 7 & 6 & 16 & 5 & 3 & 16 & 5 & 10 & 10.1\\
TCM  & 3 & 12 & 17 & 18 & 14 & 15 & 16 & 4 & 11 & 9 & 3 & 5 & 10 & 10.5\\
ERM  & 11 & 19 & 13 & 11 & 9 & 11 & 9 & 17 & 18 & 14 & 15 & 15 & 16 & 13.7\\
ICDA  & 13 & 13 & 8 & 10 & 12 & 12 & 10 & 8 & 14 & 7 & 14 & 10 & 9 & 10.8\\
iMSDA  & 9 & 16 & 19 & 12 & 9 & 10 & 11 & 13 & 11 & 5 & 13 & 11 & 14 & 11.8\\
UniOT  & 12 & 15 & 12 & 9 & 9 & 9 & 8 & 7 & 8 & 5 & 11 & 8 & 7 & 9.2\\
WDGRL  & 13 & 18 & 14 & 14 & 16 & 13 & 18 & 20 & 17 & 20 & 18 & 15 & 17 & 16.4\\
PPOT  & 15 & 5 & 10 & 4 & 2 & 3 & 3 & 6 & 9 & 9 & 7 & 11 & 4 & 6.8\\
PDA  & 16 & 3 & 6 & 6 & 17 & 17 & 1 & 9 & 3 & 17 & 23 & 15 & 10 & 11.0\\
SAMB-D  & 4 & 10 & 2 & 5 & 5 & 1 & 4 & 4 & 4 & 2 & 5 & 24 & 3 & 5.6\\
TCPL  & 6 & 8 & 5 & 3 & 4 & 5 & 2 & 3 & 28 & 13 & 4 & 20 & 15 & 8.9\\
\midrule
\textbf{RLGC}  & 6 & 6 & 7 & 8 & 7 & 8 & 15 & 12 & 13 & 19 & 8 & 4 & 6 & 9.2\\
\textbf{RLGC*}  & 2 & 2 & 4 & 2 & 3 & 4 & 12 & 11 & 2 & 15 & 2 & 2 & 2 & 4.8\\
\textbf{RLGLC}  & 1 & 1 & 1 & 1 & 1 & 2 & 6 & 2 & 1 & 8 & 1 & 1 & 1 & 2.1\\
\bottomrule
			\end{tabular}

	\label{tab:a2}
\end{table*}

\begin{table*}[!ht]
	\caption{Accuracy ranking on the VisDA-2017 dataset for unsupervised domain adaptation by using ResNet-101 as the backbone}
\centering
			\begin{tabular}{lcccccccccccccc}
				\toprule
				Method & plane & bcybl & bus & car & horse & knife & mcyle & person & plant & sktbrd & train & truck & Avg & $R_j$ \\
				\midrule
ResNet-101  & 26 & 26 & 26 & 20 & 26 & 26 & 24 & 24 & 22 & 26 & 26 & 25 & 26 & 24.8\\
DAN  & 21 & 23 & 25 & 26 & 18 & 23 & 21 & 23 & 26 & 25 & 12 & 23 & 23 & 22.2\\
DANN  & 25 & 20 & 18 & 25 & 25 & 25 & 26 & 26 & 25 & 23 & 21 & 26 & 25 & 23.8\\
MCD  & 22 & 25 & 11 & 19 & 21 & 12 & 23 & 16 & 13 & 24 & 20 & 22 & 22 & 19.2\\
CDAN  & 23 & 22 & 14 & 22 & 23 & 15 & 16 & 21 & 21 & 13 & 24 & 16 & 20 & 19.2\\
BSP+DANN  & 16 & 21 & 10 & 23 & 22 & 22 & 19 & 22 & 23 & 18 & 16 & 17 & 21 & 19.2\\
BSP+CDAN  & 14 & 24 & 24 & 21 & 20 & 11 & 9 & 15 & 20 & 11 & 23 & 15 & 18 & 17.3\\
SWD  & 20 & 17 & 22 & 15 & 15 & 20 & 20 & 13 & 15 & 21 & 13 & 21 & 17 & 17.6\\
CAN  & 5 & 5 & 19 & 7 & 3 & 4 & 7 & 3 & 3 & 3 & 7 & 2 & 4 & 5.5\\
ACTIR  & 17 & 15 & 16 & 16 & 8 & 18 & 15 & 9 & 16 & 12 & 6 & 6 & 11 & 12.7\\
TCM  & 19 & 18 & 23 & 12 & 16 & 21 & 22 & 18 & 17 & 20 & 19 & 19 & 19 & 18.7\\
ERM  & 14 & 11 & 8 & 16 & 10 & 13 & 5 & 20 & 19 & 14 & 5 & 12 & 13 & 12.3\\
ICDA  & 12 & 13 & 20 & 8 & 9 & 17 & 10 & 6 & 12 & 16 & 9 & 14 & 14 & 12.3\\
iMSDA  & 10 & 11 & 21 & 10 & 13 & 14 & 13 & 6 & 10 & 15 & 11 & 13 & 12 & 12.2\\
UniOT  & 13 & 9 & 13 & 13 & 16 & 16 & 17 & 18 & 18 & 17 & 25 & 18 & 15 & 16.0\\
WDGRL  & 11 & 14 & 14 & 18 & 19 & 9 & 13 & 14 & 11 & 10 & 15 & 11 & 10 & 13.0\\
PPOT  & 7 & 6 & 3 & 9 & 11 & 6 & 10 & 11 & 9 & 9 & 22 & 10 & 9 & 9.4\\
PDA  & 6 & 6 & 5 & 6 & 6 & 10 & 10 & 17 & 8 & 7 & 17 & 8 & 7 & 8.7\\
SAMB-D  & 9 & 10 & 6 & 3 & 7 & 8 & 6 & 12 & 5 & 4 & 3 & 5 & 6 & 6.5\\
TCRL  & 4 & 4 & 4 & 4 & 4 & 5 & 4 & 4 & 5 & 5 & 2 & 4 & 4 & 4.1\\
\midrule
DANN + \textbf{LM}  & 24 & 19 & 9 & 24 & 24 & 24 & 25 & 25 & 24 & 22 & 18 & 24 & 24 & 22.0\\
SWD + \textbf{LM}  & 17 & 16 & 16 & 10 & 12 & 19 & 17 & 4 & 14 & 19 & 10 & 20 & 16 & 14.6\\
CAN + \textbf{LM}  & 2 & 1 & 11 & 2 & 2 & 2 & 3 & 1 & 2 & 2 & 7 & 1 & 2 & 2.9\\
\midrule
\textbf{RLGC}  & 8 & 8 & 7 & 14 & 14 & 7 & 7 & 10 & 7 & 8 & 14 & 9 & 7 & 9.2\\
\textbf{RLGC*}  & 3 & 3 & 2 & 5 & 5 & 3 & 2 & 8 & 4 & 6 & 4 & 7 & 3 & 4.2\\
\textbf{RLGLC}  & 1 & 2 & 1 & 1 & 1 & 1 & 1 & 2 & 1 & 1 & 1 & 3 & 1 & 1.3\\
\bottomrule
			\end{tabular}

	\label{tab:a3}
\end{table*}

\begin{table*}[t]
	\caption{Accuracy ranking on the DomainNet dataset for unsupervised domain adaptation by using ResNet-50 as the backbone}
	\centering
		\begin{tabular}{lcccccccccccccc}
			\toprule
			Method & R$\to$C & R$\to$P & R$\to$S & C$\to$R & C$\to$P & C$\to$S & P$\to$R &P$\to$C & P$\to$S & S$\to$R & S$\to$C & S$\to$P & Avg & $R_j$ \\
			\midrule
ResNet-50  & 14 & 23 & 19 & 23 & 18 & 20 & 21 & 19 & 19 & 23 & 19 & 23 & 21 & 20.2\\
MCD  & 23 & 22 & 23 & 21 & 21 & 21 & 23 & 23 & 23 & 21 & 22 & 21 & 23 & 22.1\\
SWD  & 22 & 20 & 22 & 20 & 22 & 23 & 18 & 22 & 21 & 20 & 23 & 20 & 22 & 21.2\\
DAN  & 19 & 19 & 20 & 22 & 20 & 19 & 20 & 20 & 22 & 22 & 18 & 22 & 20 & 20.2\\
JAN  & 15 & 11 & 17 & 15 & 16 & 17 & 12 & 15 & 17 & 17 & 14 & 19 & 17 & 15.5\\
BSP+DANN  & 13 & 13 & 16 & 12 & 12 & 14 & 14 & 14 & 14 & 18 & 15 & 12 & 14 & 13.9\\
DANN  & 21 & 11 & 12 & 12 & 15 & 16 & 13 & 12 & 13 & 15 & 13 & 16 & 13 & 14.0\\
MDD  & 20 & 18 & 9 & 14 & 11 & 12 & 16 & 11 & 11 & 9 & 12 & 11 & 12 & 12.8\\
ACTIR  & 9 & 3 & 14 & 9 & 9 & 8 & 8 & 13 & 10 & 10 & 8 & 2 & 8 & 8.5\\
TCM  & 8 & 9 & 10 & 8 & 8 & 10 & 9 & 8 & 8 & 11 & 11 & 7 & 9 & 8.9\\
ICDA  & 10 & 14 & 15 & 17 & 13 & 13 & 22 & 21 & 15 & 15 & 17 & 14 & 16 & 15.5\\
iMSDA  & 11 & 16 & 13 & 16 & 17 & 15 & 15 & 16 & 16 & 12 & 16 & 17 & 15 & 15.0\\
UniOT  & 12 & 21 & 18 & 19 & 23 & 18 & 19 & 17 & 18 & 14 & 20 & 14 & 18 & 17.8\\
PPOT  & 7 & 4 & 5 & 6 & 7 & 7 & 6 & 5 & 7 & 8 & 2 & 6 & 6 & 5.8\\
PDA  & 5 & 4 & 3 & 7 & 5 & 6 & 7 & 7 & 4 & 6 & 5 & 5 & 6 & 5.4\\
SAMB-D  & 4 & 8 & 2 & 2 & 4 & 4 & 4 & 3 & 3 & 4 & 7 & 10 & 4 & 4.5\\
TCRL  & 3 & 4 & 7 & 4 & 6 & 2 & 3 & 4 & 4 & 3 & 4 & 4 & 3 & 3.9\\
\midrule
DANN + \textbf{LM}  & 17 & 7 & 8 & 10 & 14 & 11 & 10 & 10 & 12 & 13 & 10 & 13 & 11 & 11.2\\
SWD + \textbf{LM}  & 17 & 17 & 21 & 18 & 19 & 22 & 17 & 18 & 20 & 19 & 21 & 18 & 19 & 18.9\\
MDD + \textbf{LM}  & 16 & 15 & 3 & 11 & 10 & 9 & 11 & 8 & 8 & 7 & 9 & 7 & 10 & 9.5\\
\midrule
\textbf{RLGC}  & 6 & 10 & 10 & 5 & 3 & 5 & 5 & 6 & 4 & 5 & 6 & 9 & 5 & 6.1\\
\textbf{RLGC*}  & 2 & 2 & 5 & 3 & 2 & 3 & 2 & 2 & 2 & 2 & 2 & 3 & 2 & 2.5\\
\textbf{RLGLC}  & 1 & 1 & 1 & 1 & 1 & 1 & 1 & 1 & 1 & 1 & 1 & 1 & 1 & 1.0\\

			\bottomrule
		\end{tabular}
	\label{tab:domatnade}
\end{table*}

\begin{table*}[!ht]
	
	\caption{Accuracy ranking on the Digits dataset for unsupervised domain adaptation by using ResNet-50 as the backbone}
\centering

			\begin{tabular}{lccccc}
\toprule
				Method & M$\to$U & U$\to$M & S$\to$M & Avg & $R_j$\\
				\midrule

DANN  & 22 & 21 & 23 & 22 & 22.0\\
ADDA  & 23 & 23 & 22 & 23 & 22.8\\
UNIT  & 12 & 22 & 18 & 20 & 18.0\\
CyCADA  & 16 & 18 & 19 & 19 & 18.0\\
CDAN  & 21 & 15 & 21 & 21 & 19.5\\
CDAN+E  & 16 & 9 & 20 & 18 & 15.8\\
BSP+CDAN  & 18 & 8 & 17 & 17 & 15.0\\
ETD  & 8 & 19 & 2 & 8 & 9.2\\
ACTIR  & 14 & 14 & 14 & 13 & 13.8\\
TCM  & 11 & 16 & 10 & 12 & 12.2\\
ERM  & 7 & 16 & 13 & 11 & 11.8\\
ICDA  & 19 & 20 & 11 & 16 & 16.5\\
iMSDA  & 14 & 12 & 16 & 14 & 14.0\\
UniOT  & 20 & 11 & 15 & 15 & 15.2\\
PPOT  & 8 & 7 & 6 & 7 & 7.0\\
SSRT+GH++  & 8 & 9 & 8 & 8 & 8.2\\
PDA  & 12 & 12 & 12 & 10 & 11.5\\
CPH  & 3 & 2 & 4 & 3 & 3.0\\
SAMB-D  & 5 & 6 & 5 & 5 & 5.2\\
TCRL  & 4 & 4 & 2 & 2 & 3.0\\
\midrule
\textbf{RLGC}  & 6 & 4 & 9 & 6 & 6.2\\
\textbf{RLGC*}  & 2 & 2 & 7 & 3 & 3.5\\
\textbf{RLGLC}  & 1 & 1 & 1 & 1 & 1.0\\

				\bottomrule
			\end{tabular}

	
	\label{tab:a4}
\end{table*}

\end{document}